\documentclass[11pt]{amsart}
\usepackage{amscd}
\usepackage[arrow,matrix]{xy}
\input xypic
\usepackage{graphicx, tikz}
\usepackage{hyperref}
\usepackage{soul}
\usepackage{comment}
\usepackage{amsmath}
\usepackage{amsmath, latexsym, amssymb}
\numberwithin{equation}{section}
\theoremstyle{plain}

\newtheorem{lemma}{Lemma}[section]
\newtheorem{proposition}[lemma]{Proposition}
\newtheorem{theorem}[lemma]{Theorem}

\theoremstyle{definition}
\newtheorem{definition}[lemma]{Definition}
\newtheorem*{definition*}{Definition}
\newtheorem{remark}[lemma]{Remark}
\newtheorem{example}[lemma]{Example}

\usepackage{color}
\usepackage{cancel}

\definecolor{brown}{RGB}{150,100,0}

\definecolor{purple}{RGB}{150,0,100}
\definecolor{grey}{RGB}{128,128,128}


\newcommand{\R}{{\mathbb R}}

\newcommand{\E}{{\mathbb E}}

\newcommand{\N}{{\mathbb N}}

\newcommand{\Aa}{{\mathcal A}}
\newcommand{\Bb}{{\mathcal B}}

\newcommand{\Ee}{{\mathcal E}}

\newcommand{\Hh}{{\mathcal H}}

\newcommand{\Ll}{{\mathcal L}}    
\newcommand{\Mm}{{\mathcal M}}    

\newcommand{\Pp}{{\mathcal P}}

\newcommand{\Ss}{{\mathcal S}}

\newcommand{\Xx}{{\mathcal X}}
\newcommand{\Yy}{{\mathcal Y}}
\newcommand{\Zz}{{\mathcal Z}}

\newcommand{\Om}{{\Omega}}

\newcommand{\eps}{{\varepsilon}}

\newcommand{\vol}{{\rm vol}}

\newcommand{\Meas}{\mathbf{Meas}\,}

\newcommand{\Probm}{\mathbf{Probm}\,}

\renewcommand{\b}{{\mathfrak b}}

\newcommand{\pb}{{\mathbf p}}

\newcommand{\grad}{{\nabla}}

\newcommand{\la}{\langle}
\newcommand{\ra}{\rangle}


\newcommand{\INTO}{\hookrightarrow}

\newcommand{\be}{\begin{equation}}
	\newcommand{\bel}[1]{\begin{equation}\label{#1}}
		\newcommand{\qe}{\end{equation}}
	\newcommand{\ee}{\end{equation}}
\newcommand{\eeq}{\end{equation}}
\newcommand{\ba}{\begin{eqnarray}}
\newcommand{\ea}{\end{eqnarray}}

\newcommand{\Id}{ {\rm Id}}

\mathchardef\mhyp="2D

\newcommand{\Ho}{{\mathrm{H}\kern 0.3pt}}
\newcommand{\Zl}{{\mathrm{Z}\kern 0.2pt}}
\newcommand{\Bd}{{\mathrm{B}\kern 0.2pt}}

\newcommand{\Hcal}{\mathcal{H}}
\newcommand{\mapto}{\rightarrow}

\newcommand{\equivalent}{\ensuremath{\Longleftrightarrow}}
\newcommand{\trace}{\mathrm{trace}}

\newcommand{\Xbf}{\mathbf{X}}

\newcommand{\Xcal}{\mathcal{X}}

\newcommand{\Lcal}{\mathcal{L}}

\newcommand{\Deltabf}{\mathbf{\Delta}}

\newcommand{\Mrm}{\mathrm{M}}
\newcommand{\logdet}{\mathrm{logdet}}

\newcommand{\approach}{\ensuremath{\rightarrow}}

\newcommand{\Sym}{\mathrm{Sym}}
\newcommand{\myone}{\mathbf{1}}
\newcommand{\Nbb}{\mathbb{N}}
\newcommand{\ai}{\mathrm{ai}}
\newcommand{\bw}{\mathrm{bw}}
\newcommand{\logE}{\mathrm{logE}}
\newcommand{\myast}{\circledast}
\newcommand{\HS}{\mathrm{HS}}
\newcommand{\logHS}{\mathrm{logHS}}
\newcommand{\eHS}{\mathrm{HS_X}}







\begin{document}

\title[Categorical  and geometric methods   in statistical learning]{Categorical  and geometric methods  in statistical, manifold,  and machine learning}
\author{H\^ong V\^an L\^e}
	\address{Institute of Mathematics of the Czech Academy of Sciences, Zitna 25, 11567 Praha 1,  $\&$   Charles University, Faculty of Mathematics and Physics
		Ke Karlovu 3, 121 16 Praha 2, Czechia  }
		 \email{hvle@math.cas.cz}
\author{H\`a Quang Minh}
\address{RIKEN Center for Advanced Intelligence Project,
	Tokyo, Japan}
	\email{minh.haquang@riken.jp}
\author{Frederic Protin}
\address{Torus Action SAS, 3 Avenue Didier Daurat,
	Toulouse, 31400, France    $\&$  Institut de Mathematiques de Toulouse, Universit\'e Toulouse 3, 18 Route de Narbonne,Toulouse, 31400  France}
\email{fredprotin@yahoo.fr}	
\author{Wilderich  Tuschmann}
\address{Faculty of Mathematics, Karlsruhe  Institute  of Technology, Englerstr. 2, Karlsruhe, D-76131, Germany}
\email{tuschmann@kit.edu}

%
%
\maketitle
\begin{abstract}We present and  discuss  applications of the  category of probabilistic morphisms, initially developed in \cite{Le2023}, as well as some  geometric methods to  several  classes of problems in statistical, machine  and manifold  learning  which shall  be, along with many other topics, considered  in depth in the forthcoming book  \cite{LMPT2024}.
\end{abstract}

\section{Introduction}\label{sec:intro}
Let us start with  a   general concept  of  learning  and  machine  learning.
{\it Learning}   is a process  of gaining  new knowledge in the sense of obtaining new  correlations between  features of  observables by the examination  of  empirical data associated with a given  {\it finite  set} of  observables. When these characterizations  can be tested and validated through the examination of new associated data and  their  accuracy, and expressive and predictive power does improve by feeding such data, we speak of {\it successful learning}. 
Nowadays,  {\it machine learning} refers to any learning process in this sense which can be implemented
into and performed by a computing device. 
Finally, the theory or science of {\it Machine Learning} as a whole denotes the study, creation and application
of machine learning processes and techniques.

The mathematical theory for learning from data, using  probability theory and  mathematical statistics,  is called    statistical learning theory.  In  statistical learning theory, 
to   learn   from data, we  need  to perform the following   steps.

\underline{Step 1}.  Construct    a mathematical  model of learning, using probability theory and  mathematical statistics  to  model incomplete   information of the observed   data.   More  precisely,   given  a sample  space  $\Xx$,   we need  to  construct a    hypothesis  space $\Hh$  of  possible decisions  and     
model  our incomplete  information  of     data   $S_n \in \Xx^n$,  our uncertainty of   a  correct choice  of  a   decision  $h \in \Hh$, given    data $S_n$, by  using   probability  theory  and  mathematical statistics.

\underline{Step 2}.  Find  a learning algorithm  $A$, i.e.  a  map  $A: \cup_{n =1}^\infty  \Xx^n \to  \Hh$.

\underline{Step 3}.  Estimate  the  error  of  a learning  algorithm $A (S_n) \in  \Hh$,    to make sure  that the  learning   algorithm  is successful.

In Step 1 we consider  different types of learning problems, in particular types of training data, i.e., the  data available to the learner before making a decision/prediction. A  main type of statistical learning problems is  supervised learning where
training data are labeled, see Definition \ref{def:gen}, Example \ref{ex:gensuper}, which  we shall consider in   greater detail in  Section \ref{sec:probgen}, using the categorical concept  of Markov  kernels.

There  are  two  main (frequentist  and Bayesian) approaches    to Step 1, which    yield  different probabilistic modelings of    training data and their  stochastic relations and hence  to Step 2, see  \cite{Vapnik2000}, \cite{GM2007}, \cite{HTF2008}, \cite{Theodoridis2015}, \cite{MRT2018}.  In both  approaches of probabilistic modeling,   the concept  of  a Markov kernel  is  of central importance.
In our  paper we   discuss  applications  of  probabilistic  morphisms, which are categorical   realizations of    Markov kernels, to  statistical learning theory.   The category  of probabilistic  morphisms   was  proposed   independently by  Lawvere  in 1962 \cite{Lawvere1962}, motivated by  discrete stochastic processes,  and  by   Chentsov in 1972  \cite{Chentsov1972} as {\it the category of statistical    decisions} \cite[\S 5, p. 65]{Chentsov1972},  which is a natural continuation of    Chentsov' work  in 1965 \cite{Chentsov1965}.  
Methods  of probabilistic morphisms were developed  for   generative models  of supervised learning  by L\^e in \cite{Le2023}, \cite{Le2025} and   applied  by   us  to problems of   statistical learning  and machine  learning  in \cite{LMPT2024}.  We also  present  geometric  methods  to several  problems in statistical learning theory,   which   are complementary to    the categorical approach  in   our  book \cite{LMPT2024}.

Our  paper  is organized as follows.  In Section \ref{sec:probgen}, we first recall  the notion   of the category of probabilistic  morphisms. Basic examples  of probabilistic morphisms are   measurable   mappings, Markov kernels and regular  conditional probability measures that are of  central importance in  mathematical statistics  and  statistical  learning  theory. Using the concept of probabilistic morphisms  we present the notion of a generative model of supervised learning. Using  L\^e's characterization  of regular conditional  probability measures, based on the concept of a graph of  a probabilistic morphism (Theorem \ref{thm:marginal}),  we    give  an example  of  a correct  loss function of a  generative model  of supervised  learning (Examples \ref{ex:gensuper}, \ref{ex:losscatkernel}). In  the last part  of this section we outline  the  proof of  L\^e's Theorem on  the existence 
of  over-parameterized  supervised  learnable  models  (Proposition \ref{prop:learnable}). The learnability of this  model is proved  using a  version of  Vapnik-Stephanyuk's method of solving  stochastic  ill-posed  problems (Proposition \ref{thm:61Le2023}).  In Section \ref{sec:geomkernel}, we discuss the problem of generalizing the Gaussian kernel defined
on Euclidean spaces to metric spaces, in particular to Riemannian manifolds. After discussing the difficulties of such a generalization to general Riemannian manifolds, we present positive definite kernels defined using the Log-Euclidean metric on the manifold of symmetric positive definite (SPD) matrices and its infinite-dimensional generalization, the Log-Hilbert-Schmidt metric on the set of positive definite unitized Hilbert-Schmidt operators on a Hilbert space. We then discuss a special setting for the Log-Hilbert-Schmidt metric, namely,
 that of reproducing kernel Hilbert space (RKHS) covariance operators, where all quantities of interest admit closed form expressions via finite-dimensional Gram matrices and thus can be applied in practical applications.
In  Section \ref{sec:manifoldlearning},  we discuss  several  important topics  of manifold learning, which lies at the intersection of geometry  and   statistical learning.  In the final section, we summarize the main results of this paper  and discuss  some related problems.

\section{Probabilistic  morphisms and  generative models of supervised learning}\label{sec:probgen}
\subsection{Notation and conventions}\label{subs:notation}

- For a measurable space $\Xx$,  we  denote  by $\Sigma_\Xx$  the $\sigma$-algebra of $\Xx$,  and by $\Ss(\Yy), \Mm (\Yy), \Pp (\Yy)$ the  spaces  of  all signed finite  measures,   (nonnegative) finite  measures,  probability measures, respectively,   on $\Xx$.  If $\Xx$ is a  topological space,   we always  consider   the Borel   $\sigma$-algebra   $\Bb (\Xx)$,  unless otherwise stated.

- For  a nonnegative  measure $\mu$ on  a measurable space $\Xx$,  
we denote  by $\Ll ^{p} (\Xx, \mu)$, or just $\Ll ^{p}  (\mu)$,  the  class  of  all $\mu$-measurable functions $f$
such that $|  f ^p|$ is  a $\mu$-integrable  function.
Denote by $L ^p (\mu)$  the quotient  space    of $\Ll ^p (\mu)$  with  respect to the  equivalence   relation  $ f \stackrel{\mu}{\sim} g$ if $ f =g $ $\mu$-a.e..  Following tradition,   we    use     the expression ``a function $f$ in  $L ^p (\mu)$", where one
should say ``a function $f$ in  $\Ll  ^p (\mu)$” when this does not lead to misunderstanding.

-  For  any    measurable  space $\Xx$,  we  denote by $\Sigma _w   $  the smallest $\sigma$-algebra  on $\Pp (\Xx)$ such that  for  any $A \in \Sigma _\Xx$  the function $I_{ 1_A}: \Pp (\Xx) \to \R, \mu \mapsto   \int 1_A  d\mu $ is  measurable.  Here $1_A$ is the  characteristic  function  of $A$.  In our paper,  we always  consider  $\Pp (\Xx)$  as  a measurable   space  with    the   $\sigma$-algebra  $\Sigma_w$, unless  otherwise stated.

-  A Markov  kernel $T: \Xx \times  \Sigma_\Yy \to [0,1]$    is  uniquely defined   by    the map  $ \overline T:  \Xx \to  \Pp (\Yy)$  such that  $\overline T (x) (A) = T (x, A)$ for all $x\in \Xx, A \in \Sigma _\Yy$.   We  shall   also use notations $T ( A|x):= T (x, A) $ and $\overline T(A|x) : = \overline T (x)(A)$.  

- A    {\it probabilistic  morphism}  $T: \Xx \leadsto \Yy$  is an  arrow assigned to a  measurable mapping, denoted  by $\overline T$  from $ \Xx$ to  $\Pp (\Yy)$.  We say  that  $T$ is generated  by $\overline  T$.  For  a measurable mapping $T: \Xx \to \Pp (\Yy)$ we  denote  by $\underline T: \Xx \leadsto  \Yy$ the generated  probabilistic morphism.

-  For   probabilistic  morphisms  $T_{\Yy|\Xx} : \Xx \leadsto \Yy$ and  $T_{\Zz|\Yy}: \Yy \leadsto  \Zz$  their    composition is the  probabilistic  morphism
$$  T_{\Zz|\Xx} : = T_{\Zz |\Yy} \circ  T _{\Yy|\Xx}: \Xx \leadsto \Zz,  $$
$$	(T_{\Zz|\Yy}\circ T_{\Yy|\Xx}) (x, C): = \int_\Yy T_{\Zz|\Yy} (y ,C) T_{\Yy| \Xx} (dy|x) $$
for	$x\in \Xx$   and $C \in \Sigma_\Zz$.  This   corresponds to   the composition of the associated  Markov  kernels, and hence  the composition is associative.

- We  denote  by $\Meas(\Xx, \Yy)$  the set  of  all measurable mappings from   a measurable  space  $\Xx$ to a measurable space $\Yy$, and by $\Probm (\Xx, \Yy)$  the set  of all  probabilistic  morphisms    from $\Xx$ to $\Yy$.
We denote  by $\Yy ^\Xx$  the set  of all mappings  from  a set $\Xx$ to  a set  $\Yy$.
For  any set $\Xx$ we denote by $\Id_\Xx$ the  identity map on $\Xx$. For a   product   space  $\Xx \times \Yy$ we denote  by $\Pi_\Xx$ the canonical projection   to the    factor  $\Xx$.   For $\mu \in \Pp  (\Xx\times \Yy)$ denote by $\mu_\Xx$  the marginal   probability measure  $(\Pi_\Xx)_* \mu\in \Pp (\Xx)$. 

Important  examples  of  probabilistic morphisms  are   measurable  mappings  $\kappa: \Xx \to \Yy$ since    the  Dirac map $\delta:\Xx \to \Pp (\Xx), x\mapsto \delta_x,$ is measurable \cite{Giry1982},  and hence  we  regard $\kappa$ as a  probabilistic morphism   which is generated  by the measurable mapping $\overline \kappa: =\delta \circ \kappa: \Xx \to \Pp (\Yy)$.  We  shall denote    measurable mappings by straight  arrows  and  probabilistic morphisms  by curved  arrows.  Hence,  $\Meas (\Xx, \Yy) $  can be regarded as a  subset of $\Probm(\Xx, \Yy)$.  Other important  examples  of  probabilistic morphisms  in statistical  learning  are regular  conditional   probability measures,  whose   definition we    recall now.  Let $\mu$  be   a finite measure on $(\Xx \times \Yy, \Sigma_\Xx \otimes \Sigma_\Yy)$.  A   {\it product regular  conditional  probability measure} \index{product  regular conditional measure} for $\mu$ with respect  to the  projection $\Pi_\Xx: \Xx \times \Yy \to \Xx$ is a  Markov  kernel\index{Markov kernel} $\mu_{\Yy|\Xx}: \Xx \times \Sigma_\Yy \to [0,1]$ such that
\begin{equation}\label{eq:regpr}
	\mu (A \times B) = \int_A  \mu_{\Yy|\Xx} (x, B)\, d (\Pi_\Xx)_* \mu  (x)
\end{equation}
for  any $A \in \Sigma_\Xx$ and $B \in \Sigma_\Yy$. We shall    simply call $\mu_{\Yy|\Xx}$ a {\it regular  conditional  probability measure} for $\mu$  and identify $\mu_{\Yy|\Xx}$ with the  generating measurable  map $\overline{\mu_{\Yy|\Xx}}: \Xx \to \Pp(\Yy), x\mapsto \mu_{\Yy|\Xx} (\cdot |x)$.

- We denote by $\mathbf {Meas}$ the category of measurable  spaces, whose objects  are measurable  spaces   and  morphisms  are measurable mappings.

\subsection{The category $\Probm$ and characterizations  of regular  conditional  probability  measures} \label{subs:probm}

\subsubsection{Category of  probabilistic morphisms}\label{ssub:probm}
We denote by $\mathbf{Probm}$  the category  whose objects  are measurable  spaces   and  morphisms  are of  Markov kernels  $ T:  \Xx \times  \Sigma_\Yy \to [0,1]$ as    morphisms  from $\Xx$ to $\Yy$.  The identity   Markov kernel $\underline{\delta \circ  \Id _\Xx}: \Xx \times \Sigma_\Xx \to [0,1]$is defined as follows: $(x, A)\mapsto \delta _x (A)$.   For $\kappa \in \Meas(\Xx, \Yy)$ we  also use the shorthand  notation
\begin{equation}\label{eq:delta}
	\overline{\kappa}: = \delta \circ  \kappa.
	\end{equation}
Identifying  $\Meas (\Xx, \Yy)$  with  a subset   in  $\Probm (\Xx, \Yy) =\Meas(\Xx, \Pp (\Yy))$ via the composition with the Dirac map $\delta$:  $\Meas (\Xx, \Yy)
\ni \kappa \mapsto \delta \circ \kappa \in \Probm (\Xx, \Yy)$, we  regard  $\Meas$ as a subcategory of $\Probm$.

The category 
$\Probm$ admits  a faithful  functor $S$ to the category  $\mathbf {Ban}$  of Banach  spaces   whose  morphisms  are bounded  linear mappings  of operator norm less than or equal to one  as follows. For any $T \in \Probm (\Xx, \Yy)$, $\mu \in \Ss (\Xx)$  and $B \in \Sigma_\Yy$ we set
\begin{equation}\label{eq:Mhomomorphism}
	S(T)_*\mu  (B) = \int _\Xx  \overline T(B|x) \, d\mu (x).
\end{equation}

\begin{proposition}\label{prop:chentsovbanach} \cite[Lemmas 5.9, 5.10]{Chentsov1972}  Let  $S$  assign  to each  measurable space  $\Xx$ the   Banach  space $\Ss(\Xx)$  of  finite  signed measures    on $\Xx$ endowed  with the  total variation norm  $\| \cdot \|_{TV}$ and  to every  Markov kernel  $T_{\Yy|\Xx}$ the Markov homomorphism  $(T_{\Yy|\Xx})_*$.
	Then  $S$ is a faithful  functor from  the category $\mathbf {Probm}$ to the category $\mathbf {Ban}$.
\end{proposition}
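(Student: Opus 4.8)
The plan is to verify in turn that $S$ is well defined on objects, that it sends each morphism to an admissible morphism of $\mathbf{Ban}$, that it respects identities and composition, and finally that it is faithful. That $\Ss(\Xx)$ equipped with $\|\cdot\|_{TV}$ is a Banach space is classical, so the first point requires nothing beyond citation.

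For the second point I would fix a Markov kernel $T = T_{\Yy|\Xx}$ and $\mu \in \Ss(\Xx)$, and first check that $S(T)_*\mu$ defined by \eqref{eq:Mhomomorphism} is a finite signed measure: countable additivity follows from the countable additivity of each $\overline T(\cdot\,|x)$ together with an interchange of summation and integration applied to the Jordan decomposition $\mu = \mu^+ - \mu^-$, while finiteness is immediate from $\overline T(B|x)\le 1$. Linearity of $\mu \mapsto S(T)_*\mu$ is clear. The norm estimate is the substance of this step: for any finite measurable partition $\{B_i\}$ of $\Yy$,
\[
\sum_i |S(T)_*\mu(B_i)| = \sum_i \left| \int_\Xx \overline T(B_i|x)\, d\mu(x) \right| \le \sum_i \int_\Xx \overline T(B_i|x)\, d|\mu|(x) = \int_\Xx \overline T(\Yy|x)\, d|\mu|(x) = |\mu|(\Xx),
\]
where the middle interchange uses monotone convergence on nonnegative integrands and the final equality uses $\overline T(\Yy|x) = 1$, since $\overline T(x)$ is a probability measure. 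Taking the supremum over partitions gives $\|S(T)_*\mu\|_{TV} \le \|\mu\|_{TV}$, so $S(T)_*$ is bounded of operator norm at most one, as required.

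Next I would establish functoriality. Preservation of identities is immediate: the identity kernel has $\overline\delta(B|x) = 1_B(x)$, whence $S(\delta)_*\mu(B) = \int_\Xx 1_B\, d\mu = \mu(B)$, i.e. $S(\delta)_* = \Id_{\Ss(\Xx)}$. For composition I would first record the auxiliary integration identity: for a kernel $T_{\Yy|\Xx}$, a bounded measurable $g$ on $\Yy$, and $\mu \in \Ss(\Xx)$,
\[
\int_\Yy g\, d(S(T_{\Yy|\Xx})_*\mu) = \int_\Xx \Big( \int_\Yy g(y)\, \overline{T_{\Yy|\Xx}}(dy|x) \Big)\, d\mu(x),
\]
which holds for $g = 1_B$ by the very definition of $S(T)_*$, extends to simple functions by linearity, and to all bounded measurable $g$ by dominated convergence. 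Applying this with $g = \overline{T_{\Zz|\Yy}}(C|\cdot)$ and comparing with the kernel composition formula recalled in the excerpt yields $S(T_{\Zz|\Yy} \circ T_{\Yy|\Xx})_* = S(T_{\Zz|\Yy})_* \circ S(T_{\Yy|\Xx})_*$ evaluated on each $C \in \Sigma_\Zz$, hence as operators.

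Faithfulness is then the shortest step: if $S(T)_* = S(T')_*$ for kernels $T, T'\colon \Xx \leadsto \Yy$, I would test on Dirac measures, since $S(T)_*\delta_x(B) = \overline T(B|x) = T(x,B)$ and likewise for $T'$; equality for all $x \in \Xx$ and $B \in \Sigma_\Yy$ forces $T = T'$. I expect the composition step to be the main obstacle — not conceptually, but technically — since it rests entirely on justifying the interchange of integration in the auxiliary identity above; the norm bound, the identity axiom, and faithfulness are all immediate once that Fubini-type lemma is in place.
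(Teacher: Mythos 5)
Your proof is correct. Note that the paper itself gives no proof of this proposition at all --- it is stated with a citation to Chentsov's Lemmas 5.9 and 5.10 --- so there is nothing internal to compare against; your argument (the partition estimate $\sum_i |S(T)_*\mu(B_i)| \le |\mu|(\Xx)$ for the contraction property, the Fubini-type identity $\int_\Yy g\, d(S(T)_*\mu) = \int_\Xx \int_\Yy g(y)\, \overline{T}(dy|x)\, d\mu(x)$ for functoriality, and evaluation on Dirac measures for faithfulness) is precisely the standard route and, in substance, reconstructs the cited proof. One cosmetic remark: for a \emph{finite} partition the interchange of sum and integral is just linearity, so monotone convergence is only needed if you take the total variation norm as a supremum over countable partitions; either convention works here.
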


\begin{remark}\label{rem:functors} (1) It is known that the restriction $M_*(T)$  of $S_*(T)$ to  $\Mm(\Xx)$  and the  restriction $P_*(T)$  of $S_*(T)$  to  $\Pp (\Xx)$ maps  $\Mm(\Xx)$ to  $\Mm(\Yy)$ and $\Pp (\Xx)$ to
	$\Pp(\Yy)$, respectively  \cite[Lemma 5.9, p. 72]{Chentsov1972}.  We   shall use   the shorthand notation $T_*$  for $S_* (T)$, $M_* (T)$ and $P_* (T)$, if no misunderstanding occurs.
	
	(2) Let $T \in \Probm(\Xx, \Yy)$. Then  $T_*: \Pp(\Xx) \to \Pp(\Yy)$ is a measurable   mapping \cite[Theorem 1]{Giry1982}.
	
	(3) Let $T \in \Probm(\Xx, \Yy)$  and $\nu \ll \mu \in \Ss (\Xx)$. Then $T_*(\nu) \ll T_*(\mu)$ by  a result due  to 
	Ay-Jost-L\^e-Schwachh\"ofer \cite[Remark  5.4, p. 255]{AJLS2017}, which generalizes Morse-Sacksteder's  result \cite[Proposition 5.1]{MS1966}, see also \cite[Theorem 2 (2)]{JLT2021}  for an alternative  proof.
\end{remark}

Using  the functor $S$,   we shall     characterize   probabilistic   regular conditional probability   measures.
Given  two  measurable  mappings
$\overline T_i: \Xx \to \Pp(\Yy_i)$,  where i = 1,2,  let us consider the map $$\overline{T_1 \cdot T_2}: \Xx \to \Pp (\Yy_1 \times \Yy_2), \: x\mapsto   \mathfrak m (\overline   T_1(x), \overline T_2(x)),$$
where the multiplication $\mathfrak m$ is defined  as follows:
$$\mathfrak m (\mu_1, \mu_2) = \mu_1 \otimes \mu_2.$$
It is  easy to see  that $\mathfrak m$ is a measurable mapping.
The map $\overline{T_1 \cdot T_2}: \Xx \to \Pp (\Yy_1 \times \Yy_2)$  is
a measurable mapping  since  it is  the  composition of  two measurable  mappings $(\overline T_1,\overline T_2): \Xx  \to  \Pp (\Yy_1) \times \Pp (\Yy_2)$ and  $\mathfrak m: \Pp (\Yy_1) \times \Pp (\Yy_2) \to  \Pp (\Yy_1 \times \Yy_2)$.

\begin{definition}\label{def:graph}  (1)  Given  two  probabilistic morphisms
	$T_i: \Xx \leadsto \Yy_i$  for i = 1,2,     {\it the join  of  $T_1$ and $T_2$ }  is  the  probabilistic morphism  $T_1 \cdot T_2: \Xx \leadsto \Yy_1 \times \Yy_2$   whose  generating   mapping  is $\overline{T_1 \cdot T_2}: \Xx \to \Pp (\Yy_1 \times \Yy_2)$  given by
	$$\overline{T_1 \cdot T_2} (x): = \mathfrak m (\overline   T_1(x), \overline T_2(x)).$$
	
	(2)  Given  a        probabilistic  morphism  $T: \Xx \leadsto \Yy$  we denote  the join  of  $\Id_\Xx$ with  $T$ by
	$\Gamma  _T: \Xx \leadsto  \Xx \times  \Yy$  and call  it  the   {\it graph  of $T$}. 
\end{definition}

\begin{remark}\label{rem:joint} 
	(1) If  $\kappa: \Xx \to \Yy$  is  a measurable  mapping, regarded as a probabilistic morphism, then
	the graph $\Gamma_\kappa: \Xx \leadsto \Xx \times \Yy$  is a  measurable mapping  defined  by $x \mapsto  (x, \kappa (x))$  for $x\in \Xx$, since  $\overline \Id_\Xx = \delta \circ  \Id_\Xx$, and  therefore $\mathfrak m  (\overline\Id_\Xx (x),\overline \kappa (x) )   =  \delta  _x \otimes \delta _{\kappa (x)} = \delta \circ \Gamma _{\kappa} (x)$.
	
	
	(2) For historical  notes on the  concept  of   the  join  of   two  probabilistic  morphisms, we refer  to \cite[Remark 2.17 (1)]{Le2023}.
\end{remark}

	

	
	

\begin{definition}[Almost surely equality]\label{def:asep} \cite{Le2023}, \cite{Fritz2019}  Let $\mu \in \Pp(\Xx)$.   Two  measurable mappings $T, T':\Xx\to  \Pp(\Yy)$  will be called  {\it equal   $\mu$-a.e.} (with the shorthand notation $ T =  T' $    $\mu$-a.e.), if 
	for  any $B \in \Sigma_\Yy$
	$$\mu\{x\in \Xx: T(x)(B) \not =  T'(x)(B)\} = 0.$$
\end{definition}

\subsubsection{Characterizations of regular conditional probability measures}\label{subs:charreg}

\begin{theorem}[Characterization of regular conditional probability measures]\label{thm:marginal}
	
	\
	
	(i) A measurable mapping  $\overline T : \Xx \to \Pp (\Yy)$ is a regular conditional  probability measure for $\mu\in \Pp(\Xx \times \Yy)$   with  respect  to the    projection  $\Pi_\Xx$ if  and only if
	\begin{equation}\label{eq:graph}
		(\Gamma_T)_* \mu_\Xx =\mu.
	\end{equation}
	
	(ii)  If  $ \overline  T,\,  \overline T':\Xx \to \Pp (\Yy)$ are  regular conditional  probability measures for $\mu\in \Pp(\Xx \times \Yy)$   with  respect  to the    projection  $\Pi_\Xx$,  then  $  \overline  T =  \overline T'$ $\mu_\Xx$-a.e. Conversely,  if  $T$ is a  regular conditional  probability measure for $\mu\in \Pp(\Xx \times \Yy)$   with  respect  to the    projection  $\Pi_\Xx$  and  $  \overline  T =  \overline T'$ $\mu_\Xx$-a.e.,  then $\overline T':\Xx \to \Pp (\Yy)$ is a regular conditional  probability measure for $\mu\in \Pp(\Xx \times \Yy)$   with  respect  to the    projection  $\Pi_\Xx$.
\end{theorem}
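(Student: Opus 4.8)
The plan is to reduce the statement to an identity of measures that can be tested on measurable rectangles, exploiting the explicit form of the graph morphism. For part (i), I would first unwind Definition \ref{def:graph}: the generating map of $\Gamma_T = \Id_\Xx \cdot T$ is
\begin{equation*}
\overline{\Gamma_T}(x) = \mathfrak m\bigl(\overline{\Id_\Xx}(x), \overline T(x)\bigr) = \delta_x \otimes \overline T(x),
\end{equation*}
using $\overline{\Id_\Xx}(x) = \delta_x$ and the definition $\mathfrak m(\mu_1,\mu_2) = \mu_1 \otimes \mu_2$ (cf. Remark \ref{rem:joint}(1)). Hence on a rectangle $A \times B$ with $A \in \Sigma_\Xx$, $B \in \Sigma_\Yy$ we get $\overline{\Gamma_T}(x)(A \times B) = 1_A(x)\,\overline T(x)(B)$. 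Feeding this into the pushforward formula \eqref{eq:Mhomomorphism}---legitimate because $\Gamma_T \in \Probm(\Xx, \Xx \times \Yy)$ carries $\Pp(\Xx)$ into $\Pp(\Xx \times \Yy)$ by Remark \ref{rem:functors}(1)---yields
\begin{equation*}
(\Gamma_T)_* \mu_\Xx (A \times B) = \int_\Xx 1_A(x)\,\overline T(x)(B)\, d\mu_\Xx(x) = \int_A \overline T(x)(B)\, d\mu_\Xx(x).
\end{equation*}

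Comparing the right-hand side with the defining relation \eqref{eq:regpr}, I see that $(\Gamma_T)_* \mu_\Xx$ and $\mu$ agree on every measurable rectangle precisely when $\overline T$ is a regular conditional probability measure for $\mu$. To promote agreement on rectangles to the equality \eqref{eq:graph} of the two measures on $\Xx \times \Yy$, I would invoke the uniqueness ($\pi$-$\lambda$) theorem: the rectangles form a $\pi$-system generating $\Sigma_\Xx \otimes \Sigma_\Yy$ and containing the total space, and both $(\Gamma_T)_* \mu_\Xx$ and $\mu$ are probability measures, so coincidence on the generating $\pi$-system forces coincidence everywhere. This settles both directions of (i) at once.

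For part (ii) I would argue one set $B \in \Sigma_\Yy$ at a time. If $\overline T$ and $\overline T'$ are both regular conditional probability measures, then for each fixed $B$ the functions $x \mapsto \overline T(x)(B)$ and $x \mapsto \overline T'(x)(B)$ have equal $\mu_\Xx$-integrals over every $A \in \Sigma_\Xx$, since both integrals equal $\mu(A \times B)$; testing against $A = \{x : \overline T(x)(B) > \overline T'(x)(B)\}$ and against its complement forces the integrands to agree $\mu_\Xx$-a.e., which is exactly the $\mu_\Xx$-a.e.\ equality of Definition \ref{def:asep}. The converse runs the same computation in reverse: if $\overline T$ satisfies \eqref{eq:regpr} and $\overline T = \overline T'$ $\mu_\Xx$-a.e., then for each $B$ the two integrands agree $\mu_\Xx$-a.e., so $\int_A \overline T'(x)(B)\,d\mu_\Xx = \mu(A \times B)$ for all $A$, i.e.\ $\overline T'$ again satisfies \eqref{eq:regpr} (its measurability being part of the hypothesis).

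The genuinely non-formal content is confined to the two measure-theoretic upgrades---from rectangles to the full product $\sigma$-algebra in (i), and from equality of all integrals to a.e.\ equality of integrands in (ii). I expect the main obstacle, such as it is, to be the careful verification that $(\Gamma_T)_* \mu_\Xx$ is indeed a probability measure (so that the uniqueness theorem applies) and that the relevant sets $\{x : \overline T(x)(B) > \overline T'(x)(B)\}$ are measurable; once the identification $\overline{\Gamma_T}(x) = \delta_x \otimes \overline T(x)$ is in hand, the rest is bookkeeping.
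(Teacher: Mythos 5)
Your proposal is correct and takes essentially the same route as the paper, which offers no detailed proof but simply remarks that Theorem \ref{thm:marginal} follows easily from the defining relation \eqref{eq:regpr} (citing \cite[Theorem 2.22]{Le2023} for a generalization). Your computation of $(\Gamma_T)_*\mu_\Xx$ on measurable rectangles, the $\pi$-$\lambda$ uniqueness step upgrading agreement on rectangles to \eqref{eq:graph}, and the standard a.e.\ argument for part (ii) are precisely the routine details that remark leaves implicit.
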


Theorem \ref{thm:marginal} follows  easily  from the    definition of a regular conditional  probability measure (Equation \ref{eq:regpr}), see  also   \cite[Theorem 2.22]{Le2023}  for  a  slight  generalization.

\begin{example}[Measurement error]\label{ex:merror}   Assume   that  $x\in (\Xx, \mu_\Xx)$, $\mu _\Xx \in \Pp(\Xx)$, and   $y\in  \R^n$ are   related         by the following   equation
	\begin{equation}\label{eq:merror}
		y =  f (x) + \eps 
	\end{equation}
	where  $f \in \R^\Xx$  and  $\eps \in  (\R^n, \mu_\eps)$.  One   regards   $\eps$ being generated  by  a measurable mapping  $g_\eps:   (\Om, \mu_\Om) \to \R^n$, where  $(\Om, \mu_\Om)$ is a latent source space, $\mu_\Om \in \pb (\Om)$ and $\mu_\eps = (g_\eps)_* \mu_\Om$. One   interprets the  assumption  that the   noise $\eps$ is  independent  of $x\in \Xx$,  and        the    measurement error  equation  \eqref{eq:merror}  as  the equation for  the  conditional  probability $\mu_{\R^n |\Xx} (x)$ of  $y$  given   $x$,  which    satisfies
	\begin{equation}\label{eq:noisec}\overline{\mu_{\R^n |\Xx}} (x) =  \delta_{f ( x)}*  \mu _\eps \in \Pp (\R^n).
	\end{equation}
	Here  $ \mu_1 * \mu_2=  A_* (\mu_1 \otimes \mu_2)$ is the convolution  for   $\mu_1, \mu_2 \in \Pp (\R^n)$,  noting that $ A: \R^n \times \R^n \to \R^n, (x, y) \mapsto (x+y)$ is a measurable map.     Since  for   any $\mu_0 \in \Pp (\R^n)$ one  easily  shows that  the embedding 
	$I_{\mu_0}: \Pp (\R^n)\to \Pp (\R^n \otimes \R^n), \mu \mapsto \mu\otimes \mu_0,$  is a measurable mapping, we conclude that  the map $\overline{\mu_{\R^n|\Xx}}: \Xx \to \Pp (\Yy)$  defined  in \eqref{eq:noisec}  is a measurable map,  if  $f \in \Meas(\Xx, \R^n)$. In this  case $(x, y)$   is  distributed  jointly  by     the   probability measure  $\mu \in \Pp (\Xx \times \R^n)$  that  is defined  uniquely  by   its marginal  measure $\mu_\Xx = (\Pi_\Xx)_* \mu$ and  its  regular  conditional 
	probability  measure that is generated by the map $\overline{\mu_{\R^n|\Xx}}: \Xx \to \Pp (\R^n)$  defined in \eqref{eq:noisec}, i.e. $\mu = (\Gamma_{\mu_{\R^n|\Xx}})_* \mu_\Xx$.   If the  mean  $m (\mu_\eps) : = \int _{\R^n} y \, d\mu_\eps (y)$ of  $\mu_\eps$ is zero,  then   one    has
\begin{equation}\label{eq:regreg}
	f (x) =  r_\mu (x): = \int_{\R^n} y \, d \mu_{\R^n|\Xx} (y|x). 
\end{equation}
Formula  \eqref{eq:regreg}  explains  the  importance  and popularity  of the     problem of  estimating    the  regression function $r_\mu$, where  $\mu \in \Pp (\Xx \times \R^n)$ is  (unknown)
probability  measure,  in supervised  learning,  see  e.g. \cite[\S 1.4, p. 26]{Vapnik1998},  \cite{CS2001}. 
\end{example}

\subsection{Generative models  of supervised  learning}\label{subs:gens}

\subsubsection{Generative models  of supervised  learning and correct loss functions}\label{subs:gencorrect}
In   supervised learning,    given a data set  of labeled   items  $S_n = \{(x_1, y_1), \ldots, 
(x_n, y_n)\}\in  (\Xx \times \Yy)^n$,   the  aim  of a learner
is to find  a best approximation  $f_{S_n}$   of  the stochastic  relation between  $x\in \Xx$ and its labels  $y \in \Yy$, formalized as  a version of the conditional  probability measure  $\mu_{\Yy|\Xx}: \Xx \to \Pp (\Yy)$   expressing  the probability  of   the  label $y\in \Yy$  given an input  $x\in \Xx$.
In this   subsection,  we consider  the   setting  of frequentist  supervised  learning, where  $(x_i, y_i)$ are assumed  to  be  i.i.d., i.e.,
for any   $n$, $S_n \in  ((\Xx \times \Yy)^n, \mu^n)$ for  some unknown      $\mu \in \Pp (\Xx\times \Yy)$. Furthermore, we  assume that  $\mu$  admits a regular conditional  probability measure  $\mu_{\Yy|\Xx}$  with  respect  to the projection $\Pi_\Xx: \Xx \times \Yy \to \Xx$. It is known that  if  $\Yy$ is a Polish space,    and $\Xx$ is a measurable space,  then  for any   $\mu \in \Pp (\Xx \times \Yy)$ there exists  a regular   conditional  measure $\mu_{\Yy|\Xx}$   with respect  to $\Pi_\Xx: \Xx \times \Yy \to \Xx$,   see  e.g.  \cite[Corollary 10.4.15, p. 366, vol. 2]{Bogachev2007}, or  \cite[Theorem 3.1]{LFR2004}  for a more general result.

The  concept of a  best approximation requires   a specification of a hypothesis $\Hh$  of possible  predictors as  well   a  the notion  of a {\it correct} loss function  that measures  the  deviation  of a  possible predictor from a regular  conditional  probability  measure $\mu_{\Yy|\Xx}$.  The loss function concept   in statistical analysis was  introduced by Wald in his work on  statistical  decision  theory \cite{Wald1950} and has been intensively discussed  in  statistical learning  theory \cite{Vapnik1998}, \cite{Steinwart:SVM2008}.   Now  we   perform   Step  1 of   modeling      a  supervised  learning  problem under the above assumptions  in the  following  definition.

\begin{definition}\label{def:gen}  (cf. \cite[Definitions 3.1, 3.3]{Le2023})  {\it A generative model of supervised learning} is  given by a   quintuple  $(\Xx, \Yy, \Hh, R, \Pp_{\Xx \times \Yy})$,  where $\Xx$   and $\Yy$ are  measurable spaces,  $\Hh$  is a  family  of measurable  mappings $h: \Xx \to \Pp (\Yy)$, $\Pp_{\Xx \times \Yy}\subset \Pp (\Xx \times \Yy)$  contains  all possible   probability measures  governing  the  distributions of  labeled  pairs $(x, y)$,  and   $R : \Hh \times \Pp_{\Xx  \times \Yy} \to \R \cup \{ +\infty\}$ is  a risk/loss  function  such that $\inf_{h \in \Hh} R_\mu (h)\not = \pm \infty$  for any $\mu \in \Pp_{\Xx\times \Yy}$. If
$R (h, \mu) = \E_\mu  (L  (\cdot, \cdot, h))$ where  $L: \Xx  \times \Yy \times \Hh \to \R \cup \{ +\infty \}$ is an  instantaneous loss function such that  $L (\cdot, \cdot, h ) \in \Meas (\Xx \times \Yy, \R)$ for any $h \in \R$,  then we shall write   the model as  $(\Xx, \Yy, \Hh, R^L, \Pp_{\Xx \times \Yy})$. Note  that $\E_\mu f$ is well-defined, iff $h  \in L^1 (\mu)$, otherwise  we let  $\E_\mu (h ): = +\infty$. If $\Hh \subset \Meas (\Xx, \Yy) \subset \Meas (\Xx, \Pp(\Yy))$ we shall say that $(\Xx, \Yy, \Hh, R, \Pp_{\Xx \times \Yy})$ is a {\it  discriminative} model of supervised  learning.  If  $R$  can be  extended  to a loss function, also denoted  by $R:\Hh \times (\Pp_{\Xx  \times \Yy} \cup \Pp_{emp} (\Xx \times \Yy) )\to \R \cup \{ +\infty\}$, then $R$ is called  {\it empirically  definable}.
	
	 A  loss  function
	$ R: \Hh \times  \Pp_{\Xx \times \Yy} \to \R  \cup \{ + \infty\}$ will  be  called  {\it $\Pp_{\Xx\times \Yy}$-correct},
	if there exists a  set $\widetilde \Hh \subset \mathbf{Meas}  (\Xx, \Pp (\Yy))$ such  that  the following  three conditions   hold:
	
	(1) $\Hh \subset \widetilde  \Hh$.
	
	(2)  For any $\mu \in \Pp_{\Xx \times \Yy}$ there exists
	$h \in \widetilde \Hh$ such  that $h$ is  a regular  conditional measure  for $\mu$  relative to the  projection $\Pi_\Xx$.
	
	(3) $R$ is  the restriction of a  loss function $\widetilde  R: \widetilde \Hh \times \big (\Pp_{\Xx \times \Yy}\cup\Pp_{emp} (\Xx \times \Yy)\big )
	\to \R  \cup \{ + \infty\}$   such that  for  any  $\mu \in \Pp_{\Xx \times \Yy}$    
	$$\arg \min _{h \in \widetilde \Hh} \widetilde R (h, \mu) = \{ h \in \widetilde \Hh|\, h \text { is a regular  conditional probability measure  for $\mu$}\}.$$
	
	A  loss  function
	$ R: \Hh \times \big (\Pp_{\Xx \times \Yy}\big ) \to \R  \cup \{ + \infty\}$ will  be  called {\it correct}, if  $R$  is   the restriction of   a  $\Pp (\Xx \times \Yy)$-correct  loss function  $\widetilde R: \Hh	\times  \Pp (\Xx \times \Yy) \to \R  \cup \{ + \infty\}$.	
\end{definition}


\begin{example}\label{ex:gensuper}   Let us consider  the following generative model  of  classification problems $ (\Xx, \Yy, \Hh, R^L, \Pp_{\Xx \times \Yy})$, a particular case of supervised  learning  where  $\Yy = \{0, 1\}$, and $L$ shall  be defined  below  in \eqref{eq:instql}. Since $\Yy = \{0, 1\}$,   one can identify  $\Pp (\Yy)$  with the  interval  $[0,1]$, namely, 
	we  associate  a  probability  measure  $\pb \in  \Pp (\Yy)$ with the value  $\pb  (\{1\}) \in [0,1]$.   Since  $\Yy$ is finite,    the  strong topology on $\Pp (\Yy)$  coincides  with the  weak*-topology $\tau_w$, and therefore   $\Pp(\Yy)$  is  a measurable  space  isomorphic  to  $([0,1], \Bb([0,1])$.   Thus
	$$\Meas(\Xx, \Pp(\Yy))=\Meas(\Xx,[0,1]).$$ 	
 Recall that $\mu_{\Yy|\Xx}: \Xx \to \Pp (\Yy)$  denotes a regular  conditional probability measure  for  $\mu\in \Pp(\Xx \times \Yy9)$ and
	$$r_\mu(x)=\int_\Yy y\, d \mu_{\Yy|\Xx}(y| x) \in [0,1].$$
	Since  $r_\mu (x) = \mu_{\Yy|\Xx} (\{1\}|x)$,
	$\mu_{\Yy|\Xx}$
	and $r_\mu$ define each other. Now assume  that 
	$$\Hh   \subset \Meas   (\Xx, [0,1])  =  \Probm (\Xx, \Yy),$$
	where, recall that, in the  last  equality,  for $h \in \Meas (\Xx, [0,1])$,       we    identify $h(x)$   with the  measure  $\bar h (x) \in \Pp (\Yy)$ defined  by  $\bar h (x) (\{ 1\})  = h (x)$ and $\bar h (x) (\{ 0\}) = 1- h(x)$.  Clearly, $\Meas(\Xx, [0,1]) \subset  \Ll^2  (\Xx,  (\Pi_\Xx)_*\mu )$  for any 
	$\mu \in \Pp (\Xx \times \Yy)$. 
It is known that $r_\mu $ is the minimizer   of  the     expected loss  function
	$$R^L_\mu: \Ll^2 (\Xx,(\Pi_\Xx)_*\mu )\to   \R_{\ge 0},\:   h \mapsto   \E_\mu   L  ( \cdot, \cdot, h)$$
	where  $L$ is the  instantaneous   quadratic  function:
	\begin{equation}\label{eq:instql}
		L (x, y, h)= (y -  h(x))^2,
	\end{equation}
since  $r_\mu$ is   a  regular  version  of the    conditional expectation $\E_\mu (\Id_{\R}|\Pi_\Xx)$, see  e.g.  \cite[Proposition 1]{CS2001} for a  slight  generalization  of this  fact.
	Hence,  $r_\mu$ is  a minimizer      of   $R^L_\mu: \Meas   (\Xx, [0,1]) \to [0,1]$. Moreover, any minimizer     $R^L_\mu$ to $\Meas   (\Xx, [0,1])$  coincides  with  $r_\mu$  $\mu_\Xx$-a.e.   Thus, the    risk  function  $R^L_\mu$  in     the  generative model $(\Xx, \Yy= \{ 0, 1\}, \Meas(\Xx, \Pp(\Yy)), R^L,  \Pp (\Xx \times \Yy))$  is correct. Hence,  for  any   generative  model $(\Xx, \Yy, \Hh, R^L, \Pp_{\Xx \times \Yy})$, where, abusing notation,  the restriction of $L$ to $\Hh$ is also denoted by $L$, the    risk  function  $R^L_\mu$ is  also correct. In particular the instantaneous 0-1 loss  function  $L^{0,1}: \Xx \times \Yy \times \Meas(\Xx, \Yy),   (x, y, h) \mapsto  d^{0-1} (y, h(x))$, where  $d^{0-1}$ is the 0-1 distance,   generates   a  correct  loss  function.  
\end{example}

\begin{remark}\label{rem:loss} (1) The  class of  generative models of  supervised  learning  of  the type  $(\Xx, \Yy, \Hh, R, \Pp_{\Xx\times\Yy})$ encompasses al  models  for  density  estimation on $\Yy$  by  letting $\Xx$ consist  of a single  point.
	
(2) In  classical statistical learning  theory  one usually  considers    loss  functions $R^L: \Hh\times \Pp_{\Xx \times \Yy} \to \R_{\ge 0} \cup \{ +\infty\}$ which
are  generated  by instantaneous loss  functions  $L: \Xx \times \Yy \times \Hh$, and perturbations  of $R^L$, see, e.g.,  \cite{Vapnik1998}, \cite{Steinwart:SVM2008}.
Note  that any  loss  function which is  generated by an instantaneous   loss function  is empirically  definable.
\end{remark}  
\subsubsection{ RKHSs and correct loss functions}\label{subs:rkhs}
In what  follows  we shall  provide  a natural example of  empirically  definable  correct  loss  functions  which may   not be generated  by   instantaneous loss functions and their perturbations, using  our characterization of  regular  conditional probability measures (Theorem  \ref{thm:marginal})  and   kernel  mean embeddings.

Let  $K: \Yy \times \Yy  \to \R$  be  a  measurable  positive  definite symmetric (PDS)  kernel on  a measurable space $\Yy$.  For $y \in \Yy$ let  $K_y$ be the function on $\Yy$ defined by
$$K_y (y') = K(y, y') \text { for } y' \in \Yy.$$
We denote by $\Hh(K)$ the  associated  RKHS \cite{Aronszajn50}, see also \cite{BT2004} i.e.
$$ \Hh (K)  =\overline{{\rm span}}\{ K_y, y \in \Yy\},$$
where the closure  is taken with respect  to the $\Hh (K)$-norm defined  by
$$\la  K_y, K_{y'} \ra_{\Hh (K)} =  K (y, y').$$
Then for any $f \in \Hh (K)$ we have
\begin{equation}\label{eq:inner1}
	f (y) = \la f, K_y \ra _{\Hh (K)}.
\end{equation}

By Bochner's theorem,  $\int_\Yy \sqrt{K(y,y)}\, d\mu (y) < \infty$  for  $\mu \in \Pp (\Yy)$  if and only  if the   {\it kernel  mean embedding $\mathfrak M_K (\mu) $ of $\mu$}  via the  Bochner integral is well-defined  \cite{BT2004},  where
\begin{equation}\label{eq:kme1}
	\mathfrak M_K (\mu)  =   \int_\Yy  K_y d\mu (y) \in \Hh(K).
\end{equation}
If $\mathfrak M_K(\mu)$ is well-defined  for  all $\mu\in \Pp (\Xx)$,   the  kernel  mean embedding $\mathfrak M_K :\Pp (\Xx) \to \Hh (K) $  extends   to    a linear  map, also denoted  by 	$\mathfrak M_K$ from $\Ss (\Yy)$ to $\Hh (K)$.

\begin{example}\label{ex:losscatkernel} (cf. \cite[Example 3.4 (2)]{Le2023})  Assume $K$ be a  measurable  positive  definite symmetric (PDS)  kernel on  a measurable space $\Xx \times \Yy$  such that $\mathfrak M_K: \Pp (\Xx\times \Yy) \to \Hh (K)$ is an embedding. Then  the  loss  function
\begin{equation}\label{eq:losscatkernel}
R^K: \Probm (\Xx, \Yy) \times \Pp (\Xx, \Yy) \to \R_{\ge 0},  (h, \mu)\mapsto  \| \mathfrak M_K (\Gamma_h)_*\mu_\Xx - \mathfrak M_k (\mu)\|_{\Hh (K)}
\end{equation}
is a correct  loss  function by Theorem \ref{thm:marginal}.

It is  known that if $\Xx \times \Yy$  is a Polish subspace in $\R^n$ then  there  are many  PDS kernels   on  $\Xx \times \Yy$ such  that  $\mathfrak M_K: \Pp (\Xx\times \Yy) \to \Hh (K)$ is an embedding \cite[Theorem 3.2]{Sriperumbudur16}, e.g.  $K$  is the  restriction  of the  Gaussian  kernels     $K_\sigma: \R^{n+m} \times \R^{n+m} \to \R_{\ge 0}  $ for $\sigma >0:  \, K_\sigma (z, z') = \exp (-\sigma \| z-z'\|  ^2)$.  Here  $\|\cdot \|$  denotes  the  norm  in $\R^{n +m}$ generated by the Euclidean metric.
\end{example}

\begin{remark}\label{rem:losscat} (1) If $\Xx$ consists  of a single  point, the loss function  $R^K$   was used    by  Lopez-Paz,  Muandet,  Sch\"olkopf, and  Tolstikhin  for   probability measure  estimation  problem. They proved the following beautiful result on estimating probab.
	
\begin{proposition}\label{prop:lmst2015}\cite[Theorem 1]{LMST2015}  Let $K: \Yy \times \Yy \to \R$  be a measurable  kernel such  that  $\mathfrak M_K: \Ss(\Yy) \to \Hh (K)$ is well-defined.  Assume that  $\| f\| _\infty \le 1$  for all $f \in \Hh(K)$ with $\| f\|_{\Hh (K)} \le 1$. Then   for  any $\eps \in (0,1)$ we have
	\begin{align}
		\mu\Big\{ S_n \in \Yy^n:\, \| \mathfrak M_K (\mu_{S_n})-\mathfrak M_K(\mu)\| _{\Hh (K) }\le  2 \sqrt{ \frac{ \int_\Yy K (y, y) d\mu (y) }{n}}\nonumber\\ + \sqrt{ \frac {2 \log \frac{ 1}{\eps}}{n}}\Big\} \ge  1 -\eps.\label{eq:lmst2015}
	\end{align}
\end{proposition}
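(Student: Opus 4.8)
The plan is to regard
$F(S_n) := \|\mathfrak M_K(\mu_{S_n}) - \mathfrak M_K(\mu)\|_{\Hh(K)}$
as a function of the i.i.d.\ sample $S_n = (y_1,\dots,y_n)$, where $\mu_{S_n} = \frac1n\sum_{i=1}^n \delta_{y_i}$ is the empirical measure, so that by linearity of $\mathfrak M_K$ and $\mathfrak M_K(\delta_y) = K_y$ we have $\mathfrak M_K(\mu_{S_n}) = \frac1n\sum_i K_{y_i}$, while $\mathfrak M_K(\mu) = \int_\Yy K_y\, d\mu(y)$ is the population mean embedding. The argument then splits into a concentration-of-measure step, controlling the deviation of $F$ from its mean, and a moment step, bounding $\E[F]$.

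First I would record the consequence of the normalization hypothesis. From \eqref{eq:inner1} one has $K(y,y) = \la K_y, K_y\ra_{\Hh(K)} = \|K_y\|_{\Hh(K)}^2$, and applying the assumption $\|f\|_\infty \le 1$ whenever $\|f\|_{\Hh(K)}\le 1$ to $f = K_y/\|K_y\|_{\Hh(K)}$ yields $\sup_{y}K(y,y) \le 1$, so that $\|K_y\|_{\Hh(K)}\le 1$ for every $y\in\Yy$. For the concentration step, note that replacing a single coordinate $y_i$ by $y_i'$ changes $\mathfrak M_K(\mu_{S_n})$ by $\frac1n(K_{y_i}-K_{y_i'})$, of norm at most $2/n$ by the triangle inequality and the bound just proved; hence by the reverse triangle inequality $F$ itself changes by at most $2/n$. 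McDiarmid's bounded-difference inequality with constants $c_i = 2/n$ then gives $\mu^n\{\,F \ge \E[F] + t\,\} \le \exp(-n t^2/2)$, and choosing $t = \sqrt{2\log(1/\eps)/n}$ makes the right-hand side equal to $\eps$. Thus $F \le \E[F] + \sqrt{2\log(1/\eps)/n}$ on an event of $\mu^n$-probability at least $1-\eps$.

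It remains to bound $\E[F]$ by $2\sqrt{\int_\Yy K(y,y)\,d\mu(y)/n}$. Here I would use the standard ghost-sample symmetrization: introducing an independent copy $S_n' = (y_1',\dots,y_n')$ and Jensen's inequality for the (convex) norm, $\E[F] \le \E\|\frac1n\sum_i(K_{y_i}-K_{y_i'})\|_{\Hh(K)}$; inserting Rademacher signs $\sigma_i$ (legitimate since $K_{y_i}-K_{y_i'}$ is symmetric) and using the triangle inequality gives $\E[F]\le 2\,\E\|\frac1n\sum_i\sigma_i K_{y_i}\|_{\Hh(K)}$; a final Jensen step in the form $\E\|\cdot\|\le(\E\|\cdot\|^2)^{1/2}$ together with $\E_\sigma[\sigma_i\sigma_j]=\delta_{ij}$ collapses the cross terms and leaves $2(\frac1{n^2}\sum_i\E\,K(y_i,y_i))^{1/2} = 2(\int_\Yy K(y,y)\,d\mu(y)/n)^{1/2}$. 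Combining with the concentration step yields the claimed inequality. I note that a shorter route bounds $\E[F]\le(\E F^2)^{1/2} = (\frac1n(\int_\Yy K(y,y)\,d\mu - \|\mathfrak M_K(\mu)\|_{\Hh(K)}^2))^{1/2}\le (\int_\Yy K(y,y)\,d\mu/n)^{1/2}$ directly, which is even stronger; the symmetrization is needed only to reproduce the constant $2$ of the original statement.

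The only points requiring genuine care, rather than bookkeeping, are the verification of the bounded-difference constant $2/n$ and the well-definedness of all the $\Hh(K)$-valued expectations and Bochner integrals. The latter is guaranteed by the hypothesis that $\mathfrak M_K$ is well-defined on all of $\Ss(\Yy)$: every empirical and population mean embedding, as well as each expectation $\E[K_{y_i}]$, then lives in $\Hh(K)$, and finiteness of $\int_\Yy K(y,y)\,d\mu(y)$ follows from $K(y,y)\le 1$. I expect the symmetrization (the ghost-sample plus Rademacher inequality) to be the most delicate part to state cleanly, though it is entirely routine once the embedding is realized as the Hilbert-space average $\frac1n\sum_i K_{y_i}$.
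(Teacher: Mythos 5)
Your proof is correct. Note that the paper itself gives no proof of this proposition --- it is quoted verbatim from the cited reference \cite[Theorem 1]{LMST2015} --- so there is nothing internal to compare against; your argument (McDiarmid's bounded-difference inequality with constants $c_i = 2/n$, justified by $\sup_y \|K_y\|_{\Hh(K)} \le 1$, plus ghost-sample/Rademacher symmetrization to bound $\E[F]$ by $2\sqrt{\int_\Yy K(y,y)\,d\mu(y)/n}$) is precisely the standard route and is essentially the proof given in that reference. Your side remark is also accurate: the direct variance computation $\E[F] \le (\E F^2)^{1/2} \le \sqrt{\int_\Yy K(y,y)\,d\mu(y)/n}$ yields the sharper constant $1$, and the factor $2$ in the stated bound is an artifact of the symmetrization argument rather than a necessity.
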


(2)  For more examples  of correct  loss   functions, see \cite[Example 3.4, Theorem 4.6]{Le2023}.
\end{remark}

\subsection{Leanability  of overparameterized   supervised  learning models}\label{subs:over}
In this  subsection we  shall  specify the  notion  of successful  learning    of a   procedure  of   statistical learning from data  as the concept  of learnability of a  generative model supervised  (Definition \ref{def:genbi}) learning  and present examples of  learning  algorithms (Definition \ref{def:aserm}).
Then we   demonstrate   an  application  of    categorical  and geometrical methods  in proving  the   learnability of  overparameterized   supervised  learning models.

\subsubsection{Learnability  of a  supervised  learning model  and regularized  ERM algorithms}


Given a   supervised learning model $(\Xx\times \Yy, \Hh, R, \Pp_{\Xx\times \Yy})$   and $\mu \in \Pp_{\Xx\times \Yy}$, we set
\begin{equation}\label{eq:apperror}
	R_{\mu, \Hh} : = \inf_{h \in \Hh}R_\mu  (h).
\end{equation}
For $h \in \Hh$, recall  that  its  {\it estimation  error} is defined as  follows (cf. \cite[Chapter I, \S 3]{CS2001}):
\begin{equation}\label{eq:aprerr}
	\Ee_{\Hh, R, \mu} (h): = R_\mu  (h) - R_{\mu, \Hh}.
\end{equation}

If $R = R^L$ we shall  write  $\Ee_{\Hh, L, \mu}$ instead  of 
$\Ee_{\Hh, R^L, \mu}$. Let $\mu_*$  denote  the  inner  measure defined  by $\mu \in  \Mm (\Xx)$.

\begin{definition}\label{def:genbi} cf. \cite[Definition 3.8]{Le2023}    A  supervised   learning  model
	$(\Xx \times \Yy, \Hh, R, \Pp_{\Xx \times \Yy})$  will be  said  to have  a {\it generalization  ability} or  will be   called  {\it learnable}, if  there   exists  a  uniformly  consistent learning algorithm  
	$$ A: \bigcup_{n=1}^\infty (\Xx \times \Yy)^n \to  \Hh, $$
	i.e.,  for any  $(\eps, \delta)\in (0,1)^2$  there  exists   a number $m_A  (\eps, \delta)$ such  that   for any $m \ge m_A (\eps, \delta)$ and any $\mu \in \Pp_\Zz$ we have
	\begin{equation}\label{eq:pac}
		(\mu^n)_*  \{ S \in (\Xx \times \Yy)^n, \Ee_{\Hh, R, \mu} (A(S)) \le \eps\} \ge 1- \delta.
	\end{equation}
	In this  case  $A$  will be called {\it uniformly consistent}.
\end{definition}

\begin{remark}\label{rem:gen}  (1) We may consider   a general  statistical learning  model $(\Zz, \Hh, R, \Pp_\Zz)$ where  $\Zz$ is a   measurable  (sample) space, $\Hh$ is a  class  of   mappings,  $\Pp_\Zz\subset \Pp (\Zz)$  is  the  set of  all possible  probability measures  governing  distribution  of  observable data  $z_i \in \Zz$  and $R: \Hh \times \Pp_\Zz \to \R\cup {+\infty}$ is a loss  function.
	
	 (2)The   current  definition of      uniform consistency, also called   generalizability, of  a learning algorithm $A$ in the standard literature  is   almost  identical  to our definition.   As in the theory of empirical processes \cite{VW2023},  we relax the  convergence in probability in the classical  requirement   to the convergence   in outer probability  for  the sequence  of functions $\Ee_{\Hh, R, \mu} \circ A: (\Xx \times \Yy)^n \to \R, \, n \in \N$, whose  $\mu^n$-measurability needed  in  the definition of  the  convergence in probability  represents a  quite strong  assumption.
\end{remark}

From now on we  assume that  the loss function $R: \Hh \times \Pp_{\Xx \times \Yy} \to \R \cup \{ +\infty\}$ is empirically   definable.   For  a      data $\{S_n\in (\Xx \times \Yy)^n\}$ we define  the  empirical risk
$$\widehat  R_{S_n}: \Hh \to  \R, \, h \mapsto   R_{\mu_{S_n}} (h). $$

It is well-known  in analysis  that  to solve     an equation  $A(f) =  F$ it is often   useful to look  at its perturbed  equation   $A_\eps (f) = F_\eps$.  A successful perturbation method  has  been  used  in   approximation theory  under the  name    of  Tikhonov's regularization method, which  has been  developed  further  by   Vapnik-Stephaniuk  as  a method  of solving  stochastic  ill-posed  problem \cite{VS1978}, see also \cite[Chapter 7]{Vapnik1998}, \cite[Chapter 7]{Vapnik2000}.  The Vapnik-Stephaniuk  method of  solving stochastic  ill-posed problem fits  particularly  well to   our equation  for    regular  conditional     probability measures (Theorem \ref{thm:marginal})  which leads  to  the  correct  loss  function in  Example \ref{ex:losscatkernel}.

\begin{definition}\label{def:aserm}Assume that   $(\Xx \times \Yy, \Hh,R, \Pp_{\Xx\times \Yy} )$ is a   generative model ofs supervised  learning.  Let   $W: \Hh \to \R_{\ge 0}$  be a function.  Given  a sequence  of numbers $\Gamma : = (\gamma_1 > \gamma_2 > \ldots >  0: \, \lim _{n  \to \infty}\gamma_n = 0)$, we shall call  a  sequence  of   the following  regularized loss functions $R_{\gamma_n}$ for $R$
\begin{equation}\label{eq:regularized}
	R_{\gamma_n}: \Hh \times \Pp _{\Xx \times \Yy} \to \R \cup \{ +\infty\}, (h, \mu) \mapsto   R (h, \mu) + \gamma _n  W (h)
\end{equation}
 a  sequence of {\it $(W,\Gamma)$-regularized   loss  functions  for $R$}. 
If  $\gamma_i = 0$ for   all $i$  we   write $\Gamma= 0$.
Let  $C = (c_1\ge  \ldots\ge  c_n\ge  \ldots: \, c_i \ge 0)$.    A learning  algorithm
	$$ A: \bigcup _{n \in \N} (\Xx \times \Yy)^n \to  \Hh $$
	will be called  a {\it $(C, \Gamma)$- regularized empirical  risk minimizing} algorithm, abbreviated  as  $(C,\Gamma)$-regularized ERM algorithm,  if  there  exists a function $W: \Hh \to \R_{\ge 0}$   such that	 for  any $n \in \N$  and any $S \in (\Xx\times \Yy)^n$, we have
	$$R_{\gamma_n}(A (S), \mu_S)  - \inf _{ h\in  \Hh}  R_{\gamma_n} (h, \mu_S) \le c_n. $$
	If  $c_i = 0$ for   all $i$  we   write $C= 0$,
\end{definition}

\begin{remark}\label{rem:erm} (1) $(C,\Gamma)$-regularized ERM algorithms are  the most  frequently  used  algorithms  in statistical learning  theory \cite{Vapnik1998}.  If $\Gamma = 0$, we  shall write  $C$-ERM algorithm instead  of $(C,0)$-regularized ERM algorithm. In this  case      a specification of $C$  is  often  suppressed   \cite[p. 80]{Vapnik1998}.  In general cases, we need  to specify $C$  to ensure   that a $(C,\Gamma)$-regularized ERM algorithm is uniformly consistent.

(2) The  uniform  consistency of   $C$-ERM  algorithms,  which  is also called ERM-algorithms by suppressing $C$, in the  case  that   a loss  function $R =  R^L$ is generated  by  an instantaneous  loss  function $L : \Xx \times \Yy \times \Pp (\Xx \times \Yy) \to \R_{\ge 0}$ and  $\Pp_{\Xx \times \Yy} = \Pp (\Xx \times \Yy)$ has been discussed  in depth  by Vapnik \cite{Vapnik1998}, \cite{Vapnik2000}.  Vapnik  emphasized the relation  between  the consistency  of ERM  algorithms  and the  convergence   in probability  of  associated  empirical  processes \cite[\S 3.3, p. 86]{Vapnik1998}, which could  be regarded  as  a uniform  law of large numbers in a functional space \cite[\S 3.3.1, p. 87]{Vapnik1998}.  

Berner-Groh-Kutyniok-Peterson  note  that  Vapnik's theory  for ERM algorithms   with  respect to  distribution free  models, i.e.,  where $\Pp_{\Xx \times \Yy} = \Pp(\Xx \times \Yy)$, is not sufficient  to explain  the learnability  of   neural networks,  and they asked  for a new  statistical  learning theory, which  must  take  into  account  also  geometry of  
underlying statistical model $\Pp_{\Xx \times \Yy}$  \cite[\S 1.1.3]{BGKP2021}.

Our examples  of overparametrized   supervised  learning  models  (Proposition \ref{prop:learnable}) in  the next subsection   satisfy  these    requirements, since  our theorem takes into account the interplay between the geometry of a statistical model $\Pp_{\Xx \times \Yy}$ and the geometry of  a hypothesis  space  $\Hh \subset  C_{Lip}(\Xx, \Pp(\Yy)_{\tilde  K_2})$.
\end{remark}

\subsubsection{Examples  of  overparameterized   supervised  learning  models}

\

$\bullet$ Let  $\Xx $ be a compact subset in $\R^n \times \{0\}\subset \R^{n +m}$   and $\Yy$ a     compact subset  in $\{0\}\times \R^m$.\\ 
$\bullet$   Let $K_2: \Yy \times \Yy \to \R $ be  the restriction  of a   Gaussian  kernel $K = K_\sigma:\R^{m +n} \times \R^{m+n} \to \R_{\ge 0}$  to $\Yy \times 
\Yy$.  Let  $K_1: \Xx \times \Yy  \to \R$  be  the restriction  of the kernel $K$ to  $(\Xx \times \Yy)$.\\
$\bullet$ It is known   that  the kernel mean  embeddings $\mathfrak M_{K_1}: \Pp (\Xx \times \Yy) \to  \Hh (K_1)$   and
$\mathfrak M_{K_2}: \Pp (\Yy) \to \Hh (K_2)$  extend linearly to  embeddings,  denoted by  $\mathfrak M_{K_1}: \Ss(\Xx \times \Yy)\to \Hh(K_1) $ and , $\mathfrak M_{K_2}: \Ss(\Yy)\to \Hh (K_2)$  respectively. We denote by $\| \cdot \|_{\tilde K_1}$  and   $\| \cdot \|_{\tilde K_2}$ the pull-back   of the  norm  $\|\cdot \| _{\Hh (K_1)}$  on $\Ss(\Xx \times \Yy)$ and  on $\Ss (\Yy)$ respectively. It  is known  that  the restriction  of  $\| \cdot \|_{\tilde K_1}$   to $\Pp (\Xx \times \Yy)$ and  the restriction  $\| \cdot \|_{\tilde K_2}$ to  $\Pp (\Yy)$   generate metrics  that  induce  the  weak*-topology on   the   spaces, \cite[Example 1, Theorem 3.2]{Sriperumbudur16}.
We  denote  by $\Pp (\Xx \times \Yy)_{\tilde {K_1}}$  and $\Pp (\Yy)_{\tilde {K_2}}$  the  corresponding metric spaces.\\
 $\bullet$ For metric  spaces $(F_1, d_1)$, $(F_2, d_2)$, we  denote by $C_{Lip}(F_1, F_2)$  the space of all  Lipschitz  continuous  mappings  from $F_1$ to $F_2$.  For  $h \in  C_{Lip}(F_1, F_2)$  we denote by $L(h)$ the  {\it Lipschitz   constant}  of $h$  namely the  nonnegative number
 $$L(h):= \sup _{x\not =y} \frac{d_2 (h(x), h(y))}{d_1(x,y)}.$$\\
$\bullet$ Denote by $\Pp_{Lip}(\Xx, \Pp(\Yy)_{\tilde K_2}, vol_\Xx)$  the set  of all   probability measures $\mu \in \Pp (\Xx \times \Yy)$  such that:\\
(i) ${\rm  sppt}\,\mu_\Xx  = \Xx$,  where $\mu_\Xx = (\Pi_\Xx)_*\mu$; \\
(ii)  there  exists a   regular   conditional   measure $\mu_{\Yy|\Xx} \in C_{Lip} (\Xx, \Pp (\Yy)_{\tilde K_2})$ for $\mu$  with respect to  the projection $\Pi_\Xx: \Xx \times \Yy \to \Xx$. 

 We define   a  loss function (cf. Example \ref{ex:losscatkernel})
\begin{align}\label{eq:lostvk}
R^{K_1}:  C_{Lip}(\Xx, \Pp(\Yy)_{\tilde K_2})\times \Pp_{Lip} (\Xx, \Pp(\Yy)_{\tilde K_2}, vol_\Xx)\to \R_{\ge 0},\nonumber\\
  (h, \mu) \mapsto \| (\Gamma_h)_*\mu_\Xx -\mu\|_{\tilde K_1}.
\end{align}

\begin{proposition}\label{prop:learnable} (\cite[Corollary 6.3]{Le2023})   Assume that  $\Pp_{\Xx \times \Yy} \subset\Pp_{Lip} (\Xx, \Pp(\Yy)_{\tilde K_2}, vol_\Xx)$ satisfies   the following  condition (L).
	
	(L) $\Pp_{\Xx\times \Yy}$ is compact in the weak*-topology and  the     function  $(\Pp_{\Xx \times \Yy})_{\tilde K_1} \to \R, \mu \mapsto    L (\mu_{\Yy|\Xx})$, where  $\mu_{\Yy|\Xx} \in C_{Lip} (\Xx, \Pp (\Yy)_{\tilde K_2})$,    takes  values  in a  finite interval $[a, b] \subset \R$.
	
	(1) Then    there  exists  a $(C, \Gamma)$-regularized  ERM algorithm  $A$ for  the loss  function  $R^{K_1}$ which   is uniformly consistent  for the  supervised  learning  model $(\Xx, \Yy, C_{Lip} (\Xx, \Pp (\Yy)_{\tilde K_2}), \Pp_{\Xx \times \Yy})$.
	
	(2)  Assume further that  $\Hh$ is a    subspace   of  the space $C_{Lip}(\Xx, \Yy)$. Then  the   supervised  learning  model $(\Xx, \Yy, \Hh, R^{K_1}, \Pp_{\Xx \times \Yy})$  has a generalization ability. 
\end{proposition}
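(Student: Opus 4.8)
The plan is to read both assertions as instances of the Vapnik--Stephaniuk scheme for stochastic ill-posed problems (Proposition \ref{thm:61Le2023}), reserving the real work for the verification of its hypotheses in the present geometric setting. First I would record, via Theorem \ref{thm:marginal} and the injectivity of $\mathfrak M_{K_1}$ on $\Ss(\Xx \times \Yy)$, that the loss $R^{K_1}(h,\mu)=\|(\Gamma_h)_*\mu_\Xx-\mu\|_{\tilde K_1}$ vanishes precisely when $h$ is a regular conditional probability measure for $\mu$. Since every $\mu\in\Pp_{\Xx\times\Yy}\subset\Pp_{Lip}(\Xx,\Pp(\Yy)_{\tilde K_2},vol_\Xx)$ admits such an $h=\mu_{\Yy|\Xx}\in C_{Lip}(\Xx,\Pp(\Yy)_{\tilde K_2})$, the infimum $R_{\mu,\Hh}$ over the full hypothesis space is $0$ and is attained at $\mu_{\Yy|\Xx}$; hence $R^{K_1}$ is correct and empirically definable, and in part (1) the estimation error $\Ee_{\Hh,R,\mu}(h)$ coincides with $R^{K_1}_\mu(h)$. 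This recasts learnability as the recovery of the solution $\mu_{\Yy|\Xx}$ of the equation $(\Gamma_h)_*\mu_\Xx=\mu$ from i.i.d.\ data, i.e.\ as a stochastic ill-posed problem.

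I would then choose the penalty $W(h):=L(h)$ in Definition \ref{def:aserm} and let $A$ be a $(C,\Gamma)$-regularized ERM algorithm for $R^{K_1}(\cdot,\mu_{S_n})+\gamma_n W$. Hypothesis (L) enters twice: the weak$^*$-compactness of $\Pp_{\Xx\times\Yy}$ together with the bound $L(\mu_{\Yy|\Xx})\in[a,b]$ confines all true solutions to an equi-Lipschitz family, which by Arzel\`a--Ascoli is relatively compact in $C_{Lip}(\Xx,\Pp(\Yy)_{\tilde K_2})$, while the penalty $\gamma_n W$ keeps the empirical minimizers inside this compact set. Compactness of the solution set, strict correctness of the loss, and the convergence estimates below are exactly what Proposition \ref{thm:61Le2023} requires in order to promote approximate minimization of the regularized empirical functional to convergence towards $\mu_{\Yy|\Xx}$.

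The analytic core is the uniform convergence of the empirical to the true risk. By the triangle inequality,
\[
|R^{K_1}(h,\mu_{S_n})-R^{K_1}(h,\mu)|\le\|(\Gamma_h)_*\bigl((\mu_{S_n})_\Xx-\mu_\Xx\bigr)\|_{\tilde K_1}+\|\mu_{S_n}-\mu\|_{\tilde K_1},
\]
and I would bound both terms uniformly in $h$ and $\mu$. The second term has the explicit $O(n^{-1/2})$ rate of Proposition \ref{prop:lmst2015} applied to $K_1$ on the compact $\Xx\times\Yy$, where $K(z,z)=1$ makes its hypotheses hold with constants independent of $\mu$. For the first term I would use that the Gaussian kernel factorizes over the orthogonal factors, $K_1\bigl((x,y),(x',y')\bigr)=K^X(x,x')\,K_2(y,y')$, so that $\mathfrak M_{K_1}\bigl((\Gamma_h)_*\nu\bigr)=\int_\Xx \Phi_h(x)\,d\nu(x)$ with feature $\Phi_h(x)=K^X_x\otimes\mathfrak M_{K_2}(h(x))$; the map $x\mapsto\Phi_h(x)$ is bounded, and its modulus of continuity is governed by the $\tilde K_2$-Lipschitz constant $L(h)\le b$. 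The equi-Lipschitz bound thus yields a uniform modulus of continuity for the family $\{\Phi_h\}$, and a finite covering argument reduces the first term to the concentration of the empirical marginal $(\mu_{S_n})_\Xx$ about $\mu_\Xx$, again controlled by Proposition \ref{prop:lmst2015}. I expect this first term --- the nonlinear graph pushforward interacting with the empirical marginal, \emph{uniformly} over the equi-Lipschitz hypotheses and over all $\mu$ in the model --- to be the main obstacle, since it is precisely here that the geometry of $\Pp_{\Xx\times\Yy}$ and that of $\Hh$ must be balanced against one another. Feeding both estimates into Proposition \ref{thm:61Le2023} yields the uniform consistency of $A$, proving part (1).

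For part (2) I would transport the statement along the Dirac embedding. As $\Yy$ is compact and $K_2$ is Gaussian, the map $\delta\colon\Yy\to\Pp(\Yy)_{\tilde K_2}$, $y\mapsto\delta_y$, is Lipschitz --- indeed $\|\delta_y-\delta_{y'}\|_{\tilde K_2}^2=2\bigl(1-K_2(y,y')\bigr)$ is comparable to $\|y-y'\|^2$ on $\Yy$ --- so composition with $\delta$ realizes $C_{Lip}(\Xx,\Yy)$, hence the subspace $\Hh$, inside $C_{Lip}(\Xx,\Pp(\Yy)_{\tilde K_2})$ with uniformly bounded Lipschitz constants. The uniform convergence established above therefore restricts to $\Hh$, and the standard ERM comparison
\[
R^{K_1}_\mu(A(S))\le\widehat R_{S_n}(A(S))+\eps\le\inf_{h\in\Hh}\widehat R_{S_n}(h)+\eps\le R_{\mu,\Hh}+2\eps,
\]
valid once $\sup_{h\in\Hh}|\widehat R_{S_n}(h)-R^{K_1}_\mu(h)|\le\eps$ (which holds with high probability uniformly in $\mu$), gives $\Ee_{\Hh,R,\mu}(A(S))\le2\eps$. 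Thus $(\Xx,\Yy,\Hh,R^{K_1},\Pp_{\Xx\times\Yy})$ has a generalization ability.
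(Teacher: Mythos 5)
Your overall architecture is the same as the paper's: correctness of $R^{K_1}$ via Theorem \ref{thm:marginal} and injectivity of $\mathfrak M_{K_1}$, reduction to the stochastic ill-posed scheme of Proposition \ref{thm:61Le2023}, concentration via Proposition \ref{prop:lmst2015}, condition (L) for uniformity over $\Pp_{\Xx\times\Yy}$, and the Dirac embedding $\Yy\hookrightarrow\Pp(\Yy)_{\tilde K_2}$ (which is indeed bi-Lipschitz on the compact $\Yy$) for part (2). However, there is a genuine gap at the decisive point, namely your choice of the regularizing functional $W(h):=L(h)$. Proposition \ref{thm:61Le2023} measures operator closeness by \eqref{eq:v712}, i.e.\ by $\sup_{\hat f\in E_1}\|A_{S_n}\hat f-A\hat f\|_{E_2}/W^{1/2}(\hat f)$, where the supremum runs over the \emph{whole} hypothesis space, not over the solution family. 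Take $\hat f\equiv\nu$ constant: then $L(\hat f)=0$, while $A_{S_n}\hat f=(\mu_{S_n})_\Xx\otimes\nu\neq\mu_\Xx\otimes\nu=A\hat f$ almost surely, so with $W=L$ one gets $\|A_{S_n}-A\|=+\infty$ a.s.\ and the bound \eqref{eq:vapnik713} is vacuous. This is precisely why the paper does not use $L$ alone but the heavier penalty of Lemma \ref{lem:W}, $W(f)=\|f\|_M+L(f)+\|\Gamma_{\underline f}\|_{\tilde K_3,\tilde K_1}$, whose extra terms are bounded away from zero on probability-measure-valued maps and dominate the operator norm of the graph pushforward, so that $\|A_{S_n}-A\|$ is controlled by the concentration of the empirical marginal; the compactness of the sublevel sets of this $W$ in the metric $d_M$ is the content of Lemma \ref{lem:W}, and only then does Proposition \ref{thm:61Le2023} apply (yielding Proposition \ref{thm:mainLe2023}, from which the paper derives the statement).

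A second, related gap: your ``analytic core'' establishes uniform convergence of the empirical to the true risk only on the equi-Lipschitz class $\{h:L(h)\le b\}$, but in part (1) the hypothesis space is all of $C_{Lip}(\Xx,\Pp(\Yy)_{\tilde K_2})$, and a penalty with vanishing weight $\gamma_n W$ does \emph{not} ``keep the empirical minimizers inside this compact set'' --- the regularized minimizer can a priori have $L$ as large as order $(c_n+\widehat R_{S_n}(\mu_{\Yy|\Xx}))/\gamma_n$. Moreover the supremum over the full space genuinely fails to converge: given $S_n$, choose $h$ nearly constant near the sample points and rapidly oscillating elsewhere; then $\|(\Gamma_h)_*(\mu_{S_n})_\Xx-(\Gamma_h)_*\mu_\Xx\|_{\tilde K_1}$ stays bounded away from $0$. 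Bridging from sublevel sets of the penalty to the whole space is exactly what the Vapnik--Stefanyuk machinery with the correct $W$ (or, alternatively, an explicit self-bounding argument $\gamma_n W(A(S_n))\le\widehat R_{S_n}(\mu_{\Yy|\Xx})+\gamma_n b+c_n$, which you do not give) accomplishes; you cannot simply restrict to $L(h)\le b$, since that changes the model. The same objection hits your part (2): the ERM comparison requires $\sup_{h\in\Hh}|\widehat R_{S_n}(h)-R^{K_1}_\mu(h)|\le\eps$, but a subspace $\Hh\subset C_{Lip}(\Xx,\Yy)$ has unbounded Lipschitz constants in general, so this supremum need not converge either; the paper instead obtains the algorithm for $(\Xx,\Yy,\Hh,R^{K_1},\Pp_{\Xx\times\Yy})$ from the $d_M$-consistency of the big-model algorithm furnished by Proposition \ref{thm:mainLe2023} together with condition (L).
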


\begin{remark}\label{rem:learnable}  Here  is a concrete  example  of $\Xx, \Yy$ and $\Pp_{\Xx\times \Yy}\subset\Pp_{Lip} (\Xx, \Pp(\Yy)_{\tilde K_2}, vol_\Xx)$ satisfying  the  condition (L)  in Proposition  \ref{prop:learnable}.   Let $\Xx = [0, 1]^n \subset \R^n \times \{0\} \subset \R^{n +m}$ and $\Yy = [0, 1]^m \subset \{ 0\}\times \R^m \subset \R^{n +m}$. Denote by $dx$ the  restriction of the  Lebesgue  measure on $\R^n$  to $\Xx$ and  by $dy$  the restriction of the Lebesgue measure  on $\R^m$  to $\Yy$. Let   $\Pp_{\Xx \times \Yy}$  consist  of  all  probability measures   $\mu_f: =f dxdy$  such that there  exists $c_1, c_0 >0$ with the following  property:  $f\in C^1 (\Xx\times \Yy)$, $L(f) \le  c_1$, and $c_1 \ge f (x, y) \ge c_0 $  for all $x, y \in \Xx \times \Yy$.  
\end{remark}

\emph{Outline of  the  proof  of Proposition \ref{prop:learnable}}

The proof of  Proposition \ref{prop:learnable} uses   Vapnik-Stefanyuk's  method  of solving  stochastic ill-posed  problem \cite{VS1978}, \cite{Stefanyuk1986}, \cite[Theorem 7.3, p. 299]{Vapnik1998} which has been slightly improved  by L\^e in  \cite[Theorem 6.1]{Le2023}. 
The  main idea  of  Proposition \ref{prop:learnable} uses our   characterization  of   regular  conditional  probability measure in Theorem \ref{thm:marginal} in a  variational form, namely,  the loss  function  $R^{K_1}_\mu: \Hh \to \R$, where  $\mu\in \Pp (\Xx \times \Yy)$ is   the (unknown)   joint distribution of $x, y$.  Since  $\mu$ is unknown,   we   wish to  use   the  empirical  measures $\mu_{S_n}$ where $S_n \in (\Xx \times \Yy)^n$  are observed data to approximate $\mu$ in the definition of   the loss  function $R_\mu: \Hh \to \R_{\ge 0}$. But such a direct  approximation  method  may not work  and we have  to perturb the loss  function  $R^{K_1}_\mu: \Hh \to \R$  to  a  sequence  of $(W, \Gamma)$-regularized  loss functions.  Vapnik-Stefanyuk's  method and its  variant \cite[Theorem 6.1]{Le2023}  provide  conditions  when a  $(C, \Gamma)$- ERM algorithm is  consistent.

So let us first give  a  general setting   of Vapnik-Stefanyuk's  methods  of  solving  stochastic  ill-posed  problems.

We consider the following  operator  equation
\begin{equation}\label{eq:vapnik71}
	Af = F
\end{equation}
defined  by a  continuous  operator $A$ which  maps  in a one-to-one manner  the elements  $f$  of a metric  space $(E_1, \rho_{E_1})$ into  the  elements  of a metric
space $(E_2, \rho_{E_2})$,  assuming that 
a solution  $f \in  E_1$  of \eqref{eq:vapnik71}  exists  and is unique. 

Assume  that    $A$ belongs  to a   space  $\Aa$ and  instead  of  Equation \eqref{eq:vapnik71}  we are given a   sequence $\{ F_{S_l} \in E_2, \, l \in \N^+\}$,  a sequence  $\{ A_{S_l} \in \Aa, l \in \N^+ \}$, where $S_l$ belongs  to a  probability space $(\Xx_l,\mu_l)$ and  $A_{S_l}, F_{S_l}$   are   defined  by a family of   maps $\Xx_l \to E_2,\,  S_l \mapsto F_{S_l},$  and  $\Xx_l \to \Aa,\, S_l \mapsto A_{S_l}$.

Let $W:  E_1 \to  \R_{\ge 0}$  be a lower semi-continuous     function   that satisfies  the   following  property (W).

(W)  The sets  $\Mm_c =  W^{-1} ([0,c])$ for  $c \ge  0$ are all  compact.

Given  $A_{S_l}, F_{S_l}$, and $\gamma_l >0$, let us      define  a  regularized risk function $R^*_{\gamma_l} (\cdot, F_{S_l}, A_{S_l}): E_1 \to \R$  by
\begin{equation}\label{eq:regl}
	R^*_{\gamma_l} (\hat f, F_{S_l}, A_{S_l}) = \rho_{E_2}^2  (A_{S_l}\hat f, F_{S_l}) + \gamma_l W(\hat f).
\end{equation}
We shall say   that  $f_{S_l}\in E_1$  is  an {\it $\eps_l$-minimizer}    of  $R^*_{\gamma_l}$ if
\begin{equation}\label{eq:regel}
	R^*_{\gamma_l} (f_{S_l}, F_{S_l}, A_{S_l}) \le  R_{\gamma_l} (\hat f, F_{S_l}, A_{S_l}) + \eps_l \text{ for   all } \hat f \in  E_1.
\end{equation}
We  shall also use the shorthand notation $A_l$ for  $A_{S_l}$, $F_l$ for  $F_{S_l}$, $f_l$ for $f_{S_l}$,
$\rho_2$ for $\rho_{E_2}$, $\rho_1$ for  $\rho_{E_1}$.
For any $\eps_l >0$,  an $\eps_l$-minimizer  of  $R^*_{\gamma_l}$  exists.
We will measure  the closeness  of an operator  $A$ and an operator $A_l$ by the distance
\begin{equation}\label{eq:v712}
	\|A_l - A\| := \sup_{\hat f \in E_1}\frac{\|A_l \hat f -A\hat f\|_{E_2}}{W^{1/2}(\hat f)}.
\end{equation}

The following theorem    proved by L\^e  in \cite[Theorem 6.1]{Le2023} is a slight  improvement  of  Stefanyuk's theorem \cite{Stefanyuk1986}, \cite[Theorem 7.3, p. 299]{Vapnik1998}.

\begin{proposition}\label{thm:61Le2023} cf. \cite[Theorem 7.3, p. 299]{Vapnik1998} Let $f_{S_l}$ be  a $\gamma_l^2$-minimizer of $R^*_{\gamma_l}$ in \eqref{eq:regl}  and  $f$  the  solution of \eqref{eq:vapnik71}.    For any  $\eps >0$ and any constant $C_1, C_2  >0$ there
	exists a value $\gamma(\eps, C_1, C_2) >0$ such that    for any $\gamma_l \le \gamma(\eps, C_1, C_2)$
	\begin{eqnarray}\label{eq:vapnik713}
		(\mu_l)^* \{S_l\in \Xx_l:\, \rho_1 (f_{S_l}, f) >\eps \}\le (\mu_l)^* \{S_l \in \Xx_l:\, \rho_2  (F_{S_l}, F) > C_1 \sqrt{\gamma_l}\}\nonumber \\ + (\mu_l)^* \{ S_l \in \Xx_l:\, \|A_{S_l} - A\| > C_2\sqrt{\gamma_l} \}
	\end{eqnarray}
	holds true.
\end{proposition}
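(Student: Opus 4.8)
The plan is to reduce the probabilistic inequality \eqref{eq:vapnik713} to a single deterministic implication and then invoke finite subadditivity of the outer measure $(\mu_l)^*$. Precisely, I would first prove the following claim: for fixed $\eps>0$ and $C_1,C_2>0$ there is a threshold $\gamma(\eps,C_1,C_2)>0$ such that whenever $\gamma_l\le\gamma(\eps,C_1,C_2)$ and both $\rho_2(F_{S_l},F)\le C_1\sqrt{\gamma_l}$ and $\|A_{S_l}-A\|\le C_2\sqrt{\gamma_l}$ hold, the $\gamma_l^2$-minimizer obeys $\rho_1(f_{S_l},f)\le\eps$. Granting the claim, the ``bad'' event $\{\rho_1(f_{S_l},f)>\eps\}$ is contained in $\{\rho_2(F_{S_l},F)>C_1\sqrt{\gamma_l}\}\cup\{\|A_{S_l}-A\|>C_2\sqrt{\gamma_l}\}$, so monotonicity and subadditivity of $(\mu_l)^*$ yield \eqref{eq:vapnik713} at once.

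To obtain the deterministic implication I would test the $\gamma_l^2$-minimizing inequality \eqref{eq:regel} against the comparison point $\hat f=f$. Since $Af=F$, the triangle inequality together with the defining estimate \eqref{eq:v712}, which gives $\rho_2(A_{S_l}\hat f,A\hat f)\le\|A_{S_l}-A\|\,W^{1/2}(\hat f)$, produces $\rho_2(A_{S_l}f,F_{S_l})\le\sqrt{\gamma_l}\,(C_1+C_2W^{1/2}(f))$, and hence $R^*_{\gamma_l}(f,F_{S_l},A_{S_l})\le\gamma_l(C_1+C_2W^{1/2}(f))^2+\gamma_lW(f)$. Discarding the nonnegative residual term in $R^*_{\gamma_l}(f_{S_l},\cdot)$ and dividing by $\gamma_l$ then forces $W(f_{S_l})\le M$, with $M=(C_1+C_2W^{1/2}(f))^2+W(f)+1$ independent of $l$ for $\gamma_l\le 1$; so by property (W) every $f_{S_l}$ lies in the fixed compact sublevel set $\Mm_M$. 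Keeping instead the residual term, the same inequality bounds $\rho_2(A_{S_l}f_{S_l},F_{S_l})\le\sqrt{\gamma_l}\sqrt{M}$, and inserting \eqref{eq:v712} once more to pass from $A_{S_l}f_{S_l}$ to $Af_{S_l}$ yields a bound on the true residual $\rho_2(Af_{S_l},F)\le K\sqrt{\gamma_l}$, with $K$ depending only on $C_1,C_2,W(f)$.

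The crux, and the step I expect to be the main obstacle, is converting smallness of the residual $\rho_2(Af_{S_l},F)$ into smallness of $\rho_1(f_{S_l},f)$: because the problem is ill-posed, $A^{-1}$ carries no global modulus of continuity, so no direct quantitative estimate is available. I would argue by contradiction. If the claim failed there would exist $\gamma_n\to 0$ with data satisfying the two good bounds yet with minimizers $g_n$ having $\rho_1(g_n,f)>\eps$. All $g_n$ lie in the compact set $\Mm_M$, so along a subsequence $g_{n_k}\to g^*\in\Mm_M$ in $\rho_1$; continuity of $A$ together with $\rho_2(Ag_{n_k},F)\le K\sqrt{\gamma_{n_k}}\to 0$ forces $Ag^*=F$, whence $g^*=f$ by uniqueness of the solution of \eqref{eq:vapnik71}. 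This contradicts $\rho_1(g_{n_k},f)>\eps$ and establishes the threshold $\gamma(\eps,C_1,C_2)$. Assembling the resulting set inclusion and applying subadditivity of $(\mu_l)^*$ completes the proof.
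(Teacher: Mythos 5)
Your proof is correct and follows essentially the same route as the argument the paper cites (Vapnik--Stefanyuk, as refined in \cite[Theorem 6.1]{Le2023}): testing the regularized functional at $\hat f=f$, deducing the uniform bound $W(f_{S_l})\le M$ that places all approximate minimizers in a compact sublevel set via property (W), bounding the true residual $\rho_2(Af_{S_l},F)$, and converting the resulting event inclusion into \eqref{eq:vapnik713} by monotonicity and subadditivity of the outer measure $(\mu_l)^*$. Your subsequence-compactness contradiction is simply the sequential formulation of the standard lemma that the inverse of $A$ is uniformly continuous on $A(\Mm_M)$ when $\Mm_M$ is compact, so the two arguments coincide in substance.
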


In the first  step of  proof  of Proposition \ref{prop:learnable}, we shall find a  condition for a  generative model  of supervised  learning  together with the perturbation term  $W$  such that the  conditions of Proposition \ref{thm:61Le2023}  are  met.   This   first  step of defining $W$ has been    done  using a new  distance   $d_M$  on $C(\Xx, \Mm (\Yy)_{\tilde K_2})  $   as follows:

\begin{equation}\label{eq:dM}  
d_M (f, f') : = \sup_{x\in \Xx} ( \|(f - f')(x)\|_{\tilde K_2} + \|\Gamma _f (x)  -\Gamma_{f'} (x)\|_{\tilde K_2}
\end{equation}
In other worlds, the metric $d_M$ is induced by the norm $\| \cdot \| _{M}$ on the space $C (\Xx,  S(\Yy)_{\tilde K_2})$  given  as follows
$$\| f\|_{ M}  =  \sup  _{x \in \Xx} (\| f(x)\|_{\tilde  K_2} +\|\Gamma_f (x)\|_{\tilde K_1}).$$

\begin{lemma}\label{lem:W}  Let  $W: C_{Lip}(\Xx, \Pp (\Yy)_{\tilde K_2})_M \to \R \ge _{0}$   be defined as follows
	\begin{equation}\label{eq:lsc}
	W (f): =  \| f\|_M  + L(f)  +\| \Gamma_{\underline f}\|_{\tilde K_3, \tilde K_1}
	\end{equation}
  Then $W: C_{Lip}(\Xx, \Pp (\Yy)_{\tilde K_2})_M  \to \R_{\ge 0}$ is a lower  semi-continuous  function. Furthermore, for  any $c\ge 0$ the set
	$W^{-1}[0,c]$ is a compact set in $C_{Lip}(\Xx, \Pp (\Yy)_{\tilde K_2})_M$.
\end{lemma}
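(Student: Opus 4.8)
The plan is to prove the two assertions separately. For lower semicontinuity, since a finite sum of lower semicontinuous functions is lower semicontinuous, it suffices to treat each of the three summands in \eqref{eq:lsc}. The term $\|f\|_M$ is the norm inducing the metric $d_M$ on the ambient space, so it is continuous. The remaining two terms, $L(f)$ and $\|\Gamma_{\underline f}\|_{\tilde K_3,\tilde K_1}$, are Lipschitz constants, i.e. suprema over pairs $x\neq x'$ of ratios $\|f(x)-f(x')\|_{\tilde K_2}/d(x,x')$ and $\|\Gamma_f(x)-\Gamma_f(x')\|_{\tilde K_1}/d_{\tilde K_3}(x,x')$. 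Because $d_M$-convergence forces uniform convergence of $f(\cdot)$ in $\|\cdot\|_{\tilde K_2}$ --- and hence, since $\Gamma_f(x)=\delta_x\otimes f(x)$ depends continuously on $f(x)$, uniform convergence of $\Gamma_f(\cdot)$ in $\|\cdot\|_{\tilde K_1}$ --- each fixed ratio is a continuous function of $f$, so each supremum is lower semicontinuous.

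For compactness of $W^{-1}[0,c]$, lower semicontinuity already gives that this set is $d_M$-closed, so only relative compactness remains, and I would obtain it from Arzel\`a--Ascoli. The crucial feature is that $W(f)\le c$ furnishes two uniform Lipschitz bounds, $L(f)\le c$ and $\|\Gamma_{\underline f}\|_{\tilde K_3,\tilde K_1}\le c$, since all three summands of $W$ are nonnegative. Identifying each such $f$ with the single map $x\mapsto(f(x),\Gamma_f(x))$ valued in $\Pp(\Yy)_{\tilde K_2}\times\Pp(\Xx\times\Yy)_{\tilde K_1}$, these two bounds render the family equicontinuous (with the $\ell^1$-sum metric on the product target, whose induced uniform metric on maps is exactly $d_M$). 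Since $\Yy$ and $\Xx\times\Yy$ are compact, $\Pp(\Yy)$ and $\Pp(\Xx\times\Yy)$ are weak*-compact, and by the cited result of Sriperumbudur the metrics $\tilde K_2$, $\tilde K_1$ induce precisely the weak*-topology; thus both target factors are compact metric spaces, so the family is also pointwise relatively compact. Arzel\`a--Ascoli then yields relative compactness of $W^{-1}[0,c]$ in the $d_M$-topology, which together with closedness gives compactness.

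The step I expect to require the most care is confirming that the Arzel\`a--Ascoli limit genuinely represents an element of $C_{Lip}(\Xx,\Pp(\Yy)_{\tilde K_2})$. One must check that the limiting first component is still $\Pp(\Yy)$-valued --- which holds because $\Pp(\Yy)$ is weak*-closed, hence closed in $\Ss(\Yy)_{\tilde K_2}$ --- that it is Lipschitz with Lipschitz constant at most $c$ by the lower semicontinuity of $L$, and that its second component is indeed the graph of its first, which follows since $g\mapsto\Gamma_g$ is continuous for uniform $\tilde K_2$-convergence. A secondary technical point is that the two metrics on $\Xx$ entering $L(f)$ and $\|\Gamma_{\underline f}\|_{\tilde K_3,\tilde K_1}$ induce the same topology on the compact set $\Xx$, so that the two Lipschitz bounds are jointly usable in the equicontinuity estimate; this holds because the kernel-induced metric $d_{\tilde K_3}$ is topologically equivalent on $\Xx$ to the metric used for $L$.
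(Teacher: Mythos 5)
The paper itself contains no proof of Lemma \ref{lem:W} to compare against: it simply defers to \cite[Proposition 6.1]{Le2023}. Judged on its own terms, your proposal is correct and follows the natural route, which is almost certainly also that of the cited reference: $\|\cdot\|_M$ is $d_M$-continuous because $\Gamma_f(x)=\delta_x\otimes f(x)$ is linear in $f(x)$, so that $d_M(f,f')=\|f-f'\|_M$ (the paper states explicitly that $d_M$ is induced by the norm $\|\cdot\|_M$); the two Lipschitz-constant terms are suprema of $d_M$-continuous functionals and hence lower semicontinuous; and compactness of $W^{-1}[0,c]$ follows from the general Arzel\`a--Ascoli theorem, using that $\Pp(\Yy)_{\tilde K_2}$ and $\Pp(\Xx\times\Yy)_{\tilde K_1}$ are compact metric spaces (weak*-compactness of probability measures on the compact spaces $\Yy$ and $\Xx\times\Yy$, combined with Sriperumbudur's theorem, cited in the paper, that the kernel metrics metrize the weak*-topology), together with the closedness of the sublevel set granted by lower semicontinuity. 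Your reading of the notation $\|\Gamma_{\underline f}\|_{\tilde K_3,\tilde K_1}$ (left undefined in this paper) as the Lipschitz constant of $x\mapsto\Gamma_f(x)$ from $(\Xx,d_{\tilde K_3})$ to $\Ss(\Xx\times\Yy)_{\tilde K_1}$ is the intended one, and you correctly isolate the two points needing care: that Arzel\`a--Ascoli limits remain graphs of Lipschitz $\Pp(\Yy)$-valued maps, and the compatibility of the two metrics on $\Xx$. On both points you can be sharper by exploiting that the Gaussian kernel factorizes, $K_1((x,y),(x',y'))=K_3(x,x')\,K_2(y,y')$. First, $d_{\tilde K_3}(x,x')=\bigl(2-2e^{-\sigma\|x-x'\|^2}\bigr)^{1/2}\le\sqrt{2\sigma}\,\|x-x'\|$, which is exactly the one-sided inequality needed to convert the $d_{\tilde K_3}$-Lipschitz bound on $\Gamma_f$ into Euclidean equicontinuity (topological equivalence, which you invoke, suffices but is weaker than what is available). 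Second, the factorization makes $\mu\mapsto\delta_x\otimes\mu$ an isometry of $\Ss(\Yy)_{\tilde K_2}$ into $\Ss(\Xx\times\Yy)_{\tilde K_1}$, since $\|\delta_x\otimes(\mu-\nu)\|_{\tilde K_1}=\sqrt{K_3(x,x)}\,\|\mu-\nu\|_{\tilde K_2}=\|\mu-\nu\|_{\tilde K_2}$; this turns your qualitative claim that $\Gamma_f(x)$ ``depends continuously on $f(x)$'' into an identity and guarantees that the uniform limit of $(f_n,\Gamma_{f_n})$ has second component equal to the graph of its first. With these two computations made explicit, your argument is a complete and correct proof of the lemma as stated.
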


For the proof of Lemma \ref {lem:W}  we refer to  \cite[Proposition 6.1]{Le2023}.

Using  Lemma  \ref{lem:W}  and  Proposition \ref{thm:61Le2023}, we  obtain  the following

\begin{proposition}\label{thm:mainLe2023}
	Let  $\Xx $ be a compact subset in $\R^n \times \{0\}\subset \R^{n +m}$   and $\Yy$ a     compact subset  in $\{0\}\times \R^m$.   Let $K_2: \Yy \times \Yy \to \R $ be  the restriction  of  the Gaussian kernel  $K: \R^{m+n}\times \R^{m+n} \to \R$.
	Denote by $\Pp_{Lip}(\Xx, \Pp(\Yy)_{\tilde K_2}, vol_\Xx)$  the set  of all   probability measures $\mu \in \Pp (\Xx \times \Yy)$  such that:\\
	(i) ${\rm  sppt}\,\mu_\Xx  = \Xx$,  where $\mu_\Xx = (\Pi_\Xx)_*\mu$; \\
	(ii)  there  exists a   regular   conditional   measure $\mu_{\Yy|\Xx} \in C_{Lip} (\Xx, \Pp (\Yy)_{\tilde K_2})$ for $\mu$  with respect to  the projection $\Pi_\Xx: \Xx \times \Yy \to \Xx$. 
	
	Let  $K_1:\Xx \times \Yy)\times (\Xx\times \Yy)  \to \R$  be  the restriction  of the kernel $K$ to  $(\Xx \times \Yy)$. We define   a  loss function
	$$	R^{K_1}:  C_{Lip}(\Xx, \Pp(\Yy)_{\tilde K_2})\times \Pp_{Lip} (\Xx, \Pp(\Yy)_{\tilde K_2}, vol_\Xx)\to \R_{\ge 0},  (h, \mu) \mapsto \| (\Gamma_h)_*\mu_\Xx -\mu\|_{\tilde K_1}$$
	as in \eqref{eq:lostvk}.
	Then   for any  $\mu \in \Pp_{Lip}  (\Xx, \Pp(\Yy)_{\tilde K_2}, vol_\Xx)$, there  exists a  consistent    $(C,\Gamma)$-regularized ERM algorithm  
	$A$  for  the supervised  learning model\\ $(\Xx, \Yy, C_{Lip} (\Xx, \Pp (\Yy)_{K_2}),R^{K_1},  \Pp_{Lip} (\Xx, \Pp(\Yy)_{\tilde K_2}, vol_\Xx))$. Moreover,  for  any $\eps, \delta > 0$  there exists  $N(\eps, \delta)$ such that  for any $n \ge N(\eps, \delta)$ we have
	\begin{equation}\label{eq:van1}
		(\mu ^n)^*\{ S_n \in  (\Xx\times \Yy)^n :  d_M (A(S_n),\mu_{\Yy|\Xx}) > \eps \} \le \delta,
	\end{equation} 
	where $\mu_{\Yy|\Xx}\in C_{Lip}(\Xx, \Pp(\Yy)_{\tilde K_2}$ is  the unique  regular conditional  probability measure  for $\mu$ with respect to the  projection $\Pp_\Xx: \Xx \times \Yy \to \Xx$. 
\end{proposition}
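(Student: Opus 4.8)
The plan is to recognize this statement as a direct application of the Vapnik-Stefanyuk regularization theorem (Proposition \ref{thm:61Le2023}), with the underlying operator equation being precisely the variational characterization of regular conditional probability measures from Theorem \ref{thm:marginal}. Concretely, I would take $E_1 = C_{Lip}(\Xx, \Pp(\Yy)_{\tilde K_2})$ endowed with the metric $d_M$ of \eqref{eq:dM}, and $E_2 = \mathfrak M_{K_1}(\Ss(\Xx \times \Yy))$ with the norm $\|\cdot\|_{\tilde K_1}$. Defining the operator $A$ by $A(h) = (\Gamma_h)_*\mu_\Xx$, the equation $Af = F$ with $F = \mu$ becomes exactly \eqref{eq:graph}, whose unique solution $f = \mu_{\Yy|\Xx}$ exists by Theorem \ref{thm:marginal}(i) under the hypotheses (i)--(ii) defining $\Pp_{Lip}$. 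The i.i.d. sample $S_n \in (\Xx \times \Yy)^n$ then furnishes the empirical data $F_{S_n} = \mu_{S_n}$ and the perturbed operator $A_{S_n}(h) = (\Gamma_h)_*(\mu_{S_n})_\Xx$, so that the regularized empirical risk $R^*_{\gamma_n}$ of \eqref{eq:regl} coincides with $\|(\Gamma_h)_*(\mu_{S_n})_\Xx - \mu_{S_n}\|^2_{\tilde K_1} + \gamma_n W(h)$.

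The second step is to install the regularizer. I would invoke Lemma \ref{lem:W} to confirm that $W$ of \eqref{eq:lsc} is lower semicontinuous with compact sublevel sets $W^{-1}[0,c]$, which is exactly property (W) required by Proposition \ref{thm:61Le2023}. This compactness guarantees that for each $n$ a $\gamma_n^2$-minimizer $f_{S_n}$ of $R^*_{\gamma_n}$ exists; setting $A(S_n) := f_{S_n}$ defines the $(C, \Gamma)$-regularized ERM algorithm, with the regularization sequence $\Gamma = (\gamma_n)$ to be fixed at the end.

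The third step is to verify the two concentration events on the right-hand side of \eqref{eq:vapnik713}. For the data term, $\rho_2(F_{S_n}, F) = \|\mu_{S_n} - \mu\|_{\tilde K_1}$ is the kernel mean embedding discrepancy for the bounded Gaussian kernel $K_1$ on the compact space $\Xx \times \Yy$; by the concentration inequality of Proposition \ref{prop:lmst2015} this is of order $n^{-1/2}$ with overwhelming probability, so the event $\{\rho_2(F_{S_n}, F) > C_1\sqrt{\gamma_n}\}$ has vanishing probability whenever $\gamma_n$ decays strictly slower than $n^{-1}$. For the operator term, the difference $A_{S_n}(h) - A(h) = (\Gamma_h)_*((\mu_{S_n})_\Xx - \mu_\Xx)$ is linear in the marginal fluctuation, and the operator-norm summand $\|\Gamma_{\underline f}\|_{\tilde K_3, \tilde K_1}$ built into $W$ in \eqref{eq:lsc} is exactly what enters the normalized supremum of \eqref{eq:v712}; the aim is to dominate $\|A_{S_n} - A\|$ by a constant multiple of the marginal discrepancy $\|(\mu_{S_n})_\Xx - \mu_\Xx\|_{\tilde K_3}$, which again concentrates at rate $n^{-1/2}$ by Proposition \ref{prop:lmst2015} applied on $\Xx$.

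With both events made small by choosing $\gamma_n \to 0$ appropriately (for instance $\gamma_n \sim n^{-1/2}$), Proposition \ref{thm:61Le2023} yields, for any $\eps > 0$, a threshold below which $(\mu^n)^*\{d_M(f_{S_n}, \mu_{\Yy|\Xx}) > \eps\}$ is dominated by the two vanishing probabilities, giving \eqref{eq:van1} for all large $n$. The main obstacle I expect is the operator estimate: because the hypothesis space is noncompact and $W$ is unbounded, bounding the supremum in \eqref{eq:v712} requires delicately exploiting the $W^{1/2}$-normalization so that the pushforward difference is controlled by a single empirical-marginal fluctuation uniformly in $h$, rather than by a supremum that could diverge. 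This is precisely where the interplay between the Lipschitz geometry of $\Hh$ and the kernel geometry of $\Pp_{\Xx \times \Yy}$ enters, and where the choice of the norms $\tilde K_3$ and $\tilde K_1$ in \eqref{eq:lsc} must be matched to the mapping properties of $h \mapsto \Gamma_h$.
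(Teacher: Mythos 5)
Your proposal follows essentially the same route as the paper: the paper's proof of this proposition is precisely the combination of Lemma \ref{lem:W} (supplying property (W) for the regularizer $W$) with Proposition \ref{thm:61Le2023}, applied to the operator equation $(\Gamma_h)_*\mu_\Xx = \mu$ furnished by Theorem \ref{thm:marginal}, with the kernel-mean-embedding concentration bound of Proposition \ref{prop:lmst2015} controlling both events on the right-hand side of \eqref{eq:vapnik713} and the choice $\gamma_n \to 0$, $n\gamma_n \to \infty$. The uniform operator-norm estimate you single out as the main obstacle is exactly the technical content the paper itself delegates to the detailed proof in \cite[Theorem 6.5]{Le2023}, so your outline matches the paper's argument at the level of detail the paper provides.
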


For a detailed  proof  of Proposition \ref{thm:mainLe2023}, we refer to  \cite[Theorem 6.5]{Le2023}.  

Proposition \ref{prop:learnable}    is derived  from Proposition \ref{thm:mainLe2023}.  Under the condition (L) of Proposition \ref{prop:learnable}, $W(f)$ is bounded. Hence we  can bound  the  RHS of \eqref{eq:vapnik713} in the  condition of  Proposition \ref{thm:mainLe2023} by using   the estimation  in \eqref{eq:lmst2015}. It is not hard to see  that     a $(C, \Gamma)$ regularized algorithm using the regularized risk function $R^*_{\gamma_n}$ in \eqref{eq:regel} in the  condition of  Proposition \ref{thm:mainLe2023} is uniformly consistent, if $\lim_{n \to \infty} \gamma_n = 0$ and  $\lim_{n \to \infty} n \gamma_n = \infty$. 
 Note  that  the difference between Proposition \ref{prop:learnable}  and Proposition \ref{thm:mainLe2023}  is that  in Proposition \ref{thm:mainLe2023}   we   choose a  subset  $\Hh \subset C_{Lip}(\Xx, \Yy) \subset  C_{Lip} (\Xx, \Pp (\Yy)_{\tilde K_2})$; moreover,  we require  the condition (L).  
 Since  $\Hh \subset C_{Lip}(\Xx, \Yy) \subset  C_{Lip} (\Xx, \Pp (\Yy)_{\tilde K_2})$, using the explicit form  of the loss  function $R^{K_1}$   we can find    a  learning  algorithm  for  the supervised  learning  model    $(\Xx, \Yy,\Hh, R^{K_1}, \Pp_{\Xx \times \Yy})$ from the one for the larger  model  $(\Xx, \Yy,  C_{Lip} (\Xx, \Pp (\Yy)_{\tilde K_2}), R^{K_1}, \Pp_{\Xx \times \Yy})$, which is ensured by Proposition \ref{thm:mainLe2023} and  the condition (L). 
 
   Our examples  in Proposition \ref{prop:learnable}  are of a different  nature  than  the consistent  non-parametric  regression  estimators     described in \cite{Tsybakov2009}.  In \cite[\S 7.9.1, Theorem 5.7, p. 2.5.2 ]{Vapnik2000} Vapnik  also  suggested  a sufficient condition on the function $W$  for the uniform consistency  of   the $(C,\Gamma)$-algorithm in   Stefanyuk's theorem  \cite{Stefanyuk1986}, \cite[Theorem 7.3, p. 299]{Vapnik1998}. His condition resembles  the sufficient condition  for  the learnability in Proposition \ref{prop:learnable}.

\section{Geometric kernels  and applications}\label{sec:geomkernel}

Kernel methods form a very important paradigm in modern machine learning.
In the literature, most positive definite kernels are defined over inner product spaces.
However, in many applications, including brain computer interfaces, radar signal processing, and computer vision, the actual data can have much richer geometrical structures beyond the Euclidean space setting, including in particular those of a smooth manifold. 
There are thus both mathematical and practical interests to investigate positive definite kernels over non-Euclidean structures. 
In this section, we present several results on positive definite kernels defined over the set of SPD (symmetric positive definite) matrices
along with corresponding generalizations to the set of positive Hilbert-Schmidt operators on a Hilbert space. 

Consider first the general setting of a metric space $(M,d)$. It is natural to investigate whether the generalization of the Gaussian kernel
$K_\gamma: \R^n \times \R^n \mapto \R$, $K(x,y) = \exp(-\gamma ||x-y||^2)$, $\gamma > 0$, to
this setting, namely $K_\gamma:M \times M \mapto \R$, $K(x,y) = \exp(-\gamma d^2(x,y))$, is positive definite.

We first briefly review the concepts of positive definite and negative definite kernels, see e.g. \cite{Berg1984}, Chapter 3.
Let $X$ be a nonempty set. A function $\varphi: X \times X \mapto \R$ is said to be a {\it positive definite kernel} if and only if
it is symmetric and
\begin{align}
	\sum_{j,k=1}^Nc_jc_k\varphi(x_j,x_k) \geq 0
\end{align}
where $N \in \Nbb$, $\forall \{x_j\}_{j=1}^N \subset X$, $\{c_j\}_{j=1}^N \subset \R$. 
A function $\varphi: X \times X \mapto \R$ is said to be a {\it negative definite kernel} if and only if it is symmetric and
\begin{align}
	\sum_{j,k=1}^Nc_jc_k\varphi(x_j,x_k) \leq 0
\end{align}
where $N \geq 2$, $\forall \{x_j\}_{j=1}^N \subset X$, $\{c_j\}_{j=1}^N \subset \R$, with $\sum_{j=1}^Nc_j = 0$.

Positive definite and negative definite kernels are closely related. The following is Theorem 3.2.2 in \cite{Berg1984}, a generalization of
Theorem 1 in \cite{Schoenberg1938:Positive}, which is stated for $\varphi(x,y) = d^2(x,y)$ on a separable semi-metric space $(M,d)$ (see Theorem \ref{theorem:Gaussian-kernel-metric-space} below):
\begin{theorem}
	\label{theorem:positive-negative-definite}
	Let $X$ be a nonempty set. Then $\varphi: X \times  X \mapto \R$ is negative definite if and only if $\exp(-\gamma \varphi)$ is positive definite $\forall \gamma > 0$.
\end{theorem}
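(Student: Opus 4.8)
The plan is to prove both implications by reducing the exponential to a product of positive definite kernels; the crux is a Schoenberg-type lemma that extracts a positive definite kernel from $\varphi$.

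For the forward direction, I would fix a reference point $x_0 \in X$ and introduce
$$\psi(x,y) := \varphi(x, x_0) + \varphi(x_0, y) - \varphi(x,y) - \varphi(x_0, x_0),$$
and first show that $\psi$ is positive definite. Given points $x_1, \dots, x_N$ and reals $c_1, \dots, c_N$, the trick is to augment the system with the point $x_0$ and the coefficient $c_0 := -\sum_{j=1}^N c_j$, so that $\sum_{j=0}^N c_j = 0$ and the negative definiteness of $\varphi$ applies to the enlarged $(N+1)$-point system. Expanding the inequality $\sum_{j,k=0}^N c_j c_k \varphi(x_j, x_k) \le 0$ and collecting terms (using the symmetry $\varphi(x_j,x_0) = \varphi(x_0,x_j)$), one finds that this expression is exactly $-\sum_{j,k=1}^N c_j c_k \psi(x_j, x_k)$, which forces $\sum_{j,k} c_j c_k \psi(x_j,x_k) \ge 0$. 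This bookkeeping is the main obstacle: the cancellation is exact but must be tracked carefully, and it is precisely where the constraint $\sum c_j = 0$ in the definition of negative definiteness is consumed.

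Setting $f(x) := \varphi(x, x_0) - \tfrac12 \varphi(x_0, x_0)$, the definition of $\psi$ rearranges to the decomposition $\varphi(x,y) = f(x) + f(y) - \psi(x,y)$, so that
$$\exp(-\gamma \varphi(x,y)) = e^{-\gamma f(x)}\, e^{-\gamma f(y)}\, \exp(\gamma \psi(x,y)).$$
I would then assemble three standard facts. The rank-one kernel $(x,y) \mapsto e^{-\gamma f(x)} e^{-\gamma f(y)}$ is positive definite because $\sum_{j,k} c_j c_k\, e^{-\gamma f(x_j)} e^{-\gamma f(x_k)} = \big(\sum_j c_j\, e^{-\gamma f(x_j)}\big)^2 \ge 0$. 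Since $\psi$ is positive definite, the Schur product theorem makes every pointwise power $\psi^n$ positive definite, whence $\exp(\gamma \psi) = \sum_{n \ge 0} (\gamma^n/n!)\,\psi^n$ is positive definite as a convergent nonnegative combination. A final application of the Schur product theorem to the product of these two positive definite kernels yields that $\exp(-\gamma \varphi)$ is positive definite for every $\gamma > 0$.

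For the converse, suppose $\exp(-\gamma \varphi)$ is positive definite for all $\gamma > 0$; symmetry of $\varphi$ is then inherited from symmetry of its exponential. Given $x_1, \dots, x_N$ and reals with $\sum_j c_j = 0$, positive definiteness gives $\sum_{j,k} c_j c_k \exp(-\gamma \varphi(x_j, x_k)) \ge 0$. Because $\sum_{j,k} c_j c_k = \big(\sum_j c_j\big)^2 = 0$, I may subtract $1$ inside each summand and divide by $\gamma$, obtaining $\tfrac{1}{\gamma}\sum_{j,k} c_j c_k \big(e^{-\gamma \varphi(x_j,x_k)} - 1\big) \ge 0$. Letting $\gamma \to 0^+$ and using $(e^{-\gamma t} - 1)/\gamma \to -t$ termwise in the finite sum, the left side converges to $-\sum_{j,k} c_j c_k \varphi(x_j, x_k)$, which is therefore $\ge 0$. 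This is exactly the negative definiteness of $\varphi$, completing the equivalence.
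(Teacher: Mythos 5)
Your proof is correct, and it is precisely the classical Schoenberg argument (the centering trick $\psi(x,y)=\varphi(x,x_0)+\varphi(x_0,y)-\varphi(x,y)-\varphi(x_0,x_0)$ plus the Schur product theorem for one direction, and the limit $\gamma\to 0^+$ of $\bigl(e^{-\gamma\varphi}-1\bigr)/\gamma$ for the other) that appears as Lemma 3.2.1 and Theorem 3.2.2 in Berg--Christensen--Ressel, which is exactly the source the paper cites for this statement in lieu of giving its own proof. No gaps: the bookkeeping with $c_0=-\sum_{j=1}^N c_j$ and the termwise limit in the finite sum are both carried out correctly.
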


In particular, on a Hilbert space $(\Hcal, \la,  \ra, ||\;||)$, it can be readily verified that the kernel 
$\varphi(x,y) = ||x-y||^2$ is negative definite, since the condition $\sum_{j=1}^N c_j = 0$ implies that
$\sum_{j,k=1}^Nc_jc_k||x_j-x_k||^2 = -||\sum_{j=1}^Nc_jx_j||^2 \leq 0$ always. Thus the Gaussian kernel
$K_\gamma(x,y) = \exp(-\gamma ||x-y||^2)$ is positive definite $\forall \gamma > 0$.
Furthermore, it can be shown that $K(x,y) = \exp(-\gamma ||x-y||^p)$ is positive definite $\forall \gamma > 0$ if and only if $0 < p \leq 2$
(\cite{Schoenberg1938:Positive}, Corollary 3).

More generally, let $X$ be a nonempty set and suppose that there exists a map $\psi: X \mapto \Hcal$.
Then the kernel $\varphi: X \times X \mapto \R$ defined by $\varphi(x,y) = ||\psi(x) - \psi(y)||^2$ is negative definite, hence $K: X \times X \mapto \R$, $K(x,y) = \exp(-\gamma ||\psi(x)-\psi(y)||^2)$ is positive definite $\forall \gamma > 0$.

In general, on a metric space $(M,d)$, by Theorem \ref{theorem:positive-negative-definite}, the kernel $K_\gamma(x,y) = \exp(-\gamma d^2(x,y))$ is positive definite $\forall \gamma > 0$ if and only if $\varphi(x,y) = d^2(x,y)$ is negative definite. Schoenberg \cite{Schoenberg1938:Positive} proved that this happens if and only if $(M,d)$ is isometrically embeddable into a Hilbert space $\Hcal$. The following is essentially Theorem 1 in \cite{Schoenberg1938:Positive}, see also Proposition 3.3.2 in \cite{Berg1984} for a more general version.

\begin{theorem}
	\label{theorem:Gaussian-kernel-metric-space}
	Let $(M,d)$ be a metric space. The kernel $K_\gamma:M \times M \mapto \R$,
	$K_\gamma(x,y) = \exp(-\gamma d^2(x,y))$, is positive definite $\forall \gamma > 0$, or equivalently, the kernel $\varphi(x,y) = d^2(x,y)$ is negative definite, if and only if there exists a Hilbert space $\Hcal$ and a map $\psi: M \mapto \Hcal$ such that $d(x,y) = ||\psi(x) - \psi(y)||$.
\end{theorem}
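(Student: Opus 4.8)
The plan is to read the statement as two linked equivalences. Theorem~\ref{theorem:positive-negative-definite} already identifies ``$K_\gamma = \exp(-\gamma d^2)$ positive definite for all $\gamma>0$'' with ``$\varphi = d^2$ negative definite,'' so the only genuine content left is the equivalence between negative definiteness of $d^2$ and the existence of an isometric map $\psi: M \mapto \Hcal$ with $d(x,y) = \|\psi(x)-\psi(y)\|$. The forward direction of this remaining equivalence is essentially contained in the paragraph preceding the theorem: if such a $\psi$ exists then $d^2(x,y) = \|\psi(x)-\psi(y)\|^2$, and under the constraint $\sum_j c_j = 0$ one has $\sum_{j,k} c_j c_k \|\psi(x_j)-\psi(x_k)\|^2 = -\|\sum_j c_j \psi(x_j)\|^2 \le 0$, so $d^2$ is negative definite.

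For the converse I would carry out Schoenberg's construction. Fix a base point $x_0 \in M$ and introduce the symmetric kernel
$$\tilde\varphi(x,y) := d^2(x,x_0) + d^2(y,x_0) - d^2(x,y).$$
The decisive claim is that $\tilde\varphi$ is \emph{positive} definite. To see this, take arbitrary $x_1,\dots,x_N \in M$ and $c_1,\dots,c_N \in \R$, put $s = \sum_j c_j$, and apply the negative definiteness of $d^2$ to the enlarged system $x_0, x_1, \dots, x_N$ with the extra coefficient $c_0 := -s$, which satisfies $\sum_{j=0}^N c_j = 0$. Expanding $\sum_{j,k=0}^N c_j c_k d^2(x_j,x_k) \le 0$ and using $d^2(x_0,x_0)=0$ together with symmetry gives $\sum_{j,k=1}^N c_j c_k d^2(x_j,x_k) \le 2s \sum_j c_j d^2(x_j,x_0)$. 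Since a direct expansion yields $\sum_{j,k} c_j c_k \tilde\varphi(x_j,x_k) = 2s\sum_j c_j d^2(x_j,x_0) - \sum_{j,k} c_j c_k d^2(x_j,x_k)$, the inequality turns this quantity into something $\ge 0$.

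With $\tilde\varphi$ in hand as a PDS kernel I would invoke the reproducing-kernel (Moore--Aronszajn) construction recalled in Section~\ref{subs:rkhs}, which produces a Hilbert space $\Hcal(\tilde\varphi)$ and a feature map $\psi_0: M \mapto \Hcal(\tilde\varphi)$, $x \mapsto \tilde\varphi_x$, with $\la \psi_0(x), \psi_0(y)\ra = \tilde\varphi(x,y)$ (measurability plays no role in this bare Hilbert-space construction). Expanding $\|\psi_0(x)-\psi_0(y)\|^2 = \tilde\varphi(x,x) - 2\tilde\varphi(x,y) + \tilde\varphi(y,y)$ and substituting the definition of $\tilde\varphi$ makes every base-point term cancel, leaving $\|\psi_0(x)-\psi_0(y)\|^2 = 2\,d^2(x,y)$. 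Rescaling to $\psi := \tfrac{1}{\sqrt2}\,\psi_0$ then delivers $\|\psi(x)-\psi(y)\| = d(x,y)$, completing the embedding.

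The main obstacle is the positive-definiteness of $\tilde\varphi$: the whole argument hinges on choosing the appended coefficient $c_0 = -s$ so that $\sum_{j=0}^N c_j = 0$ activates the negative-definiteness hypothesis, and then on checking that the cross terms involving $x_0$ cancel exactly against the $2s\sum_j c_j d^2(x_j,x_0)$ contribution in $\tilde\varphi$. Once this lemma (the ``Schoenberg transform'') is secured, the RKHS existence and the factor-$\sqrt2$ rescaling are routine bookkeeping.
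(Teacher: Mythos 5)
Your proof is correct, but note that the paper does not actually prove this theorem: it verifies only the easy direction (the paragraph preceding the statement shows that $\varphi(x,y)=\|\psi(x)-\psi(y)\|^2$ is negative definite, hence the associated Gaussian kernel is positive definite), while the converse is deferred entirely to Schoenberg \cite{Schoenberg1938:Positive} and Proposition 3.3.2 of \cite{Berg1984}. What you supply is precisely the classical argument contained in those references, so your route differs from the paper only in being a complete, self-contained proof where the paper offers a citation. Your key lemma --- positive definiteness of the base-point transform $\tilde\varphi(x,y)=d^2(x,x_0)+d^2(y,x_0)-d^2(x,y)$ --- is Lemma 3.2.1 of \cite{Berg1984}, and your verification is sound: appending $x_0$ with coefficient $c_0=-s$ makes the enlarged coefficient vector sum to zero, negative definiteness of $d^2$ then yields $\sum_{j,k=1}^N c_jc_k\,d^2(x_j,x_k)\le 2s\sum_{j=1}^N c_j\,d^2(x_j,x_0)$, and this is exactly the inequality needed to conclude $\sum_{j,k}c_jc_k\tilde\varphi(x_j,x_k)\ge 0$. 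The Moore--Aronszajn step, the cancellation giving $\|\psi_0(x)-\psi_0(y)\|^2=2\,d^2(x,y)$, and the $\sqrt{2}$-rescaling are likewise correct, and you rightly note that measurability plays no role in that construction; your use of Theorem \ref{theorem:positive-negative-definite} to reduce everything to the single equivalence between negative definiteness of $d^2$ and the existence of the embedding matches the intent of the phrase ``or equivalently'' in the statement. One cosmetic slip, inherited from the paper's own text: in the forward direction the identity is $\sum_{j,k}c_jc_k\|\psi(x_j)-\psi(x_k)\|^2=-2\,\bigl\|\sum_jc_j\psi(x_j)\bigr\|^2$ (the factor $2$ is missing), but since only the sign matters, the conclusion is unaffected.
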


Consider now the setting where $(M,g)$ is a geodesically complete finite-dimensional Riemannian manifold. Let $d_g$ denote the corresponding 
metric on $M$ induced by $g$.
Then $(M,d_g)$ is a complete metric space. Building upon Theorem \ref{theorem:Gaussian-kernel-metric-space}, we have the following result (see \cite{Jayasumana2015:PAMI}, Theorem 6.2, and \cite{Feragen:2015Geodesic}, Theorem 2)

\begin{theorem}
	\label{theorem:Gaussian-kernel-geodesic-Riemannian-manifold}
	Let $(M,g)$ be a geodesically complete finite-dimensional Riemannian manifold. The kernel 
	$K_\gamma:M \times M \mapto \R$,
	$K_\gamma(x,y) = \exp(-\gamma d^2(x,y))$, is positive definite $\forall \gamma > 0$ if and only if $M$ is isometric, in the Riemannian sense, to  Euclidean space $\R^n$, for some $n \in \Nbb$.
\end{theorem}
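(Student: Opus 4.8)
The plan is to establish both implications, with the forward (``only if'') direction containing all the substance. For the reverse direction, if $\phi\colon M\to\R^n$ is a Riemannian isometry then the geodesic distance satisfies $d_g(x,y)=\|\phi(x)-\phi(y)\|$, so $K_\gamma(x,y)=\exp(-\gamma\|\phi(x)-\phi(y)\|^2)$ is the pull-back of the Euclidean Gaussian kernel and is positive definite for every $\gamma>0$ by the computation recorded above (equivalently by Theorem~\ref{theorem:positive-negative-definite} applied to the negative definite kernel $\|\cdot-\cdot\|^2$). For the forward direction I would first pass to a purely metric statement: by Theorem~\ref{theorem:positive-negative-definite}, positive definiteness of $K_\gamma$ for all $\gamma>0$ is equivalent to negative definiteness of $d_g^2$, and by Theorem~\ref{theorem:Gaussian-kernel-metric-space} this is in turn equivalent to the existence of an isometric embedding $\psi\colon(M,d_g)\hookrightarrow\Hcal$ into some Hilbert space, i.e.\ $d_g(x,y)=\|\psi(x)-\psi(y)\|$. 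The goal then becomes: a geodesically complete Riemannian manifold whose geodesic distance embeds isometrically into a Hilbert space must be isometric to $\R^n$.

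The key structural step is that $\psi$ straightens geodesics. Fixing $p\in M$ and a unit-speed geodesic $\gamma_v(t)=\exp_p(tv)$, on any interval where $\gamma_v$ is minimizing the distances between $\gamma_v(a),\gamma_v(c),\gamma_v(b)$ (with $a<c<b$) add exactly, so their $\psi$-images realize the equality case of the triangle inequality in $\Hcal$; by strict convexity of the Hilbert norm, $\psi(\gamma_v(c))$ must lie on the segment joining $\psi(\gamma_v(a))$ and $\psi(\gamma_v(b))$, and the arc-length parametrization forces $\psi\circ\gamma_v$ to be an affine ray. Since a geodesic is locally minimizing at every parameter, I would cover $\R$ by such minimizing subintervals, note that overlapping affine pieces share a single unit direction, and patch them using connectedness of $\R$ together with geodesic completeness to obtain the global identity
$$\psi(\gamma_v(t))=\psi(p)+t\,e_v,\qquad t\in\R,$$
for some unit $e_v\in\Hcal$. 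In particular $d_g(\exp_p(tv),p)=t$ for all $t$, so every geodesic issuing from $p$ is globally minimizing.

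To recover the metric structure I would compare second-order behaviour. With $x=sv,\ y=tw$ for unit $v,w$, the isometry of $\psi$ gives $\|s\,e_v-t\,e_w\|^2=d_g(\exp_p x,\exp_p y)^2$; the left-hand side equals $s^2+t^2-2st\langle e_v,e_w\rangle$ exactly, while the right-hand side expands in geodesic normal coordinates via the Riemannian law of cosines as $s^2+t^2-2st\langle v,w\rangle-\tfrac13 s^2t^2\langle R(v,w)w,v\rangle+O\big((s^2+t^2)^{5/2}\big)$. Matching the $st$-coefficient yields $\langle e_v,e_w\rangle=\langle v,w\rangle$, so the map $\iota\colon T_pM\to\Hcal$ defined by $\iota(v)=|v|\,e_{v/|v|}$ (and $\iota(0)=0$) preserves inner products and is therefore linear; matching the $s^2t^2$-coefficient yields $\langle R(v,w)w,v\rangle=0$, confirming as a cross-check that $M$ is flat. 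Combining this with the straightening step gives $\psi\circ\exp_p=\psi(p)+\iota(\cdot)$, whence $d_g(\exp_p v,\exp_p w)=\|\iota(v)-\iota(w)\|=\|v-w\|$ for all $v,w\in T_pM$. Thus $\exp_p$ is a distance-preserving (hence injective) map of the Euclidean space $T_pM\cong\R^n$ onto $M$, and surjective by Hopf--Rinow; being a bijective isometry of metric spaces between connected Riemannian manifolds, it is a smooth Riemannian isometry by the Myers--Steenrod theorem, so $M$ is isometric to $\R^n$.

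I expect the main obstacle to be the global straightening: local length-minimization only gives affine behaviour along short arcs, and one must use strict convexity of the Hilbert norm together with completeness to propagate it along an entire geodesic, thereby excluding flat but non-Euclidean models such as cylinders and tori, whose closed or returning geodesics obstruct any isometric Hilbert-space embedding. It is exactly the hypothesis that $K_\gamma$ be positive definite for all $\gamma>0$---rather than for a single value of $\gamma$---that supplies the full embedding into $\Hcal$ and with it this rigidity.
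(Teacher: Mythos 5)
Your proof is correct, but note that the paper itself does not prove this theorem: it states it as a known result, citing \cite{Jayasumana2015:PAMI} (Theorem 6.2) and \cite{Feragen:2015Geodesic} (Theorem 2), and indicates only the first step, namely that the statement ``builds upon'' Theorem~\ref{theorem:Gaussian-kernel-metric-space}. Your reduction via Theorem~\ref{theorem:positive-negative-definite} and Theorem~\ref{theorem:Gaussian-kernel-metric-space} is precisely that indicated step; everything after it---the rigidity claim that a geodesically complete Riemannian manifold whose geodesic distance embeds isometrically into a Hilbert space must be isometric to $\R^n$---is what the paper delegates to the cited references, and you supply a complete and correct argument for it, essentially reconstructing the proof in \cite{Feragen:2015Geodesic}: straightening of geodesics via the equality case of the triangle inequality in the strictly convex Hilbert norm, patching the locally minimizing pieces by connectedness of $\R$ to obtain globally affine images (the step that rules out flat quotients such as cylinders and tori), then matching the $st$-coefficient of the fourth-order normal-coordinate expansion of $d^2$ to show that $v \mapsto |v|\, e_{v/|v|}$ is a linear isometry of $T_pM$ into $\Hcal$, so that $\exp_p$ is distance-preserving, bijective by Hopf--Rinow, and a smooth Riemannian isometry by Myers--Steenrod. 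Two minor remarks: the $s^2t^2$ (curvature) coefficient is, as you yourself note, only a cross-check, since the identity $d(\exp_p v, \exp_p w) = \|v-w\|$ already completes the argument without invoking flatness; and your appeals to Hopf--Rinow and to finiteness of $d$ tacitly assume $M$ is connected, which is the standing assumption under which the theorem is stated (otherwise $d$ is not a finite metric).
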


Thus on a complete Riemannian manifold $(M,g)$, the Gaussian kernel $K_\gamma(x,y) = \exp(-\gamma d^2(x,y))$ is positive definite $\forall \gamma > 0$ if and only if $M$ is isometric to Euclidean space. This means that if $M$ has nonzero curvature, then the Gaussian kernel on $M$ {\it cannot} be positive definite 
$\forall \gamma > 0$. Subsequently,
we apply the above general results to construct positive definite kernels on the sets of SPD matrices
and positive Hilbert-Schmidt operators on a Hilbert space.

\subsection{Kernels defined via Riemannian metrics}
\label{section:kernel-logE}

Throughout the following, let $\Mrm(n)$ denote the set of real
$n \times n$  matrices.  Then $(\Mrm(n),+, \cdot, \la,\ra_F)$ is an inner product space, where $+$ and $\cdot$
denote standard matrix addition and scalar multiplication, respectively, and $\la, \ra_F$ denotes the Frobenius 
inner product, with $\la A,B\ra_F = \trace(A^TB)$.  Subsequently, we refer to $\Mrm(n)$ with this  inner product space structure, which can be identified with the Euclidean space
$(\R^{n^2},\la, \ra)$ under the canonical inner product.
Let $\Sym(n)$ denote the set of real, $n \times n$ symmetric matrices.
Then $(\Sym(n), +, \cdot, \la, \ra_F)$ is a subspace of $\Mrm(n)$. 

Let $\Sym^{++}(n)$ denote the set of real $n \times n$ SPD matrices. Then it is an open convex cone in $\Sym(n)$, being closed under
 scalar multiplication by positive numbers.
Thus $\Sym^{++}(n)$ can be viewed as a smooth manifold, with tangent space $T_P(\Sym^{++}(n)) \cong \Sym(n)$ $\forall P \in \Sym^{++}(n)$, and can be equipped with a Riemannian metric. 

In the following, we investigate the Gaussian kernel defined using the induced Riemannian distance corresponding to three commonly used Riemannian metrics on $\Sym^{++}(n)$, namely the affine-invariant, Bures-Wasserstein, and Log-Euclidean metrics.

{\bf Affine-invariant Riemannian metric}. The most well-known Riemannian metric on $\Sym^{++}(n)$ is the affine-invariant Riemannian metric, the study of which goes as far back as \cite{Siegel:1943} and \cite{Mostow:1955}. This Riemannian metric $g_{\ai}$ is defined by,
$\forall P \in \Sym^{++}(n)$ and $U,V \in T_P(\Sym^{++}(n)) \cong \Sym(n)$,
\begin{align}
	\label{equation:inner-tangent-affine}
	g_{\ai}(P)(U,V) =
	\la P^{-1/2}UP^{-1/2}, P^{-1/2}VP^{-1/2}\ra_F = \trace(P^{-1}UP^{-1}V),
\end{align}
where $\la,\ra_F$ denotes the Frobenius inner product.
It corresponds to the Fisher-Rao metric on the set of zero-mean Gaussian densities on $\R^n$.
The  Riemannian  manifold $(\Sym^{++}(n), g_{\ai})$ is a Cartan-Hadamard manifold, that is 
it is geodesically complete, simply connected, and with nonpositive sectional curvature (see e.g. \cite{Lang:1999Differential}, chapter XII). There is a unique geodesic $\gamma_{\ai}^{AB}$ connecting any pair $A,B \in \Sym^{++}(n)$, 
with closed form expression
\begin{align}
	\label{equation:geodesic-affine-AB}
	\gamma_{\ai}^{AB}(t) = A^{1/2}\exp[t\log(A^{-1/2}BA^{-1/2})]A^{1/2}, \;\; t \in [0,1].
\end{align}
The Riemannian distance between $A$ and $B$ is the length of this geodesic,
\begin{align}
	\label{equation:affine-E}
	d_{\ai}(A,B) =
	||\log(A^{-1/2}BA^{-1/2})||_F,
\end{align}
where $||\;||_F$ denotes the Frobenius norm.
Since $(\Sym^{++}(n),g_{\ai})$
has {\it nonpositive sectional curvature}, for $n \geq 2$, it cannot be isometric to Euclidean space, and thus by Theorem \ref{theorem:Gaussian-kernel-geodesic-Riemannian-manifold}, the kernel $K_\gamma:\Sym^{++}(n) \times \Sym^{++}(n) \mapto \R$ defined by
\begin{align}
	K_\gamma(A,B) = \exp(-\gamma d^2_{\ai}(A,B)) = \exp(-\gamma ||\log(A^{-1/2}BA^{-1/2})||^2_F), \; \gamma > 0,
\end{align}
{\it cannot} be positive definite $\forall \gamma > 0$ (but may be positive definite for some $\gamma > 0$).

{\bf Bures-Wasserstein metric}. Another commonly used Riemannian metric on $\Sym^{++}(n)$ is the Bures-Wasserstein metric  $g_{\bw}$, see, 
e.g., \cite{Takatsu:2011,Bhatia:2018,Malago:2018}.
It corresponds to the $2$-Wasserstein distance between zero-mean Gaussian measures on $\R^n$. The Bures-Wasserstein metric is given by,
$\forall P \in \Sym^{++}(n)$,
\begin{align}
	\label{equation:metric-bures-wasserstein}
	g_{\bw}(P)(U,V) =  \trace(L_P(U)PL_P(V)), \;\; U,V \in \Sym(n),
\end{align}
where $L_P(V) \in \Sym(n)$ is the unique solution of the Lyapunov equation $XP + PX = V$.
$(\Sym^{++}(n), g_{\bw})$ is a Riemannian manifold with {\it nonnegative sectional curvature}.
The Riemannian distance between $A,B \in \Sym^{++}(n)$ is the Bures-Wasserstein distance, which admits the following closed form expression:
\begin{align}
	\label{equation:distance-Bures-Wasserstein}
	d_{\bw}(A,B) = \sqrt{\trace(A) + \trace(B) - 2\trace(A^{1/2}BA^{1/2})^{1/2}}.
\end{align}
It is the length of the geodesic curve
\begin{align}
	\label{equation:geodesic-Bures-Wasserstein}
	\gamma_{\bw}^{AB}(t) = (1-t)^2A + t^2B + t(1-t)[(AB)^{1/2} + (BA)^{1/2}].
\end{align}
Similar to the case of the affine-invariant metric, $(\Sym^{++}(n), g_{\bw})$ {\it cannot} be isometric to Euclidean space for $n \geq 2$.
Thus, by Theorem \ref{theorem:Gaussian-kernel-geodesic-Riemannian-manifold}, the kernel $K_\gamma:\Sym^{++}(n) \times \Sym^{++}(n) \mapto \R$ defined by
\begin{align}
	K_\gamma(A,B) = \exp(-\gamma d^2_{\bw}(A,B)), \;\; \gamma > 0,
\end{align}
{\it cannot} be positive definite $\forall \gamma > 0$ (but may be positive definite for some $\gamma > 0$).

{\bf Log-Euclidean metric}. We consider next the Log-Euclidean metric, which is widely used in many applications.
The Log-Euclidean Riemannian metric on $\Sym^{++}(n)$ was formulated by the authors of
\cite{Arsigny:2007LogE}.
It is the Riemannian metric arising from the following commutative Lie group multiplication on $\Sym^{++}(n)$,
\begin{align}
	\odot: \Sym^{++}(n) \times \Sym^{++}(n) &\mapto \Sym^{++}(n),
	\nonumber
	\\
	A \odot B &= \exp(\log(A) + \log(B)),
\end{align}
where $\log$ denotes the principal matrix logarithm.
The Log-Euclidean metric is a {\it bi-invariant} Riemannian metric on $(\Sym^{++}(n), \odot)$. 
Let us fix the inner product on $T_I(\Sym^{++}(n)) \cong \Sym(n)$ to be the Frobenius inner product. 
Then the Log-Euclidean metric is given by,
$\forall P \in \Sym^{++}(n)$,
\begin{align}
	\label{equation:metric-logE}
	g_{\logE}(P)(U,V) = \la D\log(P)(U), D\log(P)(V)\ra_F, \;\; U,V \in \Sym(n),
\end{align}
where $D\log(P)$ denotes the Fr\'echet derivative of the principal logarithm $\log$ at $P$. $(\Sym^{++}(n), g_{\logE})$
is a Riemannian manifold with {\it zero} sectional curvature. There is a unique geodesic joining any pair
$A,B \in \Sym^{++}(n)$, with closed form expression
\begin{align}
	\label{equation:geodesic-logE}
	\gamma_{\logE}^{AB}(t) = \exp((1-t)\log(A) + t\log(B)).
\end{align}
The Riemannian distance between $A$ and $B$ is the length of this geodesic: 
\begin{align}
	\label{equation:distance-logE}
	d_{\logE}(A,B) = ||\log(A) - \log(B)||_F.
\end{align}

{\bf Vector space structure of $\Sym^{++}(n)$}. 
While $\Sym^{++}(n)$ is  not a vector subspace of the Euclidean space  $(\Sym(n), +, \cdot, \la, \ra_F)$, it admits the following special vector space structure.
Together with the abelian group operation $\odot$, we define the following scalar multiplication 
on $\Sym^{++}(n)$ \cite{Arsigny:2007LogE}:
\begin{align}
	\myast: \R \times \Sym^{++}(n) &\mapto \Sym^{++}(n),
	\nonumber
	\\
	\lambda \myast A &= \exp(\lambda \log(A)) = A^{\lambda}, \;\;\; \lambda \in \R.
\end{align}
Endowed with the abelian group operation $\odot$ and the scalar multiplication $\myast$,
the  vector space  axioms  can be readily verified to show that
$(\Sym^{++}(n), \odot, \myast)$ is a vector space \cite{Arsigny:2007LogE}.

{\bf Inner product space structure on $\Sym^{++}(n)$}.
On top of the vector space structure $(\Sym^{++}(n), \odot, \myast)$, 
we define the following
{\it Log-Euclidean inner product}:
\begin{align}
	\label{equation:logE-inner}
	\la A, B\ra_{\logE} = \la \log(A), \log(B)\ra_F = \trace[\log(A)\log(B)].
\end{align}
along with the corresponding {\it Log-Euclidean norm}
\begin{align}
	||A||_{\logE}^2 = \la \log(A), \log(A)\ra_{F} = \trace[\log^2(A)].
\end{align}
The axioms of inner product, namely symmetry, positivity, and linearity with respect to
the operations $(\odot, \myast)$
can be readily verified
to show that 
\begin{align}
	(\Sym^{++}(n), \odot, \myast, \la \; , \; \ra_{\logE})
\end{align}
is an inner product space,
as first discussed in \cite{Li:2013LogE}. Furthermore, the following map is an isometrical isomorphism of inner product spaces:
\begin{align}
	\log: (\Sym^{++}(n), \odot, \myast, \la \; , \; \ra_{\logE}) \mapto (\Sym(n), +, \cdot, \la \;,\; \ra_F), \; A \mapto \log(A).
\end{align}
In terms of the Log-Euclidean inner product and Log-Euclidean norm, the Log-Euclidean distance $d_{\logE}$ in Eq.\eqref{equation:distance-logE}
is expressed as
\begin{align}
	d_{\logE}(A,B) = ||\log(A) - \log(B)||_F &=||A\odot B^{-1}||_{\logE}.
\end{align}

The inner product space structure of $(\Sym^{++}(n), \odot, \myast, \la \; , \; \ra_{\logE})$ allows us to define positive definite kernels on $\Sym^{++}(n)$ using the inner product $\la\; , \; \ra_{\logE}$ and the norm $||\;||_{\logE}$. We have the following result.

\begin{theorem} The following kernels $K: \Sym^{++}(n) \times \Sym^{++}(n) \mapto \R$ are positive definite:
	\label{theorem:kernel-PD}
	\begin{align}
		K(A,B) &= (\la A,B\ra_{\logE} + c)^d = (\la \log(A), \log(B)\ra_F + c)^d, \;\;\; c \geq 0, d \in \N.
		\label{equation:kernel-logE-inner}
		\\
		K(A,B) &= \exp\left(-\frac{1}{\sigma^2} ||A\odot B^{-1}||_{\logE}^p) \right)
		\nonumber
		\\
		& = \exp\left(-\frac{1}{\sigma^2} ||\log(A) - \log(B)||^p_F\right), \;\;\sigma \neq 0, 0 < p \leq 2.
		\label{equation:Gaussian-generalized}
	\end{align}
\end{theorem}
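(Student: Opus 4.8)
The plan is to reduce both assertions to classical positive-definiteness results on a Hilbert space by transporting the problem through the isometric isomorphism $\log$. The paper has already established that $\log\colon (\Sym^{++}(n), \odot, \myast, \langle\,,\,\rangle_{\logE}) \to (\Sym(n), +, \cdot, \langle\,,\,\rangle_F)$ is an isometric isomorphism of inner product spaces, and that the finite-dimensional inner product space $(\Sym(n), \langle\,,\,\rangle_F)$ is a Hilbert space. Writing $\psi := \log$, each kernel in the statement has the form $K(A,B) = \Phi(\psi(A), \psi(B))$ for a kernel $\Phi$ on $\Sym(n)$: for \eqref{equation:kernel-logE-inner} we have $\Phi(u,v) = (\langle u,v\rangle_F + c)^d$, and for \eqref{equation:Gaussian-generalized} we have $\Phi(u,v) = \exp(-\frac{1}{\sigma^2}\|u-v\|_F^p)$. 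Since precomposing a kernel with a fixed map leaves its defining quadratic form unchanged (merely evaluated at the image points), $K$ is positive definite on $\Sym^{++}(n)$ as soon as $\Phi$ is positive definite on $\Sym(n)$, so it suffices to treat the two candidate $\Phi$'s.

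For the polynomial kernel \eqref{equation:kernel-logE-inner}, I would first note that the inner-product kernel $(u,v)\mapsto \langle u,v\rangle_F$ is positive definite, since $\sum_{j,k} c_j c_k \langle u_j, u_k\rangle_F = \|\sum_j c_j u_j\|_F^2 \geq 0$, and that the constant kernel $(u,v)\mapsto c$ is positive definite for $c \geq 0$. Their sum $\langle u,v\rangle_F + c$ is therefore positive definite. Applying the Schur product theorem --- the entrywise product of positive semidefinite matrices is positive semidefinite --- $d$ times shows that the pointwise $d$-th power $(\langle u,v\rangle_F + c)^d$ is positive definite, which is the claim for \eqref{equation:kernel-logE-inner}.

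For the generalized Gaussian kernel \eqref{equation:Gaussian-generalized}, set $\gamma := 1/\sigma^2 > 0$, so that $\Phi(u,v) = \exp(-\gamma\|u-v\|_F^p)$. By Theorem \ref{theorem:positive-negative-definite} it is enough to check that $\varphi(u,v) = \|u-v\|_F^p$ is negative definite on the Hilbert space $(\Sym(n), \langle\,,\,\rangle_F)$; equivalently, I would invoke Schoenberg's result (\cite{Schoenberg1938:Positive}, Corollary 3), already recalled in the text, stating that on a Hilbert space $\exp(-\gamma\|u-v\|^p)$ is positive definite for every $\gamma > 0$ precisely when $0 < p \leq 2$. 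Transporting back through $\psi = \log$ and using the identity $\|\log(A) - \log(B)\|_F = \|A\odot B^{-1}\|_{\logE}$ then yields the assertion for \eqref{equation:Gaussian-generalized}.

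The entire argument is carried by the isometric isomorphism $\log$, which converts both kernels into standard kernels on a Euclidean space; once this reduction is in place there is no genuine difficulty. The only point requiring more than an elementary computation is the sub-quadratic range $0 < p < 2$ of \eqref{equation:Gaussian-generalized}, where one must appeal to the full strength of Schoenberg's corollary rather than to the one-line identity available at $p = 2$ (namely, that $\sum_{j}c_j = 0$ forces $\sum_{j,k}c_jc_k\|u_j - u_k\|_F^2 = -2\|\sum_j c_j u_j\|_F^2 \leq 0$). Thus the main step is simply the reduction effected in the first paragraph.
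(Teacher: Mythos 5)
Your proposal is correct and is essentially the argument the paper intends: Theorem \ref{theorem:kernel-PD} is stated without a separate proof precisely because the preceding discussion --- the isometric isomorphism $\log\colon (\Sym^{++}(n), \odot, \myast, \la\,,\,\ra_{\logE}) \to (\Sym(n), +, \cdot, \la\,,\,\ra_F)$, the observation that positive definiteness is preserved under precomposition with a map $\psi\colon X \to \Hcal$, and Schoenberg's result (recalled after Theorem \ref{theorem:positive-negative-definite}) that $\exp(-\gamma\|u-v\|^p)$ is positive definite on a Hilbert space for all $\gamma>0$ if and only if $0 < p \le 2$ --- constitutes exactly the reduction you carry out. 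Your Schur-product argument for the polynomial kernel \eqref{equation:kernel-logE-inner} simply fills in the one standard detail that the paper's preamble leaves tacit, and it is correct as written.
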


\subsection{Kernels defined with Bregman divergences}
\label{section:kernel-Bregman}

In the previous section, we discussed  $\Sym^{++}(n)$  from the viewpoint of  a Riemannian manifold, along with the corresponding induced geodesic distance.
In this section, we consider the Bregman divergences, which are distance-like functions arising from the open convex cone structure of $\Sym^{++}(n)$.
%
In particular, we discuss the Alpha Log-Determinant (Log-Det) divergences, which are obtained based on the strictly convex function $\phi(X) = -\log\det(X)$, $X \in \Sym^{++}(n)$.

Let us first recall the concept of Bregman divergence \cite{Bregman:1967}. Let $\Omega \subset \R^n$ be a convex set and $\phi:\Omega \mapto \R$ be a differentiable and strictly convex function. 
Then it defines the following {\it divergence} function on $\Omega$:
\begin{align}
	B_{\phi}(x,y) = \phi(x) - \phi(y) - \la \grad\phi(y), x- y\ra.
\end{align}
For example, with $\Omega = \R^n$ and $\phi(x) = ||x||^2$, we obtain the squared Euclidean distance $B_{\phi}(x,y) = ||x-y||^2$.
More generally, $\phi$ defines following family
of divergences \cite{Zhang:Divergence2004}, parametrized by a parameter $\alpha \in \R$,
\begin{align}
	\label{equation:d-alpha-phi}
	d^{\alpha}_{\phi}(x,y) = \frac{4}{1-\alpha^2}\left[\frac{1-\alpha}{2}\phi(x) + \frac{1+\alpha}{2}\phi(y) - \phi\left(\frac{1-\alpha}{2}x + \frac{1+\alpha}{2}y\right)\right],
\end{align}
with $d^{\pm 1}_{\phi}$ defined as the limits of $d^{\alpha}_{\phi}$ as $\alpha \approach \pm 1$. In fact,
we have
\begin{align}
	d^{1}_{\phi}(x,y) &= \lim_{\alpha \approach 1}d^{\alpha}_{\phi}(x,y) = B_{\phi}(x,y),
	\\
	d^{-1}_{\phi}(x,y) &= \lim_{\alpha \approach -1}d^{\alpha}_{\phi}(x,y) = B_{\phi}(y,x),
\end{align}
In general, it can be readily verified that $d^{\alpha}_{\phi}$ can be expressed 
in terms of the Bregman divergence $B_{\phi}$ $\forall \alpha \in \R$, as follows:
\begin{align}
	d^{\alpha}_{\phi}(x,y) = \frac{4}{1-\alpha^2}
	\left[\frac{1-\alpha}{2}B_{\phi}\left(x, \frac{1-\alpha}{2}x + \frac{1+\alpha}{2}y\right) + \frac{1+\alpha}{2}B_{\phi}\left(y, \frac{1-\alpha}{2}x + \frac{1+\alpha}{2}y\right)\right].
\end{align}



{\bf Alpha Log-Det divergences}. Consider $\Omega = \Sym^{++}(n)$  together with the function $\phi(X) = - \log\det(X)$, $X \in \Sym^{++}(n)$.
Fan's inequality \cite{kyfan:1950} on the log-concavity of the matrix determinant function on $\Sym^{++}(n)$ states that
\begin{align}
	\label{equation:KyFan}
	\det[\alpha A + (1-\alpha)B] \geq \det(A)^{\alpha}\det(B)^{1-\alpha},
	\;\forall A,B \in \Sym^{++}(n), \; 0 \leq \alpha \leq 1.
\end{align}
For $0 < \alpha < 1$, equality occurs if and only if $A = B$. 
Thus the function
$\phi(X) = -\log\det(X)$ is strictly convex on $\Sym^{++}(n)$. Hence, based on Eq.~(\ref{equation:d-alpha-phi}), we obtain 
the parametrized family of Alpha Log-Det divergences, 
as defined in \cite{Chebbi:2012Means}
\begin{align}
	\label{equation:d-alpha-logdet}
	d^{\alpha}_{\logdet}(A,B) = \frac{4}{1-\alpha^2}\log\left[\frac{\det(\frac{1-\alpha}{2}A + \frac{1+\alpha}{2}B)}{\det(A)^{\frac{1-\alpha}{2}}\det(B)^{\frac{1+\alpha}{2}}}\right], \;\; -1 < \alpha < 1,
\end{align}
with the limiting cases $\alpha = \pm 1$, obtained via L'Hopital's rule, given by
\begin{align}
	d^{1}_{\logdet}(A,B) &= \lim_{\alpha \approach 1}d^{\alpha}_{\logdet}(A,B) = \trace(B^{-1}A - I) - \log\det(B^{-1}A),
	\label{equation:d-alpha-logdet-1}
	\\
	d^{-1}_{\logdet}(A,B) &= \lim_{\alpha \approach -1}d^{\alpha}_{\logdet}(A,B) = \trace(A^{-1}B-I) - \log\det(A^{-1}B).
	\label{equation:d-alpha-logdet+1}
\end{align}
The following properties are immediate from Fan's inequality
\begin{align}
	d^{\alpha}_{\logdet}(A,B) &\geq 0,
	\\
	d^{\alpha}_{\logdet}(A,B) &= 0 \equivalent A = B.
\end{align}
Instead of symmetry, $d^{\alpha}_{\logdet}$ satisfies the {\it dual symmetry} property 
\begin{align}
	d^{\alpha}_{\logdet}(A,B) = d^{-\alpha}_{\logdet}(B,A).
\end{align}
In particular, $d^{\alpha}_{\logdet}$ is symmetric if and only if $\alpha = 0$, that is,
\begin{align}
	d^{0}_{\logdet}(A,B) = d^{0}_{\logdet}(B,A).
\end{align}

Having defined the Alpha Log-Det divergences $d^{\alpha}_{\logdet}$ on $\Sym^{++}(n)$,
as in the previous section, a natural question that arises is whether
Gaussian-like kernels can be defined using  $d^{\alpha}_{\logdet}$. By the symmetry property of kernels, the only case that could be considered is $\alpha  = 0$.
We have the following result from \cite{Sra:NIPS2012,Sra:2016Positive}.
\begin{theorem}
	[\cite{Sra:NIPS2012,Sra:2016Positive}] 
	\label{theorem:kernel-stein}
	The kernel $K:\Sym^{++}(n) \times \Sym^{++}(n) \mapto \R$, defined by
	\begin{align}
		K(A,B) = \exp\left(-\frac{\sigma}{4} d^0_{\logdet}(A,B)\right),
	\end{align}
	is positive definite if and only if $\sigma$ satisfies
	\begin{align}
		\label{equation:sigma-range}
		\sigma \in \left\{\frac{1}{2}, 1, \ldots, \frac{n-1}{2}\right\} \cup \left\{\sigma \in \R, \sigma > \frac{n-1}{2}\right\}.
	\end{align}
\end{theorem}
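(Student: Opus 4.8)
The plan is to strip $K$ down to a pure determinant kernel and then invoke the Laplace transform theory of the symmetric cone $\Sym^{++}(n)$. Since $d^0_{\logdet}(A,B) = 4\log\left[\det\left(\tfrac{A+B}{2}\right)\big/\sqrt{\det(A)\det(B)}\right]$, a direct computation gives
\begin{align}
K(A,B) = 2^{n\sigma}\,\det(A)^{\sigma/2}\,\det(B)^{\sigma/2}\,\det(A+B)^{-\sigma}.
\end{align}
Positive definiteness is unaffected by the positive constant $2^{n\sigma}$ or by the rank-one factor $g(A)g(B)$ with $g(A)=\det(A)^{\sigma/2}$, since for $d_j := c_j g(A_j)$ one has $\sum_{j,k}c_jc_k\,g(A_j)g(A_k)\,L(A_j,A_k)=\sum_{j,k}d_jd_k\,L(A_j,A_k)$ and $c \mapsto d$ is a bijection of coefficient vectors. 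Thus $K$ is positive definite if and only if the reduced kernel $L(A,B)=\det(A+B)^{-\sigma}$ is positive definite on $\Sym^{++}(n)$, and this $L$ depends on its arguments only through $A+B$, i.e.\ $L(A,B)=\Phi(A+B)$ with $\Phi(S)=\det(S)^{-\sigma}$.

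For the sufficiency direction I would use the Riesz/Gindikin integral representation. The multivariate gamma integral
\begin{align}
\Gamma_n(\sigma) = \int_{\Sym^{++}(n)} e^{-\trace(X)}\,\det(X)^{\sigma-(n+1)/2}\,dX
\end{align}
converges for $\sigma>(n-1)/2$ and, after the substitution $X\mapsto S^{1/2}XS^{1/2}$, yields $\det(S)^{-\sigma}=\Gamma_n(\sigma)^{-1}\int e^{-\trace(SX)}\det(X)^{\sigma-(n+1)/2}\,dX$. More generally, by Gindikin's theorem the power $\det(\cdot)^{-\sigma}$ is the Laplace transform of a \emph{positive} measure $R_\sigma$ on the closed cone precisely when $\sigma$ lies in the Gindikin set $\left\{\tfrac{1}{2},1,\ldots,\tfrac{n-1}{2}\right\}\cup\left(\tfrac{n-1}{2},\infty\right)$, the discrete points corresponding to measures supported on the rank-deficient boundary strata. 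Granting $\Phi(S)=\int e^{-\trace(SX)}\,dR_\sigma(X)$, completing the square gives
\begin{align}
\sum_{j,k}c_jc_k\,\Phi(A_j+A_k) = \int_{\Sym^{++}(n)} \left(\sum_j c_j\,e^{-\trace(A_jX)}\right)^{2} dR_\sigma(X) \ \ge\ 0,
\end{align}
using $\trace((A_j+A_k)X)=\trace(A_jX)+\trace(A_kX)$ and the positivity of $R_\sigma$. This proves $L$, and hence $K$, is positive definite for every $\sigma$ in the stated set.

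The main obstacle is the necessity direction. Here I would appeal to a Bochner-type theorem for the self-dual cone $\Sym^{++}(n)$: a function $\Phi$ on the cone yields a positive definite kernel of the special form $\Phi(A+B)$ if and only if $\Phi$ is the Laplace transform of a positive measure on the dual cone. The delicate point is that positive definiteness is tested only against finitely supported configurations, so one must upgrade finite positivity to the existence of a genuine representing measure; this is where the structure of the cone and a moment/extension argument enter, and it is the step I expect to be hardest. Once this equivalence is available, the converse half of Gindikin's theorem---that $\det(\cdot)^{-\sigma}$ is \emph{not} the Laplace transform of any positive measure when $\sigma\notin\left\{\tfrac{1}{2},\ldots,\tfrac{n-1}{2}\right\}\cup\left(\tfrac{n-1}{2},\infty\right)$---forces $L(A,B)=\det(A+B)^{-\sigma}$ to lose positive definiteness, completing the characterization. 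As a sharpness cross-check near the threshold $(n-1)/2$, one can restrict to low-rank perturbations, where the singularity of the would-be Riesz measure manifests as an explicit configuration $A_1,\ldots,A_N$ with strictly negative quadratic form.
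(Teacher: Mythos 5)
Your proposal is correct and, on the only direction the paper actually proves, it is essentially the paper's argument: the same reduction stripping off the positive factor $\det(A)^{\sigma/2}\det(B)^{\sigma/2}$ to isolate $H_\sigma(A,B)=\det(A+B)^{-\sigma}$, and for $\sigma>\frac{n-1}{2}$ the same matrix-variate Gamma identity --- your ``completing the square'' against $dR_\sigma$ is literally the paper's feature map $\psi(X)(Y)=e^{-\trace(XY)}$ into $L^2(\Sym^{++}(n),\nu)$ with $d\nu(X)=\Gamma_n(\sigma)^{-1}\det(X)^{\sigma-\frac{n+1}{2}}dX$. Two differences are worth recording. First, at the discrete points $\sigma=\frac{k}{2}$, $k=1,\ldots,n-1$, the paper stays elementary: it proves $H_{1/2}$ positive definite from the scalar Gaussian integral $\int_{\R^n}e^{-y^TXy}\,dy=\pi^{n/2}\det(X)^{-1/2}$ and then obtains $H_{k/2}=(H_{1/2})^k$ because products of positive definite kernels are positive definite; you instead invoke the existence half of Gindikin's theorem (boundary-supported Riesz measures), a much heavier tool --- and note that your displayed integral should then run over the closed cone rather than $\Sym^{++}(n)$, since for these $\sigma$ the measure $R_\sigma$ charges only rank-deficient matrices (concretely, $R_{k/2}$ is the pushforward of Gaussian measure on $\R^{n\times k}$ under $T\mapsto TT^{T}$, which would let you keep the Laplace-transform picture while remaining elementary). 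Second, on necessity you and the paper are in the same position: the paper proves nothing there, citing Theorem VII.3.1 of Faraut--Kor\'anyi, which packages exactly the two ingredients you name (a Bochner/Nussbaum-type correspondence between kernels of the form $\Phi(A+B)$ on the symmetric cone and Laplace transforms of positive measures, plus the converse half of Gindikin's theorem); your flagging of the representing-measure step as the hard, unproven one is accurate, so your outline matches the paper's level of completeness while making explicit the structure that the paper's citation hides.
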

\begin{proof} 
	This theorem is a special case of Theorem VII.3.1 in \cite{Faraut1994} in the 
	general setting of Euclidean Jordan algebras and symmetric cones, with $\Sym(n)$ being the Euclidean Jordan algebra under the Jordan product $ A \circ B = \frac{1}{2}(AB + BA)$ and $\Sym^{++}(n)$ being the associated symmetric cone.
	
	The following elementary proof for the {\it if } part is based on that given in \cite{Sra:NIPS2012,Sra:2016Positive}.
	Assume that $\sigma$ satisfies \eqref{equation:sigma-range}. By definition of $d^{0}_{\logdet}$,
	\begin{align*}
		K(A,B) = \frac{\det(A)^{\frac{\sigma}{2}}\det(B)^{\frac{\sigma}{2}}}{\det(\frac{A+B}{2})^{\sigma}}.
	\end{align*}
	It thus suffices to show that the kernel function $H_{\sigma}: \Sym^{++}(n) \times \Sym^{++}(n) \mapto \R$ defined by $H_{\sigma}(X_1,X_2) = \det(X_1 + X_2)^{-\sigma}$ is positive definite with $\sigma$ as given in \eqref{equation:sigma-range}. For $X \in  \Sym^{++}(n)$, we have the Gaussian integral
	\begin{align*}
		\int_{\R^n}e^{-y^TXy}dy = \pi^{n/2}\det(X)^{-1/2}.
	\end{align*}
	Define the feature map $\varphi: \Sym^{++}(n) \mapto L^2(\R^n)$ by $\varphi(X)(y) = \frac{1}{\pi^{n/4}}e^{-y^TXy}$. Then
	for any pair $X_1,X_2 \in \Sym^{++}(n)$,
	\begin{align*}
		\la \varphi(X_1), \varphi(X_2)\ra_{L^2(\R^n)} = \frac{1}{\pi^{n/2}}\int_{\R^n}e^{-y^T(X_1+X_2)y}dy = \det(X_1 + X_2)^{-1/2}.
	\end{align*}
	It follows that $H_{1/2}$ is positive definite. Consequently, $H_{\sigma}$ is positive definite whenever $\sigma = \frac{k}{2}$ $\forall k \in \N$.
	
	For $\sigma > \frac{n-1}{2}$, $\sigma \in \R$, consider the matrix-variate Gamma function (see e.g. \cite{Mathai2022}, Chapter 5, Eq.(5.1.2))
	\begin{align}
		\Gamma_n(\sigma) =\int_{\Sym^{++}(n)}
		e^{-\trace(X)}\det(X)^{\sigma - \frac{n+1}{2}}dX, \; \;
		\sigma > \frac{n-1}{2},
	\end{align}
	along with the following identity (\cite{Mathai2022}, Eq.(5.2.2)), where $\forall B \in \Sym^{++}(n)$,
	\begin{align}
		\det(B) ^{-\sigma}=\frac{1}{\Gamma_n(\sigma)}\int_{\Sym^{++}(n)}
		e^{-\trace(BX)}\det(X)^{\sigma - \frac{n+1}{2}}dX, \; \;
		\sigma > \frac{n-1}{2}.
	\end{align}
	Define the feature map $\psi: \Sym^{++}(n) \mapto L^2(\Sym^{++}(n), \nu)$, where $d\nu(X) =  \frac{1}{\Gamma_n(\sigma)}\det(X)^{\sigma - \frac{n+1}{2}}dX$, by $\psi(X)(Y) = e^{-\trace(XY)}$. Then for any pair $X_1,X_2 \in \Sym^{++}(n)$,
	\begin{align*}
		\la \psi(X_1), \psi(X_2)\ra_{L^2(\Sym^{++}(n),\nu)} = \det(X_1 + X_2)^{-\sigma}.
	\end{align*}
	It thus follows that $H_{\sigma}$ is positive definite $\forall \sigma > \frac{n-1}{2}$.
\end{proof}

\subsection{Kernels defined with the Log-Hilbert-Schmidt metric}
\label{section:kernel-log-HS} 

In this section, we describe the  generalization of the Log-Euclidean metric
on $\Sym^{++}(n)$ and its corresponding kernels, as described in Section 
\ref{section:kernel-logE},
to the infinite-dimensional setting of positive definite Hilbert-Schmidt operators on a Hilbert space.
This generalization was first carried out in 
\cite{Minh:NIPS2014}.

Throughout the following, let $(\Hcal, \la, \ra)$ be a real, separable Hilbert space, with $\dim(\Hcal) = \infty$ unless explicitly stated otherwise.
For two separable Hilbert spaces $\Hcal_1, \Hh_2$, let $\Lcal(\Hcal_1,\Hcal_2)$ denote the Banach space of bounded linear operators from $\Hcal_1$ to $\Hcal_2$, with operator norm $||A||=\sup_{||x||_1\leq 1}||Ax||_2$.
For $\Hcal_1=\Hcal_2 = \Hcal$, we use the notation $\Lcal(\Hcal)$.
Let $\Sym(\Hcal) \subset \Lcal(\Hcal)$ be the set of bounded, self-adjoint linear operators on $\Hcal$. Let $\Sym^{+}(\Hcal) \subset \Sym(\Hcal)$ be the set of
self-adjoint, {\it positive} operators on $\Hcal$, i.e. $A \in \Sym^{+}(\Hcal) \equivalent A^{*}=A, \la Ax,x\ra \geq 0 \forall x \in \Hcal$. 
Let $\Sym^{++}(\Hcal)\subset \Sym^{+}(\Hcal)$ be the set of self-adjoint, {\it strictly positive} operators on $\Hcal$,
i.e $A \in \Sym^{++}(\Hcal) \equivalent A^{*}=A, \la x, Ax\ra > 0$ $\forall x\in \Hcal, x \neq 0$.
We write $A \geq 0$ for $A \in \Sym^{+}(\Hcal)$ and $A > 0$ for $A \in \Sym^{++}(\Hcal)$.
If $\gamma I+A > 0$, where $I$ is the identity operator,$\gamma \in \R,\gamma > 0$, then $\gamma I+A$ is also invertible, in which case it is called
{\it positive definite}. \color{black} In general, $A \in  \Sym(\Hcal)$ is said to be positive definite if $\exists M_A > 0$ such that $\la x, Ax\ra \geq M_A||x||^2$ $\forall x \in \Hcal$ - this condition is equivalent to $A$ being both strictly positive and invertible, see, e.g., \cite{Petryshyn:1962}.\color{black}
The Hilbert space $\HS(\Hcal_1,\Hcal_2)$ of Hilbert-Schmidt operators from $\Hcal_1$ to $\Hcal_2$ is defined by 
(see, e.g., \cite{Kadison:1983})
$\HS(\Hcal_1, \Hcal_2) = \{A \in \Lcal(\Hcal_1, \Hcal_2):||A||^2_{\HS} = \trace(A^{*}A) =\sum_{k=1}^{\infty}||Ae_k||_2^2 < \infty\}$,
for any orthonormal basis $\{e_k\}_{k \in \Nbb}$ in $\Hcal_1$,
with inner product $\la A,B\ra_{\HS}=\trace(A^{*}B)$. For $\Hcal_1 = \Hcal_2 = \Hcal$, we write $\HS(\Hcal)$. 

We now seek to generalize the expression $||\log(A) - \log(B)||_F$, $A,B \in \Sym^{++}(n)$, to the setting where $A,B$ are self-adjoint positive Hilbert-Schmidt operators on a separable Hilbert space. We first note the following {\it two crucial differences between the finite and infinite-dimensional settings}.

(i) Assume that $1\leq \dim(\Hcal) \leq \infty$. Assume that $A \in \Sym(\Hcal)$ is compact and strictly positive.
Then $A$ has a countable spectrum of positive eigenvalues $\{\lambda_k(A)\}_{k=1}^{\dim(\Hcal)}$, 
counting multiplicities, with $\lim\limits_{k \approach \infty}\lambda_k(A) = 0$ if $\dim(\Hcal) = \infty$.
If $\{\phi_k(A)\}_{k=1}^{\dim(\Hcal)}$ denote the corresponding normalized eigenvectors, then 
$A$ admits the spectral decomposition
\begin{align}
	A = \sum_{k=1}^{\dim(\Hcal)}\lambda_k(A) \phi_k(A) \otimes \phi_k(A),
\end{align}
where $\phi_k(A) \otimes \phi_k(A): \Hcal \mapto \Hcal$ is 
defined by
$
(\phi_k(A) \otimes \phi_k(A))w = \la w, \phi_k(A)\ra \phi_k(A)$, $w \in \Hcal.
$
The principal logarithm of $A$ is then given by
\begin{align}\label{equation:logdef}
	\log(A) = \sum_{k=1}^{\dim(\Hcal)}\log(\lambda_k(A)) \phi_k(A) \otimes \phi_k(A).
\end{align}
Clearly, $\log(A)$ is bounded if and only if $\dim(\Hcal) < \infty$, 
since for $\dim(\Hcal) = \infty$, we have $\lim\limits_{k\approach \infty}\log(\lambda_k(A)) = - \infty$.
Thus,
when $\dim(\Hcal) = \infty$, the condition that $A$ be strictly positive is {\it not} sufficient 
for $\log(A)$ to be bounded. This problem is resolved by considering the {\it regularized} or {\it unitized} operator
$A + \gamma I$, $\gamma \in \R$, $\gamma > 0$, which is {\it positive definite}, so that the following operator is always bounded:
\begin{align}\label{equation:logdef-unitized}
	\log(A+\gamma I) = \sum_{k=1}^{\infty}\log(\lambda_k(A) + \gamma) \phi_k(A) \otimes \phi_k(A).
\end{align}


(ii) The generalization of the Frobenius norm $||\;||_F$ to the Hilbert space setting is the Hilbert-Schmidt norm $||\;||_{\HS}$. 
Consider now the operators $A + \gamma I > 0$, $B + \mu I > 0$, with $A,B \in \HS(\Hcal)$. The expression
\begin{align}
	||\log(A+\gamma I) - \log(B + \mu I)||_{\HS}
\end{align} 
is generally infinite, however, since the identity operator $I$ is not Hilbert-Schmidt when $\dim(\Hcal) = \infty$, with 
$||I||_{\HS} = \infty$. Thus, for $A=B = 0$,  with $\gamma \neq \mu > 0$, we have $||\log(\gamma I)  -\log(\mu I)||_{\HS} = |\log(\gamma) - \log(\mu)|\;||I||_{\HS} = \infty$.
This problem is fully resolved by enlarging the set of Hilbert-Schmidt operators to include the identity operator, via the {\it extended Hilbert-Schmidt norm} and {\it inner product}, as follows.

{\bf Extended Hilbert-Schmidt operators}. 
In \cite{Larotonda:2007}, the author considered the following set of {\it extended}, or {\it unitized}, Hilbert-Schmidt operators
\begin{align}
	\HS_X(\Hcal) = \{A + \gamma I: A \in \HS(\Hcal), \gamma \in \R\}.
\end{align}
This set is a Hilbert space under the {\it extended Hilbert-Schmidt inner product}, under which the Hilbert-Schmidt and scalar operators are orthogonal, 
\begin{align}
	\la A+\gamma I, B + \mu I\ra_{\HS_X} = \la A,B\ra_{\HS} + \gamma\mu.
\end{align}
The corresponding {\it extended Hilbert-Schmidt norm} is given by
\begin{align}
	||A+\gamma I||^2_{\HS_X} = ||A||^2_{\HS} + \gamma^2.
\end{align}
In particular, $||I||_{\HS_X} = 1$, in contrast to $||I||_{\HS} = \infty$.

{\bf The Hilbert manifold of positive definite Hilbert-Schmidt operators}. 
Consider the following subset of {\it (unitized) positive definite Hilbert-Schmidt operators}

\newcommand{\PC}{\mathit{PC}}
\begin{align}
	\PC_2(\Hcal) = \{A+\gamma I > 0: A \in \Sym(\Hcal) \cap\HS(\Hcal), \gamma \in \R\} \subset \HS_X(\Hcal).
\end{align}

The set $\PC_2(\Hcal)$ is an open subset of the  Hilbert space
\begin{align}
	\Sym(\Hcal) \cap \HS_X(\Hcal) = \{A + \gamma I: A = A^{*},  A \in \HS(\Hcal), \gamma \in \R\}.
\end{align}
Thus  $\PC_2(\Hcal)$ is a Hilbert manifold modeled on $\Sym(\Hcal) \cap \HS_X(\Hcal)$.
If $(A+\gamma I) \in \PC_2(\Hcal)$, then it has a countable spectrum $\{\lambda_k(A) +\gamma\}_{k=1}^{\infty}$ satisfying $\lambda_k + \gamma \geq  M_A$ for some constant $M_A >0$,
and
$\log(A+\gamma I)$ as defined by (\ref{equation:logdef}) is well-defined and bounded,
with $\log(A + \gamma I) \in \Sym(\Hcal) \cap \HS_X(\Hcal)$.

The operations $(\odot, \myast)$ on $\Sym^{++}(n)$
as defined in 
Section \ref{section:kernel-logE}, 
can be readily generalized to $\PC_2(\Hcal)$ as follows 
\cite{Minh:NIPS2014}.
First, the commutative Lie group multiplication operation $\odot$ on $\PC_2(\Hcal)$ is defined by
\begin{align}
	\odot: \PC_2(\Hcal)  \times \PC_2(\Hcal) &\mapto \PC_2(\Hcal)
	\nonumber
	\\
	(A+\gamma I) \odot (B+\mu I) &= \exp(\log(A+\gamma I) + \log(B + \mu I)).
\end{align}
The Log-Hilbert-Schmidt metric is then a bi-invariant metric on $(\PC_2(\Hcal), \odot)$.
Let us choose the inner product on $T_I(\PC_2(\Hcal)) \cong \Sym(\Hcal) \cap\HS_X(\Hcal)$ to be
the extended Hilbert-Schmidt inner product. Then we have the following generalization of the Log-Euclidean metric in Section \ref{section:kernel-logE}
\begin{theorem}
	\label{theorem:logHS-metric}
	The Log-Hilbert-Schmidt metric is given by, $\forall P \in \PC_2(\Hcal)$,
	$\forall U, V \in \Sym(\Hcal)\cap \HS_X(\Hcal)$,
	\begin{align}
		\label{equation:metric-logHS}
		g_{\logHS}(P)(U,V) = \la D\log(P)(U), D\log(P)(V)\ra_{\HS_X},
	\end{align}
	where $D\log(P)$ denotes the Fr\'echet derivative of the principal logarithm $\log$ at $P$. $(\PC_2, g_{\logHS})$
	is an infinite-dimensional Riemannian manifold with {\it zero} sectional curvature. There is a unique geodesic joining any pair
	$(A+\gamma I), (B+\mu I) \in \PC_2(\Hcal)$, with closed form expression
	\begin{align}
		\label{equation:geodesic-logHS}
		\gamma_{\logHS}^{(A+\gamma I), (B+\mu I)}(t) = \exp((1-t)\log(A+\gamma I) + t\log(B+\mu I)).
	\end{align}
	The induced Riemannian distance between $(A+\gamma I)$ and $(B+\mu I)$ is the length of this geodesic: 
	\begin{align}
		\label{equation:distance-logHS}
		d_{\logHS}[(A+\gamma I),(B+\mu I)] = ||\log(A+\gamma I) - \log(B+\mu I)||_{\HS_X}.
	\end{align}
\end{theorem}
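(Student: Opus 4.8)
The plan is to reduce every assertion of the theorem to the single structural fact that the principal logarithm
$$\log: (\PC_2(\Hcal), \odot, \myast) \lra (\Sym(\Hcal) \cap \HS_X(\Hcal), +, \cdot)$$
is an \emph{isomorphism of vector spaces} whose inverse is $\exp$, exactly as in the finite-dimensional Log-Euclidean situation of Section \ref{section:kernel-logE}. The preceding discussion already guarantees that for $P \in \PC_2(\Hcal)$ the operator $\log(P)$ is well-defined, self-adjoint, and lies in $\Sym(\Hcal)\cap\HS_X(\Hcal)$, so $\log$ is a bijection onto this Hilbert space with smooth inverse $\exp$. Equipping the target Hilbert space with the constant (hence flat) Riemannian metric induced by $\la\,,\,\ra_{\HS_X}$, I would prove that $\log$ is an isometry onto this flat model and then transport flatness, geodesics, and distance back through $\exp$.

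First I would verify the metric formula. Since $A \odot B = \exp(\log A + \log B)$, the map $\log$ carries $\odot$ to ordinary addition, so $(\PC_2(\Hcal), \odot)$ is an abelian Lie group isomorphic to the additive group of $\Sym(\Hcal)\cap\HS_X(\Hcal)$. A bi-invariant metric on an abelian Lie group is obtained by translating its value at the identity, and under the isomorphism $\log$ this is precisely the pullback of the translation-invariant flat metric on the model space. Because $\log$ linearizes at the identity to $D\log(I) = \Id$, the pullback metric restricts at $T_I\PC_2(\Hcal)$ to the chosen inner product $\la\,,\,\ra_{\HS_X}$; evaluating the pullback at a general point $P$ and using the chain rule for the Fr\'echet derivative gives
$$g_{\logHS}(P)(U,V) = \la D\log(P)(U), D\log(P)(V)\ra_{\HS_X},$$
which is the stated expression.

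The geometric consequences then follow formally from the isometry. The model Hilbert space carries a constant inner product, so its Levi-Civita connection is trivial and its sectional curvature vanishes identically; since curvature is preserved under isometries, $(\PC_2(\Hcal), g_{\logHS})$ has zero sectional curvature. Geodesics of the flat model are straight line segments, so the unique geodesic between $\log(A+\gamma I)$ and $\log(B+\mu I)$ is $t \mapsto (1-t)\log(A+\gamma I) + t\log(B+\mu I)$; applying the isometry $\exp$ yields the claimed closed form $\gamma_{\logHS}^{(A+\gamma I),(B+\mu I)}(t) = \exp((1-t)\log(A+\gamma I) + t\log(B+\mu I))$, with uniqueness inherited from the uniqueness of straight segments. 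Finally, an isometry preserves the lengths of curves, so the Riemannian distance equals the flat distance between the images of the endpoints, namely $\|\log(A+\gamma I) - \log(B+\mu I)\|_{\HS_X}$.

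The main obstacle is the infinite-dimensional analysis hidden in the word \emph{isometry}. In contrast to the finite-dimensional case, I must justify that $\log$ is a genuine \emph{smooth} diffeomorphism of Hilbert manifolds: that the operator logarithm is Fr\'echet differentiable on $\PC_2(\Hcal)$ with values in $\Sym(\Hcal)\cap\HS_X(\Hcal)$, that each $D\log(P)$ is a bounded linear isomorphism onto the model tangent space, and that the functional calculus interacts correctly with the splitting of $\HS_X(\Hcal)$ into its Hilbert-Schmidt and scalar parts. The spectral lower bound $\lambda_k(A)+\gamma \ge M_A > 0$ available on $\PC_2(\Hcal)$ is what keeps $\log$ and $D\log(P)$ bounded and invertible; verifying these continuity and differentiability statements carefully --- rather than the purely algebraic transport of geodesics and distance --- is where the real work lies.
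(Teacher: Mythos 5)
Your proposal is correct and takes essentially the same route as the paper: the paper states this theorem without proof (citing \cite{Minh:NIPS2014}), but immediately after it records exactly your key structural fact, namely that $\log: (\PC_2(\Hcal), \odot, \myast, \la \; ,\;\ra_{\logHS}) \mapto (\Sym(\Hcal)\cap \HS_X(\Hcal), +, \cdot, \la \;,\;\ra_{\HS_X})$ is an isometrical isomorphism onto a flat Hilbert space, from which the metric formula, zero sectional curvature, the geodesic expression, and the distance formula are all transported, just as you argue. Your closing paragraph also correctly identifies where the genuine work lies in the infinite-dimensional setting --- Fr\'echet differentiability of the operator logarithm on $\PC_2(\Hcal)$, invertibility of $D\log(P)$, and compatibility of the functional calculus with the splitting of $\HS_X(\Hcal)$, all resting on the spectral lower bound $\lambda_k(A)+\gamma \geq M_A > 0$ --- which is precisely the content supplied by the cited reference.
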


For any pair of operators $(A+\gamma I), (B+ \mu I) \in \PC_2(\Hcal)$, the distance $d_{\logHS}[(A+\gamma I), (B+ \mu I)] = ||\log(A+\gamma I) - \log(B+ \mu I)||_{\eHS}$
is always finite. Furthermore, when $\dim(\Hcal) = \infty$, by the orthogonality of the scalar and Hilbert-Schmidt operators, we have the decomposition
\begin{align}
	\label{equation:distance-logHS-decomp}
	||\log(A+\gamma I) - \log(B+ \mu I)||_{\eHS}^2 = \left\|\log\left(\frac{A}{\gamma} + I\right) - \log\left(\frac{B}{\mu} + I\right)\right\|^2_{\HS}  + \left(\log\frac{\gamma}{\mu}\right)^2.
\end{align}
In particular, for $A= B = 0$, Eq.~(\ref{equation:distance-logHS-decomp}) gives
\begin{align}
	d_{\logHS}[\gamma I, \mu I] = ||\log(\gamma I) - \log(\mu I)||_{\eHS} = |\log(\gamma/\mu)|.
\end{align}
Thus the second term on the right hand side of Eq.~(\ref{equation:distance-logHS-decomp}) is 
precisely the squared Log-Hilbert-Schmidt distance between the scalar operators $\gamma I$ and $\mu I$
(this distance would be infinite if measured in the Hilbert-Schmidt norm $||\;||_{\HS}$). 

{\bf Vector space structure of $\PC_2(\Hcal)$}. 
Together with the group operation $\odot$, we define the following scalar multiplication 
on $\PC_2(\Hcal)$
\begin{align}
	\myast: \R \times \PC_2(\Hcal) &\mapto \PC_2(\Hcal),
	\nonumber
	\\
	\lambda \myast (A+\gamma I) &= \exp(\lambda \log(A+\gamma I)) = (A+\gamma I)^{\lambda}, \;\;\; \lambda \in \R.
\end{align}
Endowed with the commutative group multiplication $\odot$ and the scalar multiplication $\myast$,
the  vector space axioms  can be readily verified to show that
$(\PC_2(\Hcal), \odot, \myast)$ is a vector space.

{\bf Hilbert space structure on $\PC_2(\Hcal)$}.
On top of the vector space structure $(\PC_2(\Hcal), \odot, \myast)$, 
we define the following
{\it Log-Hilbert-Schmidt inner product} on $(\PC_2(\Hcal), \odot, \myast)$ by
\begin{align}
	\label{equation:logHS-inner}
	\la (A+\gamma I), (B+ \mu I)\ra_{\logHS} = \la \log(A+\gamma I), \log(B+\mu I)\ra_{\eHS},
\end{align}
along with the corresponding {\it Log-Hilbert-Schmidt norm}
\begin{align}
	||A+\gamma I||_{\logHS}^2 = \la \log(A+\gamma I), \log(A+\gamma I)\ra_{\eHS}.
\end{align}
The axioms of inner product, namely symmetry, positivity, and linearity with respect to
the operations $(\odot, \myast)$
can be verified (see \cite{Minh:NIPS2014})
to show that 
\begin{align}
	(\PC_2(\Hcal), \odot, \myast, \la \; , \; \ra_{\logHS})
\end{align}
is a complete inner product space, that is, a Hilbert space.
This Hilbert space structure was first discussed in \cite{Minh:NIPS2014} and generalizes the 
finite-dimensional inner product space
\begin{align*}
	(\Sym^{++}(n), \odot, \myast, \la \; , \;\ra_{\logE})
\end{align*}
in Section \ref{section:kernel-logE}.
Also generalizing from  Section \ref{section:kernel-logE}, 
the map
\begin{align}
	\label{equation:iso-map-infinite}
	\log: (\PC_2(\Hcal), \odot, \myast, \la \; ,\;\ra_{\logHS}) \mapto (\Sym(\Hcal)\cap \HS_X(\Hcal), +, \cdot, \la \;,\;\ra_{\eHS})
\end{align}
is an isometrical isomorphism of Hilbert spaces, where the operations
$(+, \cdot)$ are the standard addition and scalar multiplication operations, 
respectively.

In terms of the Log-Hilbert-Schmidt inner product and Log-Hilbert-Schmidt norm, the Log-Hilbert-Schmidt distance $d_{\logHS}$ in Eq.~(\ref{equation:distance-logHS}) is expressed as
\begin{align}
	d_{\logHS}(A+\gamma I),(B + \mu I)] &= ||\log(A+\gamma I) - \log(B + \mu I)||_{\eHS}
	\nonumber
	\\
	&=||(A+\gamma I)\odot (B+\mu I)^{-1}||_{\logHS}
	\\
	&= \sqrt{\la (A+\gamma I)\odot (B+\mu I)^{-1}, (A+\gamma I)\odot (B+\mu I)^{-1}\ra_{\logHS}}.
	\nonumber
\end{align}

The Hilbert space structure of $(\PC_2(\Hcal), \odot, \myast, \la \; , \; \ra_{\logHS})$ allows us
to define positive definite kernels on $\PC_2(\Hcal)$ using the inner product 
$\la \; , \;\ra_{\logHS}$ and the norm $||\;||_{\logHS}$. The following are 
the infinite-dimensional generalizations of the Log-Euclidean kernels defined in Section \ref{section:kernel-logE}, see
\cite{Minh:NIPS2014}:

\begin{theorem} 
	\label{theorem:kernel-PD-logHS}
	The following kernels $K: \PC_2(\Hcal) \times \PC_2(\Hcal) \mapto \R$ are positive definite:
	\begin{align}
		K[(A+\gamma I), (B+\mu I)] &= (\la (A+\gamma I), (B+\mu I)\ra_{\logHS} + c)^d, \;\; c \geq 0, \; d \in \N
		\nonumber
		\\
		&=(c + \la \log(A+\gamma I), \log(B+ \mu I)\ra_{\eHS})^d, 
		\label{equation:logHS-kernel-inner}
	\end{align}
	\begin{align}
		K[(A+\gamma I), (B+\mu I)] &= \exp\left(-\frac{||(A+\gamma I) \odot (B+ \mu I)^{-1}||_{\logHS}^p}{\sigma^2}\right) 
		\nonumber
		\\
		& = \exp\left(-\frac{||\log(A+\gamma I) - \log(B+ \mu I)||_{\eHS}^p}{\sigma^2}\right), 
		\label{equation:logHS-kernel-distance}
	\end{align}
	for $\sigma \neq 0$, $0 < p \leq 2$.
\end{theorem}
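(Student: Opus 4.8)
The plan is to transport both kernels through the isometric isomorphism of Hilbert spaces
$$\log: (\PC_2(\Hcal), \odot, \myast, \la\,,\,\ra_{\logHS}) \to (\Sym(\Hcal)\cap\HS_X(\Hcal), +, \cdot, \la\,,\,\ra_{\eHS})$$
established in \eqref{equation:iso-map-infinite}, and then to reduce both positive-definiteness claims to standard facts on a genuine Hilbert space. Writing $\phi(X) := \log(X)$ for $X = A+\gamma I \in \PC_2(\Hcal)$, the map $\phi$ takes values in the real Hilbert space $\mathcal{K} := \Sym(\Hcal)\cap\HS_X(\Hcal)$ equipped with $\la\,,\,\ra_{\eHS}$, and by the definition of the Log-Hilbert-Schmidt inner product and the distance identity \eqref{equation:distance-logHS} we have $\la X, Y\ra_{\logHS} = \la \phi(X), \phi(Y)\ra_{\eHS}$ and $\|X \odot Y^{-1}\|_{\logHS} = \|\phi(X)-\phi(Y)\|_{\eHS}$. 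Thus $\phi$ plays the role of a feature map into $\mathcal{K}$, and positive definiteness of each kernel on $\PC_2(\Hcal)$ will follow from positive definiteness of the corresponding kernel on $\mathcal{K}$ by precomposition, since the defining inequality only involves the finitely many points $\phi(X_1),\dots,\phi(X_N)$.

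For the polynomial kernel \eqref{equation:logHS-kernel-inner}, first I would observe that on any real Hilbert space the inner-product kernel $(u,v)\mapsto \la u,v\ra_{\eHS}$ is positive definite, because $\sum_{j,k}c_jc_k\la u_j,u_k\ra_{\eHS} = \|\sum_j c_j u_j\|_{\eHS}^2 \ge 0$. The constant kernel $(u,v)\mapsto c$ is positive definite for $c\ge 0$, since $\sum_{j,k}c_jc_k c = c(\sum_j c_j)^2 \ge 0$. Their sum is therefore positive definite, and by the Schur product theorem the $d$-fold Hadamard product $(\la u,v\ra_{\eHS}+c)^d$ is positive definite for every $d \in \N$. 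Precomposing with $\phi$ then yields the claim for \eqref{equation:logHS-kernel-inner}.

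For the Gaussian-type kernel \eqref{equation:logHS-kernel-distance}, I would use the distance identity above to rewrite it as $\exp(-\sigma^{-2}\|\phi(X)-\phi(Y)\|_{\eHS}^p)$, which is exactly the Gaussian-type kernel associated to the feature map $\phi$ into the Hilbert space $\mathcal{K}$. For $p = 2$ this is positive definite for every $\sigma \neq 0$ by the negative definiteness of the squared Hilbert-space distance together with Theorem \ref{theorem:positive-negative-definite}, precisely as in the discussion following Theorem \ref{theorem:Gaussian-kernel-metric-space}. For the full range $0 < p \le 2$ I would invoke Schoenberg's criterion (\cite{Schoenberg1938:Positive}, Corollary 3), that $\exp(-\gamma\|u-v\|^p)$ is positive definite on a Hilbert space for all $\gamma > 0$ exactly when $0 < p \le 2$; applying it to $\gamma = \sigma^{-2} > 0$ and precomposing with $\phi$ gives \eqref{equation:logHS-kernel-distance}.

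The substantive content has already been front-loaded into the construction of $\PC_2(\Hcal)$, so I do not expect a genuine obstacle here; the step requiring the most care is simply verifying that the reduction via $\log$ is legitimate. Concretely, one must be sure that $\log(A+\gamma I)$ is a bounded operator lying in $\mathcal{K}$ (guaranteed by the uniform spectral lower bound $\lambda_k(A)+\gamma \ge M_A > 0$), that $\HS_X(\Hcal)$ is a complete inner product space, and that $\log$ is a genuine isometric isomorphism onto $\mathcal{K}$ carrying the two identities above. All of these are established earlier in the section. Granting them, the only external analytic input is Schoenberg's $p$-power criterion, and everything else is the elementary closure of the class of positive-definite kernels under nonnegative sums and Hadamard products, pulled back through the logarithm.
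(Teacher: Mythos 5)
Your proposal is correct and takes essentially the same route as the paper: the whole point of establishing the Hilbert space structure $(\PC_2(\Hcal), \odot, \myast, \la \; , \; \ra_{\logHS})$ and the isometric isomorphism $\log$ onto $(\Sym(\Hcal)\cap \HS_X(\Hcal), \la\;,\;\ra_{\eHS})$ is exactly to pull the kernels back through this feature map and invoke the standard facts recalled at the start of Section \ref{sec:geomkernel} (positive definiteness of inner-product and constant kernels, closure under Schur products, and Schoenberg's criterion for $0 < p \leq 2$). Nothing essential is missing from your reduction.
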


Theorem \ref{theorem:kernel-PD-logHS} thus generalizes Theorem \ref{theorem:kernel-PD}
to the infinite-dimensional setting. In particular, for $\Hcal = \R^n$, $\PC_2(\Hcal) = \Sym^{++}(n)$,
$\gamma = \mu = 0$, and $A,B \in \Sym^{++}(n)$, we recover Theorem \ref{theorem:kernel-PD}.

{\bf The RKHS setting}
We now present a concrete setting, namely that of RKHS covariance operators, which is of particular interest computationally and practically, since many quantities
admit closed forms via kernel Gram matrices which can be efficiently computed.
This setting has been applied in problems in machine learning and computer vision, see e.g.
\cite{ZhouChellapa:PAMI2006ProbDistance,
Minh:NIPS2014,Minh:Covariance2018}.

In the following, we assume that $\Xcal$ is a complete separable metric space,
{$K: \Xcal \times \Xcal \mapto \R$} is a continuous positive definite kernel,
$\rho$
is a Borel probability measure on $\Xcal$, such that
$\int_{\Xcal}K(x,x)d\rho(x) < \infty$.
Let $\Hcal(K)$ be the reproducing kernel Hilbert space (RKHS) induced by {$K$}, then $\Hcal(K)$ is separable (\cite{Steinwart:SVM2008}, Lemma 4.33).
Let {$\Phi: \Xcal \mapto \Hcal(K)$} be the corresponding canonical feature map
$\Phi(x) = K_x$, {where  $K_x:\Xcal \mapto \R$ is defined by } $K_x(y) = K(x,y) \forall y \in \Xcal$.
Then $K(x,y) = \la \Phi(x), \Phi(y)\ra_{\Hcal(K)}$ $\forall (x,y) \in \Xcal \times \Xcal$
and the probability measure $\rho$ satisfies 
\begin{align}
	\int_{\Xcal}||\Phi(x)||_{\Hcal(K)}^2d\rho(x) = \int_{\Xcal}K(x,x)d\rho(x) < \infty.
\end{align}
The following RKHS mean vector $\mu_{\Phi} \in \Hcal(K)$ and RKHS covariance operator {$C_{\Phi}:\Hcal(K) \mapto \Hcal(K)$} induced by the feature map $\Phi$ are then well-defined:
\begin{align}
	\mu_{\Phi} &= \mu_{\Phi,\rho} = \int_{\Xcal}\Phi(x)d\rho(x) \in \Hcal(K), \;\;\;
	\\
	C_{\Phi} &= C_{\Phi,\rho} = \int_{\Xcal}(\Phi(x)-\mu_{\Phi})\otimes (\Phi(x)-\mu_{\Phi})d\rho(x).
\end{align}
Here $C_{\Phi}$ is a positive, trace class operator on $\Hcal(K)$.
Let {$\Xbf =[x_1, \ldots, x_m]$,$m \in \Nbb$,} be a data matrix randomly sampled from {$\Xcal$} according to
$\rho$, where {$m \in \Nbb$} is the number of observations.
The feature map {$\Phi$} on {$\Xbf$} 
defines
the bounded linear operator
$\Phi(\Xbf): \R^m \mapto \Hcal(K), \Phi(\Xbf)\b = \sum_{j=1}^mb_j\Phi(x_j) , \b \in \R^m$.
The corresponding RKHS empirical mean vector and RKHS covariance operator for {$\Phi(\Xbf)$}
are defined to be
\begin{align}
	\mu_{\Phi(\Xbf)} &= \frac{1}{m}\sum_{j=1}^m\Phi(x_j) = \frac{1}{m}\Phi(\Xbf)\myone_m,
	\\
	C_{\Phi(\Xbf)} &= \frac{1}{m}\sum_{j=1}^m(\Phi(x_j) - \mu_{\Phi(\Xbf)})\otimes(\Phi(x_j) - \mu_{\Phi(\Xbf)})
	\\
	& = \frac{1}{m}\Phi(\Xbf)J_m\Phi(\Xbf)^{*}: \Hcal(K) \mapto \Hcal(K),
	\label{equation:covariance-operator}
\end{align}
where $J_m = I_m -\frac{1}{m}\myone_m\myone_m^T,\myone_m = (1, \ldots, 1)^T \in \R^m$, is the centering matrix.

For concreteness, consider the task of image classification in computer vision, see e.g.  \cite{ZhouChellapa:PAMI2006ProbDistance,
	Minh:NIPS2014,Minh:Covariance2018}.
In this setting, $\Xbf$ is a data matrix of features obtained from an image and $C_{\Phi(\Xbf)}$ is a representation of that image, the so-called {\it covariance operator representation}.
This representation gives rise to
powerful nonlinear algorithms with
substantial improvements over finite-dimensional covariance matrices, which are a special case of the RKHS formulation when $K$ is the linear kernel, i.e. $K(x,y) = \la x, y\ra$ on $\R^n \times \R^n$.
With each image being represented by an RKHS covariance operator, for the task of image classification it is necessary to have a measure of similarity/dissimilarity between them.

Let $\rho_1,\rho_2$ be two Borel probability measures on $\Xcal$ satisfying $\int_{\Xcal}K(x,x)d\rho_i(x) < \infty$, $i=1,2$.
Let $\Xbf^i = (x^i_j)_{j=1}^{m_i}$, $i=1,2$, be randomly sampled from $\Xcal$ according to $\rho_i$.
%
Let $\mu_{\Phi(\Xbf^1)}, \mu_{\Phi(\Xbf^2)}$ and $C_{\Phi(\Xbf^1)}$, $C_{\Phi(\Xbf^2)}$
be the corresponding mean vectors and covariance operators induced by
the kernel 
$K$, respectively.
Define the following
finite Gram matrices:
{
\begin{align}
	&K[\Xbf^1] = \Phi(\Xbf^1)^{*}\Phi(\Xbf^1) \in \R^{m_1\times m_1},\;K[\Xbf^2] = \Phi(\Xbf^2)^{*}\Phi(\Xbf^2)\in \R^{m_2\times m_2}, 
	\\
	&K[\Xbf^1,\Xbf^2] = \Phi(\Xbf^1)^{*}\Phi(\Xbf^2) \in \R^{m_1 \times m_2},
	\\
	&(K[\Xbf^1])_{jk} = K(x_j^1,x_k^1), (K[\Xbf^2])_{jk} = K(x_j^2,x_k^2),(K[\Xbf^1,\Xbf^2])_{jk} = K(x^1_j, x^2_k).
\end{align}
}
Let $\gamma_i >0$, $i=1,2$, be fixed. Then the Log-Hilbert-Schmidt distance between 
$(C_{\Phi(\Xbf^1)} + \gamma_1I_{\Hcal(K)})$ and $(C_{\Phi(\Xbf^2)}+\gamma_2I_{\Hcal(K)})$ admits a closed form formula via the Gram matrices, as follows.
\begin{align}
&||\log(C_{\Phi(\Xbf^1)} + \gamma_1I_{\Hcal(K)})-\log(C_{\Phi(\Xbf^2)}+\gamma_2I_{\Hcal(K)})||^2_{\HS_X}
\nonumber
\\
&= ||\log(I_{m_1}+A^{*}A)||^2_{F} + ||\log(I_{m_2}+B^{*}B)||^2_{F}
- 2\trace[B^{*}Ah(A^{*}A)A^{*}Bh(B^{*}B)]
\nonumber
\\
&\quad + \left(\log\frac{\gamma_2}{\gamma_1}\right)^2. 
\end{align}
Here $A^{*}A = \frac{1}{m_1\gamma_1}J_{m_1}K[\Xbf^1]J_{m_1}$, $B^{*}B = \frac{1}{m_2\gamma_2}J_{m_2}K[\Xbf^2]J_{m_2}$, 
\\
$A^{*}B = \frac{1}{\sqrt{m_1m_2\gamma_1\gamma_2}}J_{m_1}K[\Xbf^1,\Xbf^2]J_{m_2}$, $B^{*}A = \frac{1}{\sqrt{m_1m_2\gamma_1\gamma_2}}J_{m_2}K[\Xbf^2,\Xbf^1]J_{m_1}$,
and $h(A) = A^{-1}\log(I+A)$, $A \in \Sym^{+}(\Hcal)$, with $h(0) = I$.
%

{\bf A two-layer kernel machine}. The Log-Hilbert-Schmidt inner product and distance between RKHS covariance operators gives rise to the following two-layer kernel machine, used in, e.g., image classification, as follows. In the first layer, a kernel $K_1$ is applied to the extracted image features, so
that each image is represented via an RKHS covariance operator. The Log-Hilbert-Schmidt distances/inner products between the RKHS covariance operators are then computed and another kernel $K_2$ is defined using these distances/inner products. A kernel algorithm, e.g. for classification, is then readily applied using the kernel $K_2$. 
All computations are carried out via the corresponding kernel Gram matrices. 
It has been demonstrated that the extra nonlinearity, via the addition of the first kernel layer, generally results in substantially better practical performances than kernel methods with the Log-Euclidean metric, where only the second kernel layer is present. 
We refer to \cite{Minh:NIPS2014,Minh:Covariance2018} for  further details of the actual practical experiments.

\section{Geometric manifold  learning techniques} \label{sec:manifoldlearning}

Data-driven sciences are widely regarded as the next paradigm that can fundamentally change sciences and pave the way for a new industrial revolution. In passing now  from (merely) topological to {\em geometric} data analysis we see now that differential, computational and discrete geometry have achieved first and great successes in data characterization and modelling. In particular, geometric deep learning has significantly advanced the capability of learning models for data with complicated topological and geometric structures.  The combination of geometric methods with learning models has thus great potential to fundamentally change the data sciences, and the involved disciplines, methods, and techniques
nowadays include, but are not confined to, discrete exterior calculus and Laplace Operators, discrete optimal transport, and geometric flow, discrete Ricci (like Olivier and Forman type) curvatures,  conformal geometry, combinatorial Hodge theory, dimension reduction via manifold learning, Isomap, Laplacian eigenmaps, diffusion maps, hyperbolic geometry, Poincaré embeddings, etc., geometric signal processing and deep learning, graph, simplex  and hypergraph neural networks, index theory, information geometry and (Gromov-) Hausdorff distance.

Given this huge and diverse amount of topics, it is clearly impossible to cover all of them here  in desirable detail.
In this section, we will therefore concentrate on just two, but paradigmatic ones, about 'learning' Riemannian manifolds, namely, the result of Belkin-Niyogi justifying mathematically, at least to a certain extent, 
the wide-spread use of Laplacian eigenmap techniques, and
the foundational theoretical work of Fefferman et al. on (Riemannian) manifold reconstruction.

\subsection{Some Generalities on Manifold Learning}
\label{section:manifold learning}
Broadly speaking, manifold learning stands for  a variety of techniques used in machine learning and data analysis to understand the underlying structure of high-dimensional data.  
 Originally,  ``manifold learning techniques"  refer to  non-linear  techniques  of dimension reduction, i.e., techniques of  transforming an initial representation of items in some high dimensional space
into a representation of these items in a space of lower dimension while preserving
	certain relevant properties of the initial representation.  We shall consider a more general concept of ``manifold learning" by interpreting ``relevant properties of the initial representation of items" as
	characteristics of the underlying geometric structure of a Riemannian (sub)manifold
	$M$ underlying the items and residing in a metric space $X$. Thus, in this broader sense, ``manifold learning"
signifies learning these characteristics from data points $x_i \in M \subset  X$, which have
	been distributed by the Riemannian volume form on $M$.

\subsection{Manifold Learning using spectral properties \`a la Belkin-Niyogi}
\label{section:Belkin-Niyogi}

We consider the typical scenario in which the given data reside on a low-dimensional manifold embedded in a 
higher-dimensional Euclidean space.  One popular approach for dealing with this situation is to construct a graph that
approximately represents the manifold and embed this graph into a low dimensional Euclidean space.
Examples of methods utilizing this approach include Isomap, Locally Linear Embedding (LLE), and Laplacian eigenmaps, among many others. As an example, we now discuss  a theorem by  Belkin and Niyogi  (Theorem \ref{theorem:BelkinNiyogi2008-Laplacian-convergence})    motivating their Laplacian eigenmap  algorithm \cite{Belkin-Niyogi2001}, which  we interpret  as  an instance  of Manifold Learning (Remark \ref{rem:BelkinNiyogi2008-Laplacian-convergence}). 

Let $(M, g)$ be an $n$-dimensional compact connected Riemannian  submanifold of Euclidean $d$-dimensional  space  $(\R^d, \la, \ra)$ for  some $d > n$. \footnote{By the famous Nash  embedding theorem, any Riemannian manifold  $(M, g)$  can be isometrically  embedded into a Euclidean space \cite{Nash1956}). However, this result also comes with its own  'curse of dimensionality' - namely, if $M$ has dimension $n$, then  $d$ is in general of order $n^3$. More precisely: if $M$ is compact, $d\le \frac{1}{2}n(3n+11)$, and otherwise, $d\le \frac{1}{2}n(3n+11)(n+1)$.
}

We shall use the notation $\la , \ra$ and $\|\cdot \|$ for  the Euclidean scalar product and  the Euclidean norm, respectively,   and also  for the  induced  metric $g$ and the associated  norm on  $TM\subset \R^n$, respectively. 
Given a set of data points $S_m = \{x_i\}_{i=1}^m$ in  $M$, following Belkin and Niyogi \cite{Belkin-Niyogi2001,BelkinNiyogi2003}, we shall   associate  to $S_m$     a  one-parameter   family of  weighted full  graphs $ S_m (t)$, where $t \in \R^+$,
whose  vertices  are elements  of $S_m$ and       whose  edge  $e_{ij}$ connecting   $x_i$ with $x_j$   has weight  
$$W^t_{ij}  =   e^{-\frac{||x_i-x_j||^2_{\R^d}}{4t}}. $$
Each of  these  graphs then  encodes    full information of the extrinsic  distance between each pair $(x_i, x_j)$ in $S_m$, when regarded as points  in $\R^d$.   Note that  we can compute  the extrinsic  distance  between  $x_i$ and $ x_j $ but  not  their   intrinsic  distance, i.e., the Riemannian   distance, since    the  Riemannian geometry of   $M$ is unknown. 
However, 
a theorem due to  Belkin and Niyogi \cite{Belkin-Niyogi2001,BelkinNiyogi2008}   states  that  we can  learn   the    spectral geometry  of $M$    from the   spectral  geometry of    graphs $S_m(t)$   in probability, as $m$ goes  to infinity.

To   each graph $S_m (t)$ we assign  a  symmetric positive semi-definite  bilinear  form ${\Delta^t_S}_{ m}$   on $\R^m$   by  setting for any $  z= (z_1, \ldots, z_m) \in \R^m$
	\begin{equation}
	\label{equation:Laplacian-scalar}
	{\Delta^t_S}_{m} (z, z): = \frac{1}{2}\sum_{i,j=1}^m W^t_{ij}(z_i - z_j)^2.
	\end{equation}
The bilinear form ${\Delta^t_S}_{m}$ is called the {\it unnormalized graph Laplacian} associated with the graph $S_m(t)$. 
We also identify $\Delta^t_{S_m}$ with  a self-adjoint  operator  on $\R^m$.
 For a function $f:M\to \R$, we have the vector $\mathbf{f} = (f(x_i))_{i=1}^m \in \R^m$ and   we  shall  regard $\Delta^t_{S_m}$ as an operator acting on $f$ at the data points $x_i$ as follows:
\begin{align*}
		(\Delta^t_{S_m} f)(x_i): = (\Delta^t_{S_m} \mathbf{f})(x_i) =  f(x_i) \sum_{j=1}^m e^{-\frac{||x_i - x_j||^2_{\R^d}}{4t}} - \sum_{j=1}^m f(x_j)e^{-\frac{||x_i - x_j||^2_{\R^d}}{4t}}.		
\end{align*}
		This motivates the definition of the following {\it point cloud Laplace operator} acting on any $f:M \to \R$, where $\forall x \in M$, one sets
		\begin{align*}
		(\mathbf{\Delta}^t_{S_m}f)(x) = f(x) \frac{1}{m}\sum_{j=1}^m e^{-\frac{||x - x_j||^2_{\R^d}}{4t}} - \frac{1}{m}\sum_{j=1}^m f(x_j)e^{-\frac{||x - x_j||^2_{\R^d}}{4t}}.
		\end{align*}
		Here $(\mathbf{\Delta}^t_{S_m}f)(x_i) = \frac{1}{m}(\Delta^t_{S_m} f)(x_i)$.
Denote by  $\vol_g$  the Riemannian volume form on $ (M, g)$.		
For a compact manifold $M$, its volume is finite and the Riemannian volume form gives rise to the uniform probability distribution  $\mu \in \Pp (M)$ via 
		$d\mu(p) = \frac{\vol_g(p)}{\vol_g(M)}$.
		The following result then shows that $\mathbf{\Delta}^t_{S_m}$ is an empirical version of $\Delta_g$.
\begin{theorem}[\cite{BelkinNiyogi2008}, Theorem 3.1]
\label{theorem:BelkinNiyogi2008-Laplacian-convergence}
Let $M$ be a compact $n$-dimensional Riemannian submanifold in $\R^d$.
Let $S_m = \{x_i\}_{i=1}^m$ be sampled according to the uniform probability distribution $\mu$ on $M$. 
Let $f \in C^{\infty}(M)$. Put $t_m = m^{-\frac{1}{n+2+\alpha}}$, where $\alpha > 0$. Then $\forall p \in M$, and for any fixed $\epsilon > 0$,
			\begin{align*}
			\lim_{m \rightarrow \infty}\mu^{m}\left\{S_m \in M^m: \left|\frac{1}{t_m (4\pi t_m)^{n/2}}\Deltabf^t_{S_m}f(p) - \frac{1}{\vol_g(M)}(\Delta_gf)(p)\right| > \epsilon \right\} = 0.
			\end{align*}
\end{theorem}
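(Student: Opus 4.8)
The plan is to recognize the normalized point cloud Laplacian as an empirical mean of i.i.d.\ random variables and then run a bias--variance argument. Fixing $p\in M$ and abbreviating $t=t_m$, I would write
\[
\frac{1}{t_m(4\pi t_m)^{n/2}}\Deltabf^{t_m}_{S_m}f(p)=\frac{1}{m}\sum_{j=1}^m X_j,\qquad X_j:=\frac{\big(f(p)-f(x_j)\big)\,e^{-\|p-x_j\|^2/(4t_m)}}{t_m(4\pi t_m)^{n/2}}.
\]
Since the $x_j$ are i.i.d.\ draws from $\mu$, so are the $X_j$. Via the triangle inequality $|a-c|\le|a-b|+|b-c|$ with $b=\E X_1$, the probability in the statement is bounded by the \emph{variance term} $\mu^m\{|\frac1m\sum_j X_j-\E X_1|>\epsilon/2\}$ plus the indicator that the \emph{bias} $|\E X_1-\frac{1}{\vol_g(M)}\Delta_g f(p)|$ exceeds $\epsilon/2$. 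It therefore suffices to prove that the bias converges to $0$ and that the variance of the average tends to $0$.

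The core is the bias, where the Riemannian geometry enters. Since $d\mu=\vol_g(M)^{-1}d\vol_g$,
\[
\E X_1=\frac{1}{\vol_g(M)}\,\frac{1}{t_m(4\pi t_m)^{n/2}}\int_M \big(f(p)-f(y)\big)\,e^{-\|p-y\|^2/(4t_m)}\,d\vol_g(y).
\]
I would split this integral over a geodesic ball $B_\delta(p)$ and its complement. On the complement, compactness of $M$ and injectivity of the embedding yield $\|p-y\|\ge c(\delta)>0$, making that contribution $O(e^{-c(\delta)^2/(4t_m)})$, hence negligible. On $B_\delta(p)$ I would pass to geodesic normal coordinates $u\in\R^n$ at $p$, in which $\dist_g(p,y)=|u|$, $d\vol_g=(1+O(|u|^2))\,du$, and $\Delta_g$ at $p$ equals $-\sum_i\partial_{u_i}^2$. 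Because the embedding is isometric, $\|p-y\|^2=|u|^2+O(|u|^4)$; Taylor-expanding $f$ to second order and using the Gaussian moments $\int u_iu_j\,e^{-|u|^2/(4t_m)}\,du=2t_m\delta_{ij}(4\pi t_m)^{n/2}$ (odd moments vanishing), the quadratic term reproduces $t_m(4\pi t_m)^{n/2}\Delta_g f(p)$, while every remaining contribution scales as $t_m^{k/2-1}$ with $k\ge 4$. After dividing by $t_m(4\pi t_m)^{n/2}$ and letting $t_m\to0$, this gives $\E X_1=\frac{1}{\vol_g(M)}\Delta_g f(p)+O(t_m)$.

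For the variance it suffices to control
\[
\E X_1^2=\frac{1}{t_m^2(4\pi t_m)^{n}}\int_M \big(f(p)-f(y)\big)^2 e^{-\|p-y\|^2/(2t_m)}\,d\mu(y).
\]
The same local analysis gives $(f(p)-f(y))^2=O(|u|^2)$ and $\int|u|^2 e^{-|u|^2/(2t_m)}\,du=O(t_m^{1+n/2})$, whence $\E X_1^2=O(t_m^{-1-n/2})$ and $\Var(\frac1m\sum_j X_j)=\frac1m\Var(X_1)=O(m^{-1}t_m^{-1-n/2})$. Substituting $t_m=m^{-1/(n+2+\alpha)}$ yields $O\big(m^{-1+(1+n/2)/(n+2+\alpha)}\big)$, and the exponent is strictly negative because $1+\tfrac n2<n+2+\alpha$ for every $\alpha>0$; hence the variance vanishes. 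Since $t_m\to0$ also drives the bias below $\epsilon/2$ for all large $m$, the indicator eventually vanishes, and Chebyshev's inequality bounds the variance term by $4\,\Var(\frac1m\sum_j X_j)/\epsilon^2\to0$. Combining the two bounds gives the stated limit.

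The hard part is the bias estimate: quantifying the gap between the \emph{extrinsic} distance $\|p-y\|$ appearing in the kernel and the \emph{intrinsic} geodesic distance $\dist_g(p,y)$ governing the geometry, and verifying that the subleading contributions -- the $O(|u|^2)$ volume distortion, the $O(|u|^4)$ distance defect, and the cubic Taylor remainder of $f$ -- are all $o(1)$ after normalization. The concentration step is then routine. One must also keep track of the Laplacian sign convention (here $\Delta_g$ is the nonnegative Laplace--Beltrami operator), which is precisely what makes the quadratic Gaussian moment reproduce $+\Delta_g f(p)$ rather than its negative.
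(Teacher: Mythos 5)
Your argument is correct, and it is essentially the proof of the cited source: the paper itself states this result as Theorem 3.1 of Belkin--Niyogi without reproducing a proof, and their proof follows exactly your bias--variance decomposition, with the normal-coordinate expansion, Gaussian moment computation, and control of the extrinsic-versus-geodesic distance defect handling the bias, and a concentration bound handling the fluctuation. The only cosmetic difference is that Belkin--Niyogi use an exponential (Hoeffding/Bernstein-type) concentration inequality where you use Chebyshev, but since the random variables' variance is $O(t_m^{-1-n/2})$ and $t_m=m^{-1/(n+2+\alpha)}$, Chebyshev already suffices for the stated convergence in probability.
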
  
\begin{remark}\label{rem:BelkinNiyogi2008-Laplacian-convergence}
\color{black}(1)  
Theorem \ref{theorem:BelkinNiyogi2008-Laplacian-convergence} can be interpreted as a statement  on  the  successful learning  of   the Laplace operator   of   a  Riemanian submanifold  $M\stackrel{i}{\INTO}  \R^N$      by   empirical   data   $S_m = \{x_i\}_{i=1}^m$ in a setting of  unsupervised learning.  Here  the  hypothesis  class  of  possible  approximators of  the  Riemannian  Laplace operator   consists  of point cloud    Laplace operators.  To measure   the deviation  of
the  point  cloud  Laplace  operators  from   the  Laplacian operator  $\Delta_g$ of   $M$  we use  the family of   loss  functions, parameterized  by $p \in M$ and  $f \in   C^\infty (M)$  defined  in Theorem \ref{theorem:BelkinNiyogi2008-Laplacian-convergence}.   

(2) 
The      investigation  of   the spectral geometry  of a Riemannian  manifold $(M, g)$ from its  discretizations  has been initiated  by  Dodziuk  \cite{Dodziuk1976} and Dodziuk-Patodi \cite{DP1976}.  Since   then  there  have been  many  papers in Riemannian  geometry  devoted  to this problem, see e.g.  Burago-Ivanov-Kurylev \cite{BIK2014} and the references  therein.  A    principal   difference  between Belkin-Niyogi's   and the aforementioned  results  in Riemannian  geometry  is that   the former considers  the  extrinsic  distance  between  data points  $x_i \in M \subset  \R^d$, which is  indeed very natural    from a  data  science    point of view. Another   principal  difference  between  Belkin-Niyogi's result  and the ones  just mentioned  is that  the convergence  in Theorem \ref{theorem:BelkinNiyogi2008-Laplacian-convergence} is  convergence  in probability,   whereas 
	the convergence in the others  takes place  under different    assumptions.  Notice, however, that spectral invariants are  not  complete  invariants  of   a  Riemannian manifold, i.e., isospectrality does not imply isometry,
	see,  e.g.,  Gordon-Webb-Wolpert \cite{GWW1992}.
	
(3) One may also  ask whether there are analogues or extensions of Belkin-Niyogi's result to other natural differential operators on Riemannian manifolds. For example, in the case where $M$ is spin, is there a corresponding theorem for the Dirac operator on $M$?

\end{remark}
\subsection{Riemannian Manifold Reconstruction \`a la Fefferman et al.}
\label{section:Fefferman manifold reconstruction}

Most results in manifold learning, as the one by Belkin-Niyogi discussed above, assume {\em a priori} the existence of a (Riemannian) manifold fitting the given data, though the manifold itself will actually essentially always remain unknown.
However, let us now consider a sort of converse problem, namely: Suppose that we are given a set of data and distances between its respective elements, so that we may think of it as a metric space $X$  (with, at least in data science applications, a usually finite, but nevertheless large number of elements). 
Will there then be an algorithm  to construct a  Riemannian manifold $(M,g)$  from $X$ so that the further study of $X$, might, in particular,  be amenable to techniques from global analysis and differential geometry?

Fundamental, and at least in our opinion, not merely mathematically sound, but indeed seminal work in this direction has recently been done by Fefferman et al. in a series of papers which we would now like to draw attention upon. Compare here, in particular, \cite{FIKNL2020} as well as \cite{FILN2020} and the further references therein.

Indeed, in  \cite{FIKNL2020}  the authors thoroughly investigate how a Riemannian manifold $(M,g)$ may best interpolate a given metric space $(X,d)$. For such an approximation,  a smooth $n$-dimensional manifold with Riemannian metric
has to be constructed, and determining such a Riemannian manifold does 
involve the construction of its topology, differentiable structure, and Riemannian metric. 
As their main result, the authors provide sufficient as well as necessary conditions to ensure that a metric space $X$ can be approximated, in the Gromov–Hausdorff or quasi-isometric sense, by a Riemannian manifold $(M,g)$ of a fixed dimension and with bounded diameter, sectional curvature, and injectivity radius, see  (\cite{FIKNL2020}, Theorem $1$). For this to hold,
$X$ should locally be metrically close to some Euclidean space and globally be endowed with an almost intrinsic metric,
and $M$ is constructed as a  submanifold of a separable Hilbert space, that is either $\R^d$ or $\ell^2$
(though the Riemannian metric $g$ is, in general, not equal to the induced submanifold one).

Moreover, in \cite{FILN2020} the authors take an important step further, in particular to challenges arising in machine learning, by considering the task of approximating a Riemannian manifold $(M,g)$ from the geodesic distances of points in a discrete subset of $M$, where the Riemannian manifold $(M,g)$ is considered as an abstract metric space with intrinsic distances (and not just as an embedded submanifold of an ambient Euclidean space), and, in addition, some 'noise' is present. 
Indeed, if $(M,g)$ is an (unknown) Riemannian manifold, $ {X_1, X_2, \ldots ,X_N }$ a set of $N$ sample points randomly sampled from $M$ and, in terms of the induced geodesic distance $d_M$  on $M$,
$D_{jk} := d_M (X_j , X_k ) + \eta_{jk}$ , where $j, k = 1, 2, \ldots, N$ and $\eta_{jk}$
are independent, identically distributed random variables (subject to certain natural conditions) are given, 
the authors prove that for $N$ getting larger and larger, one can construct a Riemannian manifold  $(M^*, g^*)$ 
approximating  $(M,g)$ w.r.t. the Lipschitz distance in higher and higher probability.

\

\section{Final remarks}\label{sec:final}

\

In this  paper  we demonstrated  the usefulness of the  languague of the Markov category  of probabilistic morphisms in statistical learning  theory. We showed  that the learnability of learning  algorithms  in supervised learning can be derived in particular from    geometric properties of the  graph operator of   probabilistic morphisms that formalize  regular  conditional   probability  operators in supervised learning theory. Our categorical and geometrical  framework is a natural  generalization  of  Vapnik-Stephanyuk's representation of regular  conditional probabilities  as a solution of multidimensional Fredholm integral equations \cite{Vapnik1998} \cite{Vapnik2000}, \cite{VI2016}.  For applications  of categorical methods to  Bayesian  statistics, we also refer to  \cite{JLT2021}, \cite{Le2025}.

We   also  considered  geometric  methods  for  extending  important kernel techniques  in Machine Learning  and showed that  important geometric  properties of Riemannian  manifolds  can be learned  from  point cloud data.

\section*{Acknowledgments}


The contributions  by HVL and WT were supported by the Research Institute for Mathematical Sciences,
an International Joint Usage/Research Center located in Kyoto University during  their   visit  to RIMS   in September 2023. All the authors would like to  thank  the  ananymous referee for several  helpful comments.







\

\noindent
Further funding:    The research of HVL was additionally supported by the Institute of Mathematics, Czech Academy of Sciences (RVO 67985840)  and  GA\v CR-project  GA22-00091S,  and the research of WT was additionally supported from the German-Japanese university consortium HeKKSaGOn.

\

\

\end{document}